\pdfoutput = 1
\documentclass[11pt]{article}
\usepackage{fullpage}
\usepackage{amsmath}
\usepackage{amsfonts}
\usepackage{amsthm}
\usepackage{natbib}
\bibliographystyle{abbrvnat}
\usepackage{thm-restate}
\usepackage[pagebackref,linktocpage]{hyperref}
\usepackage{cleveref}

\newtheorem{theorem}{Theorem}[section]
\newtheorem{lemma}[theorem]{Lemma}
\newtheorem{definition}[theorem]{Definition}
\newtheorem{claim}[theorem]{Claim}
\newtheorem{remark}[theorem]{Remark}

\newcommand{\abs}[1]{\left|#1\right|}
\newcommand{\set}[1]{\left\{#1\right\}}

\newcommand{\paren}[1]{\left(#1\right)}

\newcommand{\bR}{{\mathbb R}}
\newcommand{\cR}{{\mathcal R}}
\newcommand{\cC}{{\mathcal C}}
\newcommand{\cD}{{\mathcal D}}

\newcommand{\cX}{{X}}

\newcommand{\cL}{\mathcal{L}}

\newcommand{\one}{{\mathbf 1}}

\newcommand{\sps}[1]{^{({#1})}}

\newcommand{\est}{{\mathsf{est}}}
\newcommand{\prob}{{\mathsf{prob}}}

\newcommand{\sol}{{\mathsf{sol}}}
\newcommand{\opt}{{\mathsf{opt}}}
\newcommand{\cT}{{\mathcal T}}
\newcommand{\genmc}{{\mathsf{GenMC}}}
\newcommand{\ma}{{\mathsf{MA}}}
\newcommand{\grpma}{{\mathsf{GrpMA}}}
\newcommand{\mc}{{\mathsf{MC}}}
\newcommand{\cali}{{\mathsf{Cal}}}
\newcommand{\grpcal}{{\mathsf{GrpCal}}}
\newcommand{\grpmc}{{\mathsf{GrpMC}}}
\newcommand{\grplma}{{\mathsf{GrpLMA}}}
\newcommand{\ber}{{\mathsf{Ber}}}
\DeclareMathOperator*{\E}{\mathbb{E}}

\DeclareMathOperator*{\minimize}{minimize}

\title{Omnipredictors for Constrained Optimization}
\author{Lunjia Hu\thanks{Stanford University. Supported by the Simons Foundation Collaboration on the Theory of Algorithmic Fairness, Omer Reingold’s NSF Award IIS-1908774, and Moses Charikar's Simons Investigators award.} \and 
Inbal Livni-Navon\thanks{Stanford University. Supported by the Simons Foundation Collaboration on the Theory of Algorithmic Fairness, the Sloan Foundation Grant 2020-13941, and the Zuckerman STEM Leadership Program.} \and 
Omer Reingold\thanks{Stanford University. Supported by the Simons Foundation Collaboration on the Theory of Algorithmic Fairness and the Simons Foundation Investigators award 689988.} \and 
Chutong Yang\thanks{Stanford University. Supported by the Simons Foundation Collaboration on the Theory of Algorithmic Fairness and Omer Reingold’s NSF Award IIS-1908774.}
}
\date{}

\allowdisplaybreaks
\begin{document}
\maketitle
\thispagestyle{empty}
\begin{abstract}
The notion of omnipredictors (Gopalan, Kalai, Reingold, Sharan and Wieder ITCS 2021), suggested a new paradigm for loss minimization. Rather than learning a predictor based on a known loss function, omnipredictors can easily be post-processed to minimize any one of a rich family of loss functions compared with the loss of hypotheses in a class $\mathcal C$. It has been shown that such omnipredictors exist and are implied (for all convex and Lipschitz loss functions) by the notion of multicalibration from the algorithmic fairness literature. In this paper, we introduce omnipredictors for constrained optimization and study their complexity and implications. The notion that we introduce allows the learner to be unaware of the loss function that will be later assigned \emph{as well as the constraints that will be later imposed}, as long as the subpopulations that are used to define these constraints are known. We show how to obtain omnipredictors for constrained optimization problems, relying on appropriate variants of multicalibration. We also investigate the implications of this notion when the constraints used are so-called group fairness notions. 
\end{abstract}
\newpage
\tableofcontents
\newpage
\section{Introduction}
A predominant usage for outcome prediction is to inform the choice of a related action. Predicting the probability of a medical condition may help decide on a medical intervention or determine a life insurance premium rate. Predicting the probability of rain may help decide on the method of commuting to work or on a vacation destination or on wedding plans. For each possible action and outcome pair, there may be an associated loss – the cost of catching a cold while riding to work on a bike in the rain or perhaps the cost of changing a wedding venue at the last minute. A learning algorithm may try to come up with a hypothesis that determines an action to minimize an expected loss based on a particular loss function. The challenge in this prevalent paradigm of loss minimization is that different loss functions call for very different learning algorithms, which is problematic for a variety of reasons (e.g.\ multiple relevant loss functions or loss functions that are undetermined at the time of learning). The notion of omnipredictors that was introduced recently by Gopalan, Kalai, Reingold, Sharan and Wieder~\citep{GopalanKRSW22} provides a way to learn a single predictor that can be naturally post-processed (without access to data) to an action that minimizes any one of a very wide collection of loss functions. \citet{GopalanKRSW22} showed that omniprediction is implied by multicalibrated prediction, a notion introduced by Hebert-Johnson, Kim, Reingold and Rothblum in the algorithmic fairness literature \citep{hebert2018multicalibration}.

While loss minimization is a natural goal, it may not be the only consideration in choosing an action. There may, for example, be capacity constraints (e.g.\ a limited number of vaccines) as well as fairness and diversity considerations. In this work, we introduce a notion of omniprediction that applies to the task of loss minimization {\em conditioned on a set of constraints}. For example, imagine we are deciding on which patients would receive a medical intervention when the budget for offering that intervention is limited (capacity constraint), or when we want this intervention to be assigned proportionally to the size of two subpopulations (statistical parity), or when we want the probability of receiving an intervention among patients who experience medical complications
to be the same in two different subpopulations (equal opportunity). Our notion of omniprediction allows learning a single predictor that could be used to minimize a large collection of loss functions, even when arbitrary subsets of constraints are imposed from a rich family of constraints. We show how to formalize such a notion (exposing subtleties not existing in the original notion of omniprediction), how to obtain it using some variants of multicalibration, demonstrating that seeking an accurate depiction of the current world may be useful even when the final goal is a socially engineered action. Finally, we study the interaction between loss minimization and fairness constraints, showing that loss minimization has the potential to support fairness objectives.

\paragraph{Unconstrained Omniprediction.} We assume a distribution $\cD$, over pairs $(x,y)$, where $x\in \cX$ represents an individual, and $y$ represents an outcome associated with $x$. For example, $x$ is the attributes of a patient and $y$ is whether that patient experienced a specific medical condition (in this paper, we will consider Boolean outcomes, i.e., $y\in \{0,1\}$, but the notion could be generalized). We consider individual loss functions. A loss function $\ell$ is applied to an action $a$ and an outcome $y$ and signifies the loss $\ell(y,a)$ incurred when taking action $a$ and observing outcome $y$ (as we will discuss below, our results apply to a more general set of loss functions that can take into account membership of an individual in some predefined subpopulation). 

The learning task of loss minimization is to learn a function $c$ mapping individuals to actions such that the expected loss,  $\E_{(x,y)\sim\cD}[\ell(y,c(x))]$, is at least as small (up to some error term) as $\E_{(x,y)\sim\cD}[\ell(y,c'(x))]$ for any function $c'$ in a hypothesis class $\cC$. Note that different loss functions may require different functions $c$ and different learning algorithms to train them. The notion of omniprediction offers a way for a single algorithm to learn a predictor $p:\cX\rightarrow [0,1]$ that allows optimizing any loss function in a rich family (e.g.\ all loss functions that are convex and $\kappa$-Lipschitz in the action). In this sense, $p$ imitates the true probability predictor $p^*:\cX\rightarrow [0,1]$ where $p^*(x)=\Pr_\cD[y=1\ |\ x]$. Note that for every ``nice" loss function, it is fairly easy to transform $p^*(x)$ to an action $a=\tau_\ell(p^*(x))$ that individually minimizes $\ell(y,a)$ (conditioned on $x$). Loosely, $p$ is an $(\cL, \cC)$-omnipredictor if for every $\ell\in \cL$, applying $\tau_\ell$ to $p$ to get $c(x) = \tau_\ell(p(x))$ minimizes loss $\ell$ compared with the class $\cC$. An omnipredictor resolves the aforementioned disadvantage of traditional loss minimization as it can be trained without knowledge of the specific loss function chosen and the loss function is only needed to decide on an action. 

It has been shown in \citep{GopalanKRSW22} that omniprediction is a somewhat surprising application of the notion of multicalibration, introduced with the motivation of preventing unfair discrimination. Calibration roughly asks that every prediction value be accurate on average over the instances when the prediction value is given. Multicalibration asks a predictor to be calibrated  not only over the entire population but also on many subpopulations (thus, a multicalibrated predictor cannot trade the accuracy of a relevant minority group for the benefit of the majority population). Ignoring some subtleties, a predictor $p$ is  $\cC$-multicalibrated (up to error $\alpha$) if for all $c\in\cC$, 
    $\sum_{v}\abs{\E_{(x,y)\sim\cD}[(y - v)c(x)\one(p(x) = v)]} \le \alpha,$ where the summation is over $v$ in the range of $p$ (we assume the range is finite). It is shown in \citep{GopalanKRSW22} that a $\cC$-multicalibrated is also $(\cal L, \cC)$-omnipredictor for a wide class of loss functions (all convex and Lipschitz loss functions), and \citet{gopalan2022loss} relax the multicalibration requirement to \emph{calibrated multiaccuracy} when the loss functions have additional properties (e.g.\ when they are induced by generalized linear models).

\paragraph{Constraints are Essential but Challenging.}
Omnipredictors constructed in previous work \citep{GopalanKRSW22,gopalan2022loss} allow us to efficiently solve various downstream loss minimization tasks. Each of these tasks aims to minimize the expectation of a loss function and beyond that the solutions to these tasks are not guaranteed to satisfy any non-trivial constraints. However, many loss minimization problems in practice naturally come with constraints that cannot be simply expressed as minimizing an expected loss $\E_{(x,y)\sim \cD}[\ell(y,c(x))]$. 
For example, if an action $c(x)$ represents the amount of resources allocated to individual $x$, it is common to impose a budget constraint $\E[c(x)] \le B$ for an average budget $B$ per individual. Other natural constraints come from the algorithmic fairness literature and are known as group fairness notions. Here, we assume that the entire set $X$ of individuals is partitioned into $t$ subpopulations (i.e., groups) $S_1,\ldots,S_t$. Common examples of group fairness constraints include statistical parity ($\E[c(x)|x\in S_i]$ being approximately equal for every choice of $i=1,\ldots,t$), equal opportunity ($\E[c(x)|x\in S_i,y=1]$ being approximately equal for every $i$), and equalized odds (for every $b=0,1$, the expectation $\E[c(x)|x\in S_i,y=b]$ being approximately equal for every $i$).

Constraints as basic as the budget constraint already impose challenges to the omniprediction results in previous work. This is because in previous work the final action $c(x) = \tau_\ell(p(x))$ is extremely local: it depends only on the loss function $\ell$ and the prediction $p(x)$ for that single individual $x$. Even if $p(x)$ equals the true conditional probability $\Pr_\cD[y = 1|x]$, such local actions that completely ignore the marginal distribution over individuals and the predictions $p(x')$ for other individuals $x'\in X\setminus\{x\}$
cannot satisfy even the simplest budget constraint in general. 
While a loss function can be optimized for every individual separately,
to determine whether an action $c(x)$ would violate the budget constraint, it is necessary to know the actions $c(x')$ assigned to other individuals $x'\in X\setminus\{x\}$.
When constraints are present, omnipredictors are only possible when we allow more flexible ways of turning predictions into actions.

\subsection{Our Contributions}
We start by generalizing the powerful notion of omniprediction to more widely-applicable loss minimization tasks that have constraints.
\paragraph{Defining Omniprediction for Constrained Loss Minimization.}
We consider constrained loss minimization tasks in general forms, where every task has an objective function $f_0:X\times A\times \{0,1\}\to \bR$ and a collection of constraint functions $f_j:X\times A\times \{0,1\}\to \bR$ indexed by $j\in J$. The goal of the task is to find an action function $c:X\to A$ that minimizes the objective $\E_{(x,y)\sim \cD}[f_0(x,c(x),y)]$ while satisfying the constraints $\E_{(x,y)\sim \cD}[f_j(x,c(x),y)]\le 0$ for every $j\in J$. Results in this paper extend to more general tasks where we use an arbitrary Lipschitz function to combine constraints as well as objectives (\Cref{sec:lip}).

Following previous work, for a class $\cT$  of tasks and a class $\cC$ of hypotheses $c:X\to A$, we say a predictor $p:X\to [0,1]$ is an omnipredictor if it allows us to ``efficiently solve'' any task $T\in \cT$ compared to the hypotheses in $\cC$. More specifically, in our constrained setting, an omnipredictor $p$ allows us to ``efficiently produce'' a good action function $c:X\to A$ for any task $T\in \cT$ such that $c$ approximately satisfies all the constraints in $T$, and the objective achieved by $c$ does not exceed (up to a small error) the objective of any $c'\in \cC$ \emph{that satisfy all the constraints of $T$}.

A key challenge in formalizing omniprediction for constrained loss minimization is to specify the procedure of ``efficiently turning'' a predictor $p:X\to [0,1]$ into an action function $c:X\to A$ for a specific task $T\in \cT$. As discussed earlier, previous work only allows $c(x)$ to be $\tau(p(x))$ for a transformation function $\tau$ that only depends on $T$, and this local transformation is not sufficient in our constrained setting. We need more flexible transformations, and we also need to maintain the efficiency of such transformations. We solve this challenge by examining the semantics behind the transformation $\tau(p(x))$ in previous work: this transformation corresponds to solving the task $T$ optimally while pretending that $p(x)$ is the true conditional probability $\Pr_\cD[y=1|x]$. We thus use transformations induced by solving the task on a \emph{simulated distribution} defined by $p$ in our definition of omniprediction (\Cref{def:omni}). 
We show that this not only makes omniprediction possible for constrained problems, but also maintains the efficiency of the transformation.
Moreover, as we discuss below, we can construct omnipredictors for important families of constrained loss minimization problems from group-wise variants of the multiaccuracy and/or multicalibration conditions. Note that conditions such as multiaccuracy and multicalibration are already needed in previous omniprediction results that do not handle constraints!

\paragraph{Constructing Omnipredictors for Group Objectives and Constraints.}
We develop omnipredictors for an important class of constrained loss minimization tasks, namely, tasks with \emph{group objectives and constraints}. Here, as in many problems in the fairness literature, we assume that the set $X$ of individuals is partitioned into $t$ groups $S_1,\ldots,S_t$, and we let $g:X\to [t]$ denote the group partition function, i.e., $g(x) = i$ if and only if $x\in S_i$. We say an objective/constraint function $f:X\times A\times \{0,1\}\to \bR$ is a \emph{group constraint} if there exists $f':[t]\times A\times \{0,1\} \to \bR$ such that $f(x,a,y) = f'(g(x),a,y)$ for every $(x,a,y)\in X\times A\times \{0,1\}$. Tasks with group objectives and constraints are significantly more general than unconstrained tasks in previous work with a loss function $\ell(y,a)$ that does not depend on the individual $x$ at all.

In \Cref{sec:omni}, we show that omnipredictors for loss minimization problems with group objectives and constraints can be obtained from group-wise multiaccuracy and/or multicalibration conditions. Here, group-wise multiaccuracy and multicalibration require the predictor to satisfy multiaccuracy and multicalibration when conditioned on every group $S_i$ (see \Cref{sec:group-ma-mc} for formal definitions). Specifically, we show the following results from the simplest setting to more challenging ones:
\begin{enumerate}
\item We start by considering a simple but general class of objectives/constraints that are \emph{convex and special} (\Cref{def:convex-special}). Objectives in this class include the common $\ell_1$ loss, the squared loss, loss induced by generalized linear models (up to scaling), and group combinations of these loss functions (e.g.\ each group chooses the $\ell_1$ or the squared loss). Constraints in this class include budget constraints and group fairness constraints such as statistical parity, equal opportunity, and equalize odds. In \Cref{thm:squared-linear}, we show that omnipredictors for tasks with convex and special group objectives and constraints can be obtained from group multiaccuracy w.r.t.\ the hypothesis class $\cC$ plus group calibration. This generalizes the results in \citep{gopalan2022loss} to our constrained and multi-group setting.
\item In \Cref{thm:convex-linear}, we show that for general convex and Lipschitz group objectives and constraints, we can construct omnipredictors from group multicalibration w.r.t.\ $\cC$.
This generalizes the results in \citep{GopalanKRSW22} to our constrained and multi-group setting.
\item In \Cref{thm:general}, we show that for general (non-convex) group objectives and constraints, omnipredictors can be obtained from group calibration plus group \emph{level-set} multiaccuracy w.r.t.\ $\cC$, namely,
being accurate in expectation over individuals $x\in S_i$ with $c(x) = a$ for every group $i$, hypothesis $c\in \cC$, and action $a$.
\end{enumerate}

We prove all our omniprediction results in a unified and streamlined fashion using \Cref{lm:approach}. Previously, \citet{gopalan2022loss} also aim to build a unified framework for omnipredictors using the notion of outcome indistinguishability \citep{dwork2021outcome}, but their approach does not fully explain the initial omniprediction result in \citep{GopalanKRSW22}. Our unifying approach not only applies to the results in this paper, but also reconstructs these previous results as unconstrained special cases of our results.

We provide counterexamples in \Cref{sec:counterexamples} to show that it is necessary to strengthen multiaccuracy/multicalibration to their group-wise and occasionally level-set variants in our constrained setting.

\paragraph{Loss Minimization Can Augment Fairness.}
When the constraints imposed are fairness constraints that are aimed to protect one of the subpopulations $S_i$ then loss minimization could support or negate the impact of these constraints. For example, consider a loss function that is one when $x\in S_i$ is awarded a loan and repays it and 0 otherwise. This would incentivize giving loans to members of the protected population that would default on the loan rather than those that would repay it (a similar example can be described in many other domains). This example demonstrates yet again the weakness of so-called group notions of fairness. The weakness of group notion of fairness have been repeatedly demonstrated (cf.\ \citep{dwork2012fairness} for an early example), and often abuses of these notions follow non-monotonicity as in the above example. While we think that $x_1\in S_i$ is more likely to repay the loan than $x_2\in S_i$, we will award the loan to $x_2$. One of our contributions is to show that loss functions with natural monotonicity properties may incentivize a function $c$ that is monotone within each subpopulation, thus strengthening the protection provided by group notions such as statistical parity and equal opportunity. See \Cref{sec:fairness-loss} for formal definitions and more details.

\subsection{Related Work}

Loss minimization under fairness or other constraints is a rich research area. 
For any given fairness definition, it is natural to ask how to learn under the corresponding constraints and how to minimize loss (or maximize utility). This has been studied for various group notions of fairness (cf~\citep{zafar2017fairness}) but also for more refined notions such as metric fairness and multi-group metric fairness~\citep{dwork2012fairness,RothlbumY18,kim2018fairness}. A common approach to combining loss minimization with fairness constraints is to add a fairness regularizer to the risk minimization \citep{donini2018empirical,kamishima2012fairness,zafar2017fairness}. Non-convex constraints have been considered in~\citep{cotter2019optimization}. Accordingly, they also formulate the problem as a non-convex optimization problem which may be hard to solve. There is also a line of empirical work on loss minimization with fairness constraints \citep{zemel2013learning,zafar2017fairnessexperiment, goh2016satisfying}.  Finally, some recent related works focus on other learning setting under fairness constraint, like learning policies~\citep{nabi2019learning}, online learning~\citep{bechavod2022individually}, federated learning~\citep{hu2022provably}, and ranking~\citep{dwork2019learning}.

A key difference between our work and most previous work on loss minimization is that we aim for learning a single predictor that can efficiently solve a variety of downstream constrained loss minimization tasks. Moreover, as we do not make any assumption on the true data distribution $\cD$, we consider it infeasible to learn the distribution $\cD$ entirely and we only require conditions such as multicalibration that can be much easier to achieve using existing algorithms in the literature.
Some works, such as \citep{celis2019classification,agarwal2018reductions,narasimhan2018learning,sharifi2019average}, can deal with multiple loss minimization tasks but they require approximately learning the true distribution $\cD$ within a small total variation distance or approximately learning the true labels. %

In an influential paper, Hardt, Price and Srebro \citep{hardt2016equality} propose equalized odds and equal opportunity as group notions of fairness. They give methods of post-processing a predictor to enforce these constraints while minimizing loss. They show optimality compared with solutions that can be obtained from post-processing the predictor, whereas in this work we directly aim for optimality with respect to a rich pre-specified hypothesis class $\cC$. We consider more general loss functions with real-valued actions compared to the loss functions in \citep{hardt2016equality} that only take binary values as input, and we also consider more general constraints beyond the group fairness constraints in \citep{hardt2016equality}.

\citet{rothblum2021multi} use the notion of outcome indistinguishability~\citep{dwork2021outcome}, closely related to multicalibration, to obtain loss minimization, not only on the entire population but also on many subpopulations. 
Their approach relies on a locality property of the loss function which they term \emph{$f$-proper}. When this property is satisfied, for every fixed individual $x_0\in X$, the optimal action $c(x_0)$ for that individual $x_0$ only depends on $\E[y|x=x_0]$ and not on $\E[y|x = x_1]$ for other individuals $x_1\in X\setminus\{x_0\}$. In our constrained setting, this locality property fails to hold: to satisfy a group constraint, the action $c(x_0)$ must coordinate with the actions $c(x_1)$ for other individuals $x_1$ in or out of the group/subpopulation of $x_0$.

Independently of our work, \citet{globus2022multicalibrated} also study the problem of solving downstream tasks by post-processing multicalibrated predictors. They 
focus on the 0-1 loss for classification tasks and thus their results do not imply the full power of omnipredictors that handle arbitrary loss functions from a rich family.
They also focus on a few specific group fairness constraints, whereas we consider more general classes of constraints.
By assuming multicalibration with respect to delicately-designed classes, their predictors can be efficiently post-processed to satisfy constraints on \emph{intersecting} groups. 
Again independently of our work, \citet{kim2022making} study omniprediction in an (unconstrained) performative setting, where the distribution of the outcome $y$ of an individual $x$ can change based on the action $c(x)$.

\section{Problem Setup}
\label{sec:setup}
Throughout the paper, we use $X$ to denote a non-empty set of individuals, and use $\cD$ to denote a distribution over $X\times \{0,1\}$. We use $A$ to denote a non-empty set of actions, and use $c:X\to A$ to denote an action function that assigns an action $c(x)$ to every individual $x\in X$ (e.g.\ hiring the individual or not).
We occasionally consider a \emph{randomized} action function $c:X\to \Delta_A$ that assigns every individual $x\in X$ a \emph{distribution} $c(x)\in \Delta_A$ over actions in $A$. For generality we sometimes only make statements about randomized action functions, where one should view a deterministic action function $c:X\to A$ as the randomized action function $c':X\to \Delta_A$ where $c'(x)\in \Delta_A$ is the degenerate distribution supported on $c(x)$ for every $x\in X$.
\subsection{Constrained Loss Minimization Tasks}
Given a loss function $f_0:X\times A\times \{0,1\} \to \bR$ and a collection of constraints $f_j:X\times A\times \{0,1\} \to \bR$ indexed by $j\in J$, we define a constrained loss minimization task $T$ to be the following optimization problem:
\begin{align}
\minimize_{c:X\to A} & \E_{(x,y)\sim \cD}f_0(x,c(x),y)\label{eq:target-optimization}\\
\text{s.t.} & \E_{(x,y)\sim \cD}f_j(x,c(x),y) \le 0 \quad\text{for every }j\in J. \nonumber 
\end{align}
It is often challenging to solve a task $T$ optimally, and we need to consider approximate and potentially randomized solutions. 
For $\beta\in \bR$ and $\varepsilon\in \bR_{\ge 0}$, we define $\sol_\cD(T,\beta,\varepsilon)$ to be the set of randomized action functions $c:X\to \Delta_A$ satisfying 
\begin{align*}
& \E_{(x,y)\sim \cD}\E_{a\sim c(x)}f_0(x,a,y) \le \beta, \quad\text{and}\\
& \E_{(x,y)\sim \cD}\E_{a\sim c(x)}f_j(x,a,y) \le \varepsilon \quad \text{for every }j\in J.
\end{align*}
 For a class $\cC$ of functions $c:X\to \Delta_A$, we define 
\[
\opt_\cD(T,\cC,\varepsilon) := \inf\{\beta\in \bR:\cC\cap \sol_\cD(T,\beta,\varepsilon)\ne \emptyset\}.
\]
Note that $\opt_\cD(T,\cC,\varepsilon)$ may take any value in $\bR\cup \{\pm \infty\}$, where we define $\inf \emptyset = +\infty$.
In \Cref{sec:lip}, we show how results in this paper extend to more general tasks where we combine constraints and objectives using arbitrary Lipschitz functions.
\subsection{Omnipredictors for Constrained Loss Minimization}
An omnipredictor, as introduced by \citet{GopalanKRSW22}, allows us to solve a class of downstream loss minimization tasks without training a different model from scratch for every task in the class. Previous work focuses on omnipredictors for unconstrained loss minimization \citep{GopalanKRSW22,gopalan2022loss}. We generalize this notion to constrained loss minimization in the following definition. For a distribution $\cD$ over $X\times\{0,1\}$ and a predictor $p:X\to [0,1]$, we define the simulated distribution $\cD_p$ to be the distribution of $(x,y')\in X\times\{0,1\}$ where we first draw $(x,y)$ from $\cD$ and then draw $y'$ from the Bernoulli distribution $\ber(p(x))$ with mean $p(x)$.
\begin{definition}
\label{def:omni}
Let $\cD$ be a distribution over $X\times \{0,1\}$ and $\varepsilon \ge 0$ be a parameter.
Let $\cT$ be a collection of constrained loss minimization tasks and let $p:X\to [0,1]$ be a predictor. For classes $\cC,\cC_p$ of functions $c:X\to \Delta_A$, we say $p$ is a $(\cT,\cC,\cC_p,\varepsilon)$-omnipredictor on $\cD$ if the following holds for any $T\in \cT$. Assuming $\beta^*:= \opt_\cD(T,\cC,0)\in \bR$ and $\beta:= \opt_{\cD_p}(T,\cC_p,\varepsilon/3)\in \bR$, we have
\[
\cC_p\cap \sol_{\cD_p}(T,\beta + \varepsilon/3,2\varepsilon/3) \subseteq \sol_\cD(T,\beta^* + \varepsilon,\varepsilon).
\]
\end{definition}
Suppose we have an omnipredictor $p$ as in the definition above, and we want to solve an arbitrary constrained loss minimization task $T\in \cT$ in comparison with the class $\cC$, i.e., we want to find a solution in $\sol_\cD(T,\beta^* + \varepsilon,\varepsilon)$. Instead of collecting data points from $\cD$ and solve the task from scratch, we just need to find a solution in $\cC_p\cap \sol_{\cD_p}(T,\beta + \varepsilon/3,2\varepsilon/3)$, i.e., a solution $c\in \cC_p$ that approximately solves the task on the simulated distribution $\cD_p$. This is usually much easier than solving the task on the original distribution $\cD$ for the following reasons. First, since we know $p$, we know the conditional distribution of $y$ given $x$ in $(x,y)\sim \cD_p$, and thus the only unknown part about $\cD_p$ is the marginal distribution of $x$, which can be learned from \emph{unlabeled} data drawn from $\cD$ (i.e., examples of $x$ in $(x,y)\sim \cD$ with $y$ concealed). Secondly, the sample and computational complexity of solving the new task depends on $\cC_p$ instead of $\cC$. In all of the omniprediction results in this paper, we choose $\cC_p$ to be very simple (as in \Cref{def:c-p}) so that its complexity depends on the size of the range of $p$, which can be made to be very small ($O(1/\varepsilon)$), whereas $\cC$ can be significantly more complex. Specifically, every function in $\cC_p$ assigns the same action (or same distribution over actions) to individuals $x$ in the same subpopulation group with the same $p(x)$. In \Cref{sec:algo-simulated} we give very efficient algorithms for solving constrained loss minimization tasks given omnipredictors.

In previous work on omniprediction where there are no constraints,
the optimal solution $c$ on the simulated distribution $\cD_p$ is trivial to find: it is given by choosing $c(x)$ so that $\E_{y\sim \ber(p(x))}f_0(x,c(x),\allowbreak y)$ is minimized (Bayes optimal solution). That is, the optimal $c(x)$ depends only on $x, f_0$, and $p(x)$ (often $f_0$ does not depend on $x$ and thus $c(x)$ only depends on $f_0$ and $p(x)$). Because of this locality property, previous definitions of omniprediction for \emph{unconstrained} loss minimization simply explicitly uses the optimal solution on the simulated distribution $\cD_p$ without defining a task on $\cD_p$ or even without defining $\cD_p$ at all. Our \Cref{def:omni} not only generalizes these previous definitions, but also deals with more challenging tasks with constraints where the locality property fails to hold.

\subsection{Group Multiaccuracy and Multicalibration}
\label{sec:group-ma-mc}
A main contribution of this paper is showing that omnipredictors for a variety of constrained loss minimization problems can be obtained from group-wise multiaccuracy and/or multicalibration conditions. The notions of multiaccuracy and multicalibration are introduced by \citet{hebert2018multicalibration} and \citet{kim2019multiaccuracy}, and there are many algorithms for achieving these notions in previous work (see \Cref{sec:algo-ma-mc}). We define these notions here as special cases of the following generalized multicalibration notion. For the definitions below, we assume $\cD$ is a distribution over $X\times\{0,1\}$ and $\varepsilon \ge 0$ is a parameter.
\begin{definition}[Generalized multicalibration ($\genmc$) (see e.g.\ {\citep[Definition 1.1 in Supplementary Information]{doi:10.1073/pnas.2108097119}})] Let $W$ be a class of functions $w:X\times [0,1]\to \bR$. We say a predictor $p:X\to [0,1]$ satisfies $(W,\varepsilon)$-generalized multicalibration w.r.t.\ distribution $\cD$ if
\[
\left|\E_{(x,y)\sim \cD}[(y - p(x))w(x,p(x))]\right|\le\varepsilon \ \ \text{for every }w\in W.
\]
For simplicity, we additionally require the range of $p$, $\mathsf{range}(p):=\{p(x):x\in X\}$, to be a \emph{finite} subset of $[0,1]$.
We use $\genmc_\cD(W, \varepsilon)$ to denote the set of predictors $p$ satisfying the conditions above.
\end{definition}
We define multiaccuracy and multicalibration below as special cases of $\genmc$ in a general group-wise setting. Here, we assume that the set $X$ of individuals is partitioned into $t$ groups (i.e., subpopulations). We use $g:X\to [t]$ to denote the group partition function that assigns every individual $x\in X$ a group index $g(x) \in [t]:=\{1,\ldots,t\}$.
\begin{definition}[Group Multiaccuracy ($\grpma$)]
\label{def:grpma}
For a class $H$ of functions $h:X\to \bR$, we define the set $\grpma_{\cD}(H,g,\varepsilon)$ of $(H,g,\varepsilon)$-multiaccurate predictors $p$ w.r.t.\ distribution $\cD$ to be $\genmc_\cD(W,\varepsilon)$, where $W$ consists of all functions $w:X\times[0,1]\to \bR$ such that there exist $h\in H$ and $\tau:[t]\to [-1,1]$ satisfying $w(x,v) = h(x)\tau(g(x))$ for every $(x,v)\in X\times[0,1]$.
Equivalently, $\grpma_\cD(H,g,\varepsilon)$ is the set of predictors $p:X\to [0,1]$ satisfying the following for every $h\in H$:
\[
\sum_{i\in [t]}\left|\E_{(x,y)\sim \cD}[(y - p(x))h(x)\one(g(x) = i)]\right| \le \varepsilon.
\]
Here $\one(\cdot)$ is the 0-1 indicator function. When the distribution $\cD$ is clear from context, we often drop it and write $\grpma(H,g,\varepsilon)$ (similarly for other definitions below).
\end{definition}
The equivalence in the definition above and other definitions below in this section are straightforward to prove. We include a proof in \Cref{sec:equivalence} for completeness.

\begin{definition}[Group Multicalibration ($\grpmc$)]
For a class $H$ of functions $h:X\to \bR$, we define the set $\grpmc_{\cD}(H,g,\varepsilon)$ of $(H,g,\varepsilon)$-multicalibrated predictors $p$ w.r.t.\ distribution $\cD$ to be $\genmc_\cD(W,\varepsilon)$, where $W$ consists of all functions $w:X\times[0,1]\to \bR$ such that there exist $h\in H$ and $\tau:[t]\times [0,1]\to [-1,1]$ satisfying $w(x,v) = h(x)\tau(g(x), v)$ for every $(x,v)\in X\times[0,1]$. Equivalently, $\grpmc_{\cD}(H,g,\varepsilon)$ is the set of predictors $p:X\to [0,1]$ satisfying the following for every $h\in H$:
\begin{align*}
\sum_{i\in [t]}\sum_{v\in \mathsf{range}(p)}\left|\E_{(x,y)\sim \cD}[(y - p(x))h(x)\one(g(x) = i,p(x) = v)]\right| \le \varepsilon.
\end{align*}
\end{definition}

The following definition of group calibration is a special case of group multicalibration where $H$ only contains the constant function $h$ that maps every $x\in X$ to $1$:
\begin{definition}[Group Calibration ($\grpcal$)]
We define the set $\grpcal_\cD(g,\varepsilon)$ of $(g,\varepsilon)$-calibrated predictors $p$ w.r.t\ distribution $\cD$ to be $\genmc_\cD(W,\varepsilon)$, where $W$ consists of all functions $w:X\times[0,1]\to [-1,1]$ such that there exists $\tau:[t]\times[0,1]\to [-1,1]$ satisfying $w(x,v) = \tau(g(x),v)$ for every $(x,v)\in X\times [0,1]$.
\end{definition}
The following definition is a variant of group multiaccuracy where $\tau$ 
the transformation $\tau$ also takes the function value $h(x)$ as input, and 
we view individuals $x$ with the same $h(x)$ as belonging to the same level set of $h$.
\begin{definition}[Group Level-Set Multiaccuracy ($\grplma$)]
For an arbitrary finite set $A$ and a class $H$ of functions $h:X\to A$,
we define the set $\grplma_\cD(H,g,\varepsilon)$ of predictors $p$ satisfying $(H,g,\varepsilon)$-level-set multiaccuracy w.r.t\ distribution $\cD$ to be $\genmc_\cD(W,\varepsilon)$, where $W$ consists of all functions $w:X\times[0,1]\to [-1,1]$ such that there exist $h\in H$ and $\tau:[t]\times A\to [-1,1]$ satisfying $w(x,v) = \tau(g(x),h(x))$ for every $(x,v)\in X\times [0,1]$.
Equivalently, $\grplma(H,g,\varepsilon)$ is the set of predictors $p:X\to [0,1]$ satisfying the following for every $h\in H$:
\[
\sum_{i\in [t]}\sum_{a\in A}\left|\E_{(x,y)\sim \cD}[(y - p(x))\one(g(x) = i,h(x) = a)]\right| \le \varepsilon.
\]
\end{definition}
When the group partition function $g$ is a constant function $g_0$ that assigns every individual to the same group, we recover notions in the standard single-group setting: multiaccuracy ($\ma_\cD(H,\varepsilon):= \grpma_\cD(H,g_0,\varepsilon)$), multicalibration ($\mc_\cD(H,\varepsilon):= \grpmc_\cD(H,g_0,\varepsilon)$), and calibration ($\cali_\cD(\varepsilon):= \grpcal_\cD(g_0,\varepsilon)$).
\section{Our Approach}
We describe our general approach for constructing and analyzing omnipredictors for constrained loss minimization tasks. Our approach is similar in spirit to the outcome indistinguishability perspective taken by \citep{gopalan2022loss}, but our approach is more general: it takes constraints into account and can also be applied to reconstruct the results in previous papers on omnipredictors \citep{GopalanKRSW22,gopalan2022loss}. In particular, we overcome the limitation of \citep{gopalan2022loss} that it falls short of fully explaining the initial omnipredictors results in \citep{GopalanKRSW22}.
Our approach is based on the following key lemma:
\begin{restatable}{lemma}{lmapproach}
\label{lm:approach}
Let $\cD$ be a distribution over $X\times \{0,1\}$ and $\varepsilon \ge 0$ be a parameter.
Let $\cT$ be a collection of constrained loss minimization tasks and let $\cC,\cC_p$ be classes of functions $c:X\to \Delta_A$.
If a predictor $p$ satisfies the following two properties for every $T\in \cT$, then $p$ is a $(\cT,\cC,\cC_p,\varepsilon)$-omnipredictor on $\cD$:
\begin{enumerate}
\item Let $f_0$ be the loss function of $T$ and $(f_j)_{j\in J}$ be the constraints of $T$. For every $c\in \cC$, there exists $c'\in \cC_p$ such that for every $j\in \{0\}\cup J$,
\begin{equation}
\E_{(x,y)\sim \cD_p}\E_{a\sim c'(x)}f_j(x,a,y)
\le \E_{(x,y)\sim \cD}\E_{a\sim c(x)}f_j(x,a,y) + \varepsilon/3.\label{eq:approach-1}
\end{equation}
\item For every $c\in \cC_p$ and every $j\in \{0\}\cup J$,
\begin{equation}
\E_{(x,y)\sim \cD}\E_{a\sim c(x)}f_j(x,a,y)
\le \E_{(x,y)\sim \cD_p}\E_{a\sim c(x)}f_j(x,a,y) + \varepsilon/3.\label{eq:approach-2}
\end{equation}
\end{enumerate}
\end{restatable}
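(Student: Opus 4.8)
The plan is to unwind \Cref{def:omni} directly and chain the two inequalities supplied by the hypotheses. Fix a task $T\in\cT$ with loss $f_0$ and constraints $(f_j)_{j\in J}$, and assume $\beta^*:=\opt_\cD(T,\cC,0)\in\bR$ and $\beta:=\opt_{\cD_p}(T,\cC_p,\varepsilon/3)\in\bR$. The first step is to show the ``transfer of the benchmark'': $\beta\le\beta^*+\varepsilon/3$. Since $\beta^*$ is an infimum, for every $\delta>0$ there is $c\in\cC\cap\sol_\cD(T,\beta^*+\delta,0)$, i.e.\ $\E_{(x,y)\sim\cD}\E_{a\sim c(x)}f_0(x,a,y)\le\beta^*+\delta$ and $\E_{(x,y)\sim\cD}\E_{a\sim c(x)}f_j(x,a,y)\le 0$ for all $j\in J$. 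Applying Property~1 to this $c$ produces $c'\in\cC_p$ with $\E_{(x,y)\sim\cD_p}\E_{a\sim c'(x)}f_0(x,a,y)\le\beta^*+\delta+\varepsilon/3$ and $\E_{(x,y)\sim\cD_p}\E_{a\sim c'(x)}f_j(x,a,y)\le\varepsilon/3$ for all $j\in J$, so $c'\in\cC_p\cap\sol_{\cD_p}(T,\beta^*+\delta+\varepsilon/3,\varepsilon/3)$; hence $\beta\le\beta^*+\delta+\varepsilon/3$, and letting $\delta\to0$ gives $\beta\le\beta^*+\varepsilon/3$.

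The second step is the forward containment. Take any $c\in\cC_p\cap\sol_{\cD_p}(T,\beta+\varepsilon/3,2\varepsilon/3)$; we must show $c\in\sol_\cD(T,\beta^*+\varepsilon,\varepsilon)$. Apply Property~2 with $j=0$: $\E_{(x,y)\sim\cD}\E_{a\sim c(x)}f_0(x,a,y)\le\E_{(x,y)\sim\cD_p}\E_{a\sim c(x)}f_0(x,a,y)+\varepsilon/3\le(\beta+\varepsilon/3)+\varepsilon/3=\beta+2\varepsilon/3\le\beta^*+\varepsilon$, where the last step uses $\beta\le\beta^*+\varepsilon/3$ from the first step. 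For each $j\in J$, Property~2 gives $\E_{(x,y)\sim\cD}\E_{a\sim c(x)}f_j(x,a,y)\le\E_{(x,y)\sim\cD_p}\E_{a\sim c(x)}f_j(x,a,y)+\varepsilon/3\le 2\varepsilon/3+\varepsilon/3=\varepsilon$. Together these two bounds are exactly the defining conditions of $\sol_\cD(T,\beta^*+\varepsilon,\varepsilon)$, giving the required inclusion and hence the omnipredictor property.

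As for difficulty, there is essentially no analytic obstacle: the argument is a bookkeeping of three $\varepsilon/3$ terms, and the only point that needs genuine care is that $\opt_\cD(T,\cC,0)$ and $\opt_{\cD_p}(T,\cC_p,\varepsilon/3)$ are \emph{infima}, so one cannot assume that a minimizing $c\in\cC$ exists and must instead pass through a vanishing slack $\delta$ and take a limit when transferring the benchmark. It is worth flagging in the writeup that Property~1 is invoked only on the (near-)optimal competitors $c\in\cC$, whereas Property~2 is invoked on every $c\in\cC_p$, and that the finiteness assumptions $\beta^*,\beta\in\bR$ built into \Cref{def:omni} are precisely what make the chained inequalities meaningful (otherwise quantities such as $\beta+2\varepsilon/3$ or $\beta^*+\varepsilon$ could be $\pm\infty$).
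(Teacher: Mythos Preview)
Your proof is correct and follows essentially the same approach as the paper's: first establish $\beta\le\beta^*+\varepsilon/3$ by passing a near-optimal $c\in\cC$ through Property~1 (the paper writes $\beta'>\beta^*$ where you write $\beta^*+\delta$), then use Property~2 to push any $c\in\cC_p\cap\sol_{\cD_p}(T,\beta+\varepsilon/3,2\varepsilon/3)$ into $\sol_\cD(T,\beta^*+\varepsilon,\varepsilon)$. The only cosmetic difference is that the paper phrases the second step as two chained set inclusions, whereas you work elementwise; the logic is identical.
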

\Cref{lm:approach} reduces the task of constructing an omnipredictor to satisfying the conditions in \eqref{eq:approach-1} and \eqref{eq:approach-2}.
We prove  \Cref{lm:approach} in \Cref{sec:reconstruct} and show how to apply it 
to construct omnipredictors for a variety of constrained loss minimization tasks in \Cref{sec:omni}. \Cref{lm:approach} allows us to give short and streamlined proofs for all our results in \Cref{sec:omni}, and these results generalize previous results in \citep{GopalanKRSW22,gopalan2022loss} as special cases.
\section{Omnipredictors from Group Multiaccuracy and Multicalibration}
\label{sec:omni}
In this section, we apply \Cref{lm:approach} and show that we can obtain omnipredictors for loss minimization tasks with group objectives and constraints from group multiaccuracy and/or multicalibration conditions. Here, we assume that the individual set $X$ is partitioned into $t$ groups by a group partition function $g:X\to [t]$ assigning a group index $g(x)\in [i]$ to every individual $x\in X$.
\begin{definition}
\label{def:group-constraint}
We say an objective/constraint function $f:X\times A\times \{0,1\}\to \bR$ is a group objective/constraint if there exists $f':[t]\times A \times \{0,1\}\to \bR$ such that $f(x,a,y) = f'(g(x),a,y)$ for every $(x,a,y)\in X\times A\times \{0,1\}$.
\end{definition}

Proofs for the results in this section are deferred to \Cref{sec:proof-omni}. These results show that algorithms in previous work for achieving multiaccuracy and multicalibration allow us to obtain omnipredictors even when constraints are imposed on the loss minimization tasks. We discuss these algorithms in more detail in \Cref{sec:algo-ma-mc}.

We start with a basic case where the objectives and constraints are convex and special, defined below. We use $\partial f(x,a)$ to denote $f(x,a,1) - f(x,a,0)$. 
\begin{definition}
\label{def:convex-special}
Let the action set $A\subseteq\bR$ to be an interval.
We say an objective/constraint function $f:X\times A\times \{0,1\}\to \bR$ is convex if $f(x,\cdot,y)$ is convex for every fixed $(x,y)\in X\times \{0,1\}$. We say $f$ is special if there exist $\tau_1,\tau_2:[t]\to [-1,1]$ such that $\partial f(x,a) = \tau_1(g(x)) + \tau_2(g(x))a$.
\end{definition}

Examples of convex and special group objectives when $A = [0,1]$ include the $\ell_1$ loss $f(x,a,y) = |a - y|/2$, the squared loss $f(x,a,y) = (a-y)^2/2$, and group-wise combinations of them (every group chooses either $\ell_1$ or squared loss).
As demonstrated in \citep{gopalan2022loss}, loss functions induced from generalized linear models are also special after appropriate scaling.
Examples of convex and special constraints include all \emph{linear constraints}, i.e., constraint functions $f$ for which there exist $\tau_1,\tau_2:[t]\to \bR$ and $\tau_3:[t]\to [-1,1]$ such that
\begin{equation}
\label{eq:linear-constraint}
f(x,a,y) = \tau_1(g(x)) + \tau_2(g(x))a + \tau_3(g(x))ay
\end{equation}
for every $(x,a,y)\in X\times A\times\{0,1\}$.
Linear constraints are general enough to express fairness constraints such as statistical parity, equal opportunity (equal true positive rates), and equalized odds (equal true positive rates and equal false positive rates) as follows. For every group $i\in [t]$, define $r_i:= \Pr[g(x) = i]$, $r_i^+:= \Pr[g(x)=i|y=1]$, and $r_i^-:=\Pr[g(x) = i|y=0]$. These fairness constraints can be expressed as\footnote{Here we assume that we know $r_i,r_i^+,r_i^-$ for simplicity. These quantities can be estimated from unlabeled data and a predictor satisfying group calibration.}
\begin{align*}
\E[\one(g(x) = i)c(x)] & = r_i\E[c(x)], \tag{statistical parity}\\
\E[\one(g(x) = i)c(x)y] & =r_i^+\E[c(x)y], \tag{equal true positive rates}\\
\E[\one(g(x) = i)c(x)(1-y)] & =r_i^-\E[c(x)(1-y)].
\tag{equal false positive rates}
\end{align*}
Each of the above fairness constraints can be written as $\E[f(x,c(x),y)] = 0$ for an appropriate $f$ satisfying \eqref{eq:linear-constraint}. For example, for statistical parity, we choose $f$ as follows:
\begin{align*}
f(x,a,y) = \one(g(x) = i)a - r_ia. \tag{statistical parity}
\end{align*}
Moreover, we can express \emph{approximate} fairness constraints as a combination of linear constraints because $|\E[f(x,c(x),y)]| \le \alpha$ is equivalent to $\E[f(x,c(x),y)- \alpha] \le 0$ and $\E[-f(x,c(x),y)-\alpha]\le 0$.

For tasks with group objectives/constraints, we often choose the class $\cC_p$ in our definition of omnipredictors (\Cref{def:omni}) as in the following definition:
\begin{definition}
\label{def:c-p}
For an action set $A$, a group partition function $g:X\to [t]$ and a predictor $p:X\to [0,1]$, we define $\cC_p(g)$ to be the class consisting of all functions $c:X\to A$ such that there exists $\tau:[t]\times[0,1]\to A$ satisfying $c(x) = \tau(g(x),p(x))$ for every $x\in X$. We define $\cC_p^{\mathsf {rand}}(g)$ to be the class consisting of all functions $c:X\to \Delta_A$ such that there exists $\tau:[t]\times[0,1]\to \Delta_A$ satisfying $c(x) = \tau(g(x),p(x))$ for every $x\in X$. 
\end{definition}
We now state our omniprediction theorem for convex and special constraints and objectives. In the theorems below, we use $\cD$ to denote an underlying distribution over $X\times \{0,1\}$ and use $\cC$ to denote a class of functions $c:X\to A$.
\begin{restatable}{theorem}{thmsquaredlinear}
\label{thm:squared-linear}
Let $A=[0,1]$ be an action set.
Let $\cT$ be a class of tasks that only have group constraints and group objectives that are all convex and special. 
Let $p$ be a predictor in $\grpma_\cD(\cC,g,\allowbreak \varepsilon/6)\cap \grpcal_\cD(g, \varepsilon/6)$ and define $\cC_p(g)$ as in \Cref{def:c-p}. Then $p$ is a $(\cT,\cC,\cC_p(g),\varepsilon)$-omni\-predictor on $\cD$.
\end{restatable}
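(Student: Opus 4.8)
The plan is to verify the two hypotheses of \Cref{lm:approach} for the predictor $p$ with the hypothesis classes $\cC$ and $\cC_p(g)$; the conclusion that $p$ is a $(\cT,\cC,\cC_p(g),\varepsilon)$-omnipredictor then follows immediately. Fix an arbitrary task $T\in\cT$ with objective/constraint functions $f_j$ for $j\in\{0\}\cup J$. By assumption each $f_j$ is a group function, so $f_j(x,a,y)=f_j'(g(x),a,y)$; it is convex in $a$; and it is special, so $\partial f_j(x,a)=\tau_1^{(j)}(g(x))+\tau_2^{(j)}(g(x))a$ for some $\tau_1^{(j)},\tau_2^{(j)}:[t]\to[-1,1]$. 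Since $y\in\{0,1\}$, we have $f_j(x,a,y)=f_j(x,a,0)+y\,\partial f_j(x,a)$.

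The first step is a discrepancy identity. Because $\cD$ and $\cD_p$ share the same marginal on $x$ and differ only in that $\E[y\mid x]$ equals $p^*(x):=\Pr_\cD[y=1\mid x]$ under $\cD$ versus $p(x)$ under $\cD_p$, and because the action $a\sim c(x)$ is conditionally independent of $y$ given $x$, for any (possibly randomized) action function $c$ with $\bar c(x):=\E_{a\sim c(x)}[a]$ one gets
\[
\E_{(x,y)\sim\cD}\E_{a\sim c(x)}f_j(x,a,y)-\E_{(x,y)\sim\cD_p}\E_{a\sim c(x)}f_j(x,a,y)=\E_x\!\Big[(p^*(x)-p(x))\big(\tau_1^{(j)}(g(x))+\tau_2^{(j)}(g(x))\bar c(x)\big)\Big].
\]
I would then bound the right-hand side in two cases. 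If $c\in\cC$, then $\bar c=c\in\cC$ and the right-hand side splits into $\E_{(x,y)\sim\cD}[(y-p(x))\tau_1^{(j)}(g(x))]$, which has absolute value at most $\varepsilon/6$ because $(x,v)\mapsto\tau_1^{(j)}(g(x))$ is a legal test for $\grpcal_\cD(g,\varepsilon/6)$, plus $\E_{(x,y)\sim\cD}[(y-p(x))c(x)\tau_2^{(j)}(g(x))]$, which has absolute value at most $\varepsilon/6$ because $(x,v)\mapsto c(x)\tau_2^{(j)}(g(x))$ is a legal test for $\grpma_\cD(\cC,g,\varepsilon/6)$ (take $h=c$, $\tau=\tau_2^{(j)}$); so the discrepancy is at most $\varepsilon/3$ in absolute value. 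If instead $c\in\cC_p(g)$, write $c(x)=\tau(g(x),p(x))$ with $\tau:[t]\times[0,1]\to A=[0,1]$; then both $\tau_1^{(j)}(g(x))$ and $\tau_2^{(j)}(g(x))\tau(g(x),p(x))$ are functions of $(g(x),p(x))$ valued in $[-1,1]$ (here I use $A=[0,1]$), so both are legal tests for $\grpcal_\cD(g,\varepsilon/6)$ and the discrepancy is again at most $\varepsilon/3$. The latter bound is exactly \eqref{eq:approach-2}.

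For \eqref{eq:approach-1} I would, given $c\in\cC$, construct a single $c'\in\cC_p(g)$ that works for every $j$ at once: for each group $i\in[t]$ and each $v\in\mathsf{range}(p)$ with $\Pr[g(x)=i,p(x)=v]>0$, set $\bar a_{i,v}:=\E[c(x)\mid g(x)=i,p(x)=v]\in[0,1]$ and let $c'(x):=\bar a_{g(x),p(x)}$ (so $c'\in\cC_p(g)$). Writing $\tilde f^{(j)}_v(x,a):=(1-v)f_j(x,a,0)+v f_j(x,a,1)=f_j(x,a,0)+v\,\partial f_j(x,a)$, which is convex in $a$ and — crucially, because $f_j$ is a \emph{group} function — depends on $x$ only through $g(x)$, Jensen's inequality applied level set by level set gives
\[
\E_{(x,y)\sim\cD_p}\E_{a\sim c'(x)}f_j(x,a,y)=\E_x\big[\tilde f^{(j)}_{p(x)}(x,c'(x))\big]\le\E_x\big[\tilde f^{(j)}_{p(x)}(x,c(x))\big]=\E_{(x,y)\sim\cD_p}f_j(x,c(x),y).
\]
Combining this with the discrepancy identity (used with $\bar c=c$) and the $\varepsilon/3$ bound from the case $c\in\cC$ yields $\E_{(x,y)\sim\cD_p}\E_{a\sim c'(x)}f_j(x,a,y)\le\E_{(x,y)\sim\cD}f_j(x,c(x),y)+\varepsilon/3$, which is \eqref{eq:approach-1}. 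Invoking \Cref{lm:approach} completes the proof.

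I expect the main obstacle to be the construction and justification of $c'$ in the last step: it must be chosen \emph{independently of $j$} so that one function simultaneously handles the objective and all constraints, and the Jensen argument genuinely uses all three hypotheses together — convexity in $a$, the group structure (so that $\tilde f^{(j)}_v(\cdot,a)$ is constant on each level set $\{g=i,p=v\}$, which is what makes averaging the action inside a level set legitimate), and $A$ being an interval (so that $\bar a_{i,v}$ is a valid action, placing $c'$ in $\cC_p(g)$). Everything else — the discrepancy identity and matching each of its two pieces to the right multiaccuracy/calibration test — is routine bookkeeping once $\partial f_j$ is split into its constant part $\tau_1^{(j)}$ and its linear-in-$a$ part $\tau_2^{(j)}a$.
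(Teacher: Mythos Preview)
Your proposal is correct and follows essentially the same approach as the paper: both verify the two hypotheses of \Cref{lm:approach} via the discrepancy identity $\E_{\cD}f-\E_{\cD_p}f=\E[(y-p(x))\partial f(x,c(x))]$, split $\partial f$ into its $\tau_1$ and $\tau_2 a$ parts to invoke $\grpcal$ and $\grpma$ respectively when $c\in\cC$, and construct $c'$ as the level-set conditional mean $\E[c(x)\mid g(x),p(x)]$ followed by Jensen. Your treatment of the $c\in\cC_p(g)$ case (bounding the two terms separately by $\varepsilon/6$ each) and your explicit justification of the Jensen step are slightly more detailed than the paper's, but the argument is the same.
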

We remark that the convexity assumption in the theorem above can be removed if we replace $\cC_p(g)$ with $\cC_p^{\mathsf{rand}}(g)$ (\Cref{thm:variant}).
Once we construct an omnipredictor using \Cref{thm:squared-linear} (and other theorems in this section), we can efficiently transform it into nearly optimal actions for any task $T\in \cT$ using a small amount of unlabeled data from $\cD$ (see \Cref{sec:algo-simulated}).
\Cref{thm:squared-linear} generalizes the results in \citep{gopalan2022loss} that hold in the single-group unconstrained setting. Our following theorem deals with general convex and Lipschitz group objectives and constraints and it generalizes the results in \citep{GopalanKRSW22}. 
\begin{definition}
\label{def:lipschitz}
We say an objective/constraint function $f:X\times A\times \{0,1\}\to \bR$ is $\kappa$-Lipschitz if $f(x,\cdot,y)$ is $\kappa$-Lipschitz for every fixed $(x,y)\in X\times \{0,1\}$. We say $f$ has $B$-bounded difference if $\partial f(x,a)\in [-B,B]$ for every $(x,a)\in X\times A$.
\end{definition}
\begin{restatable}{theorem}{thmconvexlinear}
\label{thm:convex-linear}
Let $A=[0,1]$ be an action set.
Let $\cT$ be a class of tasks that only have group objectives and group constraints that are all convex and $1$-Lipschitz and have $1$-bounded differences. 
Let $p$ be a predictor in $\grpmc_\cD(\cC, g, \varepsilon/15)\cap \grpcal_\cD(g,\varepsilon/15)$ and define $\cC_p(g)$ as in \Cref{def:c-p}. Then $p$ is a $(\cT,\cC,\cC_p(g),\varepsilon)$-omnipredictor on $\cD$.
\end{restatable}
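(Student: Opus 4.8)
The plan is to verify the two hypotheses of \Cref{lm:approach} with $\cC_p = \cC_p(g)$ as in \Cref{def:c-p}; the theorem then follows at once. Write $p^*(x):=\Pr_\cD[y=1\mid x]$, and for a group objective/constraint $f$ write $f(x,a,y)=f'(g(x),a,y)$ as in \Cref{def:group-constraint}, so that $f'(i,\cdot,b)$ is convex and $1$-Lipschitz on $A=[0,1]$ and $\partial f'(i,a):=f'(i,a,1)-f'(i,a,0)\in[-1,1]$. Since $y$ is Boolean, $f(x,a,y)=f'(g(x),a,0)+y\,\partial f'(g(x),a)$, so for any deterministic $c$ (it suffices to treat deterministic $c\in\cC$, as $\cC$ consists of such functions) the tower rule gives $\E_\cD f(x,c(x),y)-\E_{\cD_p}f(x,c(x),y')=\E_{(x,y)\sim\cD}[(y-p(x))\,\partial f'(g(x),c(x))]$. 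This identity immediately yields hypothesis~\eqref{eq:approach-2}: if $c\in\cC_p(g)$ then $c(x)=\tau(g(x),p(x))$, so $\partial f'(g(x),c(x))$ is a $[-1,1]$-valued function of $(g(x),p(x))$, and $\grpcal_\cD(g,\varepsilon/15)$ bounds the right-hand side by $\varepsilon/15\le\varepsilon/3$.

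The work is in hypothesis~\eqref{eq:approach-1}. Given $c\in\cC$, I take $c'(x):=\E_{x'\sim\cD}[\,c(x')\mid g(x')=g(x),\,p(x')=p(x)\,]$, the average of $c$ over the ``cell'' of $x$; since $A=[0,1]$ is convex, $c'\in\cC_p(g)$. Fix a cell $(i,v)$ (that is, $g(x)=i$, $p(x)=v$), let $\bar c_{i,v}$ be the constant value of $c'$ there, and $\bar p^*_{i,v}:=\E[y\mid\text{cell}]=\E[p^*(x)\mid\text{cell}]$. Since $p(x)=v$ and $c'(x)=\bar c_{i,v}$ are constant on the cell, $\E_{\cD_p}[f(x,c'(x),y')\mid\text{cell}]=(1-v)f'(i,\bar c_{i,v},0)+v\,f'(i,\bar c_{i,v},1)$; using the subgradient inequality $f'(i,a,b)\ge f'(i,\bar c_{i,v},b)+\sigma^b_{i,v}(a-\bar c_{i,v})$ with one-sided subgradients $\sigma^b_{i,v}\in[-1,1]$, together with $\E[c(x)-\bar c_{i,v}\mid\text{cell}]=0$, I get
\[
\E_\cD[f(x,c(x),y)\mid\text{cell}]-\E_{\cD_p}[f(x,c'(x),y')\mid\text{cell}]\ \ge\ (\bar p^*_{i,v}-v)\,\partial f'(i,\bar c_{i,v})+\psi_{i,v}\,\E\big[y\,(c(x)-\bar c_{i,v})\mid\text{cell}\big],
\]
where $\psi_{i,v}:=\sigma^1_{i,v}-\sigma^0_{i,v}\in[-2,2]$. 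Multiplying by $\Pr[\text{cell}]$ and summing over cells, the first contribution is $\E_{(x,y)\sim\cD}[(y-p(x))\,\partial f'(g(x),\bar c_{g(x),p(x)})]$, bounded by $\varepsilon/15$ by group calibration. For the second, write $y\,(c(x)-\bar c_{i,v})=(y-p(x))(c(x)-\bar c_{i,v})+p(x)(c(x)-\bar c_{i,v})$ and note the second summand integrates to $0$ on each cell (as $p\equiv v$ there and $\E[c(x)\one(\text{cell})]=\Pr[\text{cell}]\,\bar c_{i,v}$); hence it equals $\sum_{i,v}\psi_{i,v}\E_\cD[(y-p(x))c(x)\one(g(x)=i,p(x)=v)]-\sum_{i,v}\psi_{i,v}\bar c_{i,v}\E_\cD[(y-p(x))\one(g(x)=i,p(x)=v)]$, whose two sums have absolute value at most $2\varepsilon/15$ each, by $\grpmc_\cD(\cC,g,\varepsilon/15)$ applied with $h=c$ and by $\grpcal_\cD(g,\varepsilon/15)$ respectively. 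Adding the three bounds gives $\E_{\cD_p}f(x,c'(x),y')\le\E_\cD f(x,c(x),y)+\varepsilon/3$, i.e.\ \eqref{eq:approach-1}; the argument is word-for-word the same for the objective $f_0$ and for each constraint $f_j$.

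I expect the main obstacle to be exactly \emph{why} the cellwise conditional mean $c'$ is the right object and why it forces $\grpmc$ rather than the weaker $\grpma$ or $\grpcal$. In the unconstrained convex analysis of \citep{GopalanKRSW22} one post-processes $p$ to the per-individual Bayes-optimal action for the single loss, but here a \emph{single} $c'$ must be competitive against $c$ for the objective and every constraint simultaneously, which rules out pointwise optimization; averaging $c$ within level-of-$(g,p)$ cells is what makes convexity and the subgradient bound applicable uniformly over all $f_j$, and the residual cost is the term $\sum_{i,v}\psi_{i,v}\E_\cD[(y-p(x))c(x)\one(g(x)=i,p(x)=v)]$, whose control \emph{across all level sets $v$ of $p$ at once} is precisely the content of group multicalibration w.r.t.\ $\cC$. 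The remaining points — existence of $\sigma^b_{i,v}\in[-1,1]$ (using one-sided derivatives at the endpoints of $[0,1]$), vacuous cells, and rewriting the $\genmc$ inequalities in the $\sum_{i,v}|\cdot|$ form (as already recorded around \Cref{def:grpma}) — are routine.
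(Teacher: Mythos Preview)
Your proof is correct and follows essentially the same route as the paper: both verify the two hypotheses of \Cref{lm:approach} with the same choice $c'(x)=\E[c(x')\mid g(x')=g(x),p(x')=p(x)]$, handle \eqref{eq:approach-2} via group calibration, and arrive at the same $5\varepsilon/15$ budget for \eqref{eq:approach-1}. The only difference is packaging: the paper splits $\E_{\cD_p}f(c')-\E_\cD f(c)$ as $[\E_{\cD_p}f(c')-\E_\cD f(c')]+[\E_\cD f(c')-\E_\cD f(c)]$ and bounds the second bracket by invoking the black-box convexity lemma from \citep{GopalanKRSW22} (their \Cref{lm:c-hat}), whereas you open that lemma up and do the subgradient computation directly on each cell, merging the two steps into one.
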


Finally, we consider general group constraints. These constraints allows us to constrain the entire distribution of $c(x)$ (e.g.\ constraints on $\Pr[c(x)\in A']$ for $A'\subseteq A$) and the distribution of $c(x)$ within each group (e.g.\ constraints on $\Pr[c(x)\in A', g(x) = i]$).
\begin{restatable}{theorem}{thmgeneral}
\label{thm:general}
Let $A$ be a finite non-empty action set. Let $\cT$ be a class of tasks with group constraints and group objectives that all have $1$-bounded differences. 
Let $p$ be a predictor in $\grplma_\cD(\cC, \allowbreak g, \varepsilon/3)\cap \grpcal_\cD(g, \varepsilon/3)$ and define $\cC_p^{\mathsf{rand}}(g)$ as in \Cref{def:c-p}. Then $p$ is a $(\cT,\cC,\cC_p^{\mathsf{rand}}(g),\varepsilon)$-omnipredictor on $\cD$.
\end{restatable}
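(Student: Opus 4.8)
The plan is to apply \Cref{lm:approach} with $\cC_p := \cC_p^{\mathsf{rand}}(g)$, which reduces everything to verifying \eqref{eq:approach-1} and \eqref{eq:approach-2} for an arbitrary task $T\in\cT$ with objective $f_0$ and constraints $(f_j)_{j\in J}$. The only computational tool I would use is the decomposition $f_j(x,a,y) = f_j(x,a,0) + y\,\partial f_j(x,a)$ (valid since $y\in\{0,1\}$), together with two facts: $\cD$ and $\cD_p$ share the same marginal over $x$, and under $\cD_p$ the outcome has conditional mean $p(x)$ given $x$. These imply that, for any fixed action function, replacing $\cD$ by $\cD_p$ leaves the ``$f_j(\cdot,\cdot,0)$ part'' of $\E[f_j]$ untouched and perturbs the ``$y\,\partial f_j$ part'' by exactly $\E_{(x,y)\sim\cD}[(y - p(x))\cdot(\text{coeff})]$, where the coefficient is $\E_{a\sim c(x)}\partial f_j(x,a)$. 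Because each $f_j$ is a group function, $\partial f_j(x,a) = \partial f_j'(g(x),a)$ where $\partial f_j' := f_j'(\cdot,\cdot,1) - f_j'(\cdot,\cdot,0)$, and this lies in $[-1,1]$ by the $1$-bounded-difference hypothesis; so the residual coefficient always has the shape needed to invoke $\grpcal$ or $\grplma$.

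For \eqref{eq:approach-2} I would fix $c\in\cC_p^{\mathsf{rand}}(g)$, write $c(x) = \tau(g(x),p(x))$ with $\tau\colon[t]\times[0,1]\to\Delta_A$, and note that the coefficient $\E_{a\sim c(x)}\partial f_j(x,a) = \E_{a\sim\tau(g(x),p(x))}\partial f_j'(g(x),a) =: \psi(g(x),p(x))$ is a $[-1,1]$-valued function of $(g(x),p(x))$. After the cancellation above, the gap in \eqref{eq:approach-2} equals $\E_{(x,y)\sim\cD}[(y - p(x))\psi(g(x),p(x))]$, which is at most $\varepsilon/3$ in absolute value because $p\in\grpcal_\cD(g,\varepsilon/3)$.

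For \eqref{eq:approach-1} the key step is the construction of $c'$. Given $c\in\cC$ (a deterministic map into the finite set $A$, per the theorem's convention), I would let $c'$ assign to every $x$ in a level set $\{g = i,\, p = v\}$ of positive $\cD$-probability the $\cD$-conditional law of $c(x)$ given that level set (and define $c'$ arbitrarily on null level sets); then $c'(x)$ depends only on $(g(x),p(x))$, so $c'\in\cC_p^{\mathsf{rand}}(g)$, and crucially this single $c'$ references no $f_j$ and hence serves all $j\in\{0\}\cup J$ simultaneously, as \Cref{lm:approach} demands. Since $f_j(x,a,0)$ and $\partial f_j(x,a)$ are functions of $(g(x),a)$ alone, a tower-property computation over the finitely many level sets of $(g,p)$ shows that the ``$f_j(\cdot,\cdot,0)$ part'' of \eqref{eq:approach-1} is an exact equality and that $\E_{(x,y)\sim\cD_p}\E_{a\sim c'(x)}[\,y\,\partial f_j(x,a)\,] = \E_{(x,y)\sim\cD}[\,p(x)\,\partial f_j'(g(x),c(x))\,]$. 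Hence the gap in \eqref{eq:approach-1} equals $-\E_{(x,y)\sim\cD}[(y - p(x))\,\partial f_j'(g(x),c(x))]$, whose absolute value is at most $\varepsilon/3$ by applying $\grplma_\cD(\cC,g,\varepsilon/3)$ with hypothesis $c$ and the $[-1,1]$-valued transformation $(i,a)\mapsto\partial f_j'(i,a)$. Both conditions of \Cref{lm:approach} are then met, so $p$ is a $(\cT,\cC,\cC_p^{\mathsf{rand}}(g),\varepsilon)$-omnipredictor on $\cD$.

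The main obstacle is conceptual rather than technical: pinning down the right $c'$. For the non-convex $f_j$ here there is no per-individual Bayes-optimal action as in \Cref{thm:squared-linear,thm:convex-linear}, and $c'$ must be chosen without knowledge of the task and must control objective and all constraints at once, so it cannot be tuned against any single $f_j$. Taking $c'$ to merely replicate the joint distribution of $\bigl(g(x),p(x),c(x)\bigr)$ is what makes the $y$-free part exact and collapses everything to a calibration-type residual in $\partial f_j'(g(x),c(x))$; that this residual is a function of $g(x)$ and of $c(x)\in A$ with $A$ \emph{finite} is exactly why group \emph{level-set} multiaccuracy w.r.t.\ $\cC$ is the natural (and, per \Cref{sec:counterexamples}, necessary) hypothesis rather than plain group multiaccuracy. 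Keeping the tower-property bookkeeping straight and checking that the one $c'$ works uniformly in $j$ is then routine.
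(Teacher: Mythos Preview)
Your proposal is correct and follows essentially the same approach as the paper: apply \Cref{lm:approach}, construct $c'$ as the conditional law of $c(x)$ given the $(g,p)$-level set and invoke $\grplma$ for \eqref{eq:approach-1}, and invoke $\grpcal$ for \eqref{eq:approach-2}. The only minor variation is that for \eqref{eq:approach-2} the paper decouples the random action by first sampling a deterministic $\tau':[t]\times[0,1]\to A$ and then applying \Cref{claim:partial}, whereas you directly average $\partial f_j'$ over $a\sim c(x)$ to obtain a $[-1,1]$-valued function $\psi$ of $(g(x),p(x))$; both routes are valid and yield the same bound.
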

We give counterexamples in \Cref{sec:counterexamples} showing that strengthening standard multiaccuracy and multicalibration to their group-wise and/or level-set variants in the theorems above is necessary.

\section{Interaction between Group Fairness and Loss Minimization}\label{sec:fairness-loss}
In this section we explain how we can use our omnipredictors to get an additional property, which we call rank-preserving. The intuition is that if we assume the predictor $p:X\rightarrow[0,1]$ describes an approximation to the true probability $\Pr_{(x,y)\sim\cD}[y=1]$, then we want individuals $x$ with higher $p(x)$ to get higher action values, for real-valued actions $A\subseteq[0,1]$. This requirement can be thought of as  a fairness property, that individuals that are more likely to succeed (within the same group) should get higher actions.

\begin{definition}
    A transformation $\tau:[t]\times[0,1]\rightarrow A$ is called rank-preserving if for all $i\in[t]$ and $v>v'\in [0,1]$ we have $\tau(i,v)\geq\tau(i,v')$.
\end{definition}

We determine when we can choose the transformation $\tau$ applied to the omnipredictor $p$ to be rank-preserving. Our first observation is that the loss function $f_0$ should also be rank-preserving, i.e. if $a>a'$, then $f_0(i,a,1) \leq f_0(i,a',1)$ (and vice versa for $0$). If $f_0$ is the distance between $a$ and $y$, it satisfy the property. We require the predictor to be monotone, $\forall v>v'$, $\E_{(x,y)\sim\cD}[y|p(x)=v]\geq\E_{(x,y)\sim\cD}[y|p(x)=v'] $.

For the case of a single linear constraint per group $g(i)$, we prove that for any omnipredictor, there is always an optimal transformation that is rank-preserving. 
\begin{lemma}
    \label{lem:det-rank-p}
    Let $A=[0,1]$ be the action set, $\cT$ be class of tasks with linear constraints. Assume that for every task $T\in\cT$ the objective function is rank-preserving and convex, and that for every group $i\in[t]$, there is only a single constraint $f_j$ (expressed as in \eqref{eq:linear-constraint}) in which $\tau_1(i),\tau_2(i),\tau_3(i)\neq 0$, and $\tau_2(i)\tau_3(i)\geq 0$. Then for every monotone omnipredictor $p$ we have 
\[
    \opt_\mathcal{D}(T,\text{rank-preserving }\cC_p(g),\varepsilon)=\opt_\mathcal{D}(T,\cC_p(g),\varepsilon).
\]
\end{lemma}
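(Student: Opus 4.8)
The goal is to show that restricting to rank-preserving transformations does not hurt the optimum value (the reverse inequality $\opt_\cD(T, \text{rank-preserving }\cC_p(g),\varepsilon) \ge \opt_\cD(T,\cC_p(g),\varepsilon)$ is trivial since rank-preserving transformations form a subclass). So fix a task $T\in\cT$ and an arbitrary $c\in\cC_p(g)$ with $c(x)=\tau(g(x),p(x))$ that lies in $\sol_\cD(T,\beta,\varepsilon)$ for some $\beta$; I want to produce a rank-preserving $\tilde\tau$ with $\tilde c(x)=\tilde\tau(g(x),p(x))$ that still satisfies the constraints (up to $\varepsilon$) and has objective value $\le\beta$. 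Since everything decomposes over groups, I would work one group $i\in[t]$ at a time: within group $i$, I have a finite list of prediction values $v_1<v_2<\dots<v_m$ in $\mathsf{range}(p)$, each occurring with probability mass $q_k:=\Pr[p(x)=v_k, g(x)=i]$, together with the within-level-set conditional expectation of $y$, call it $\mu_k:=\E[y\mid p(x)=v_k,g(x)=i]$; by the monotonicity assumption on $p$, $\mu_1\le\mu_2\le\dots\le\mu_m$. The action assigned to level set $k$ in group $i$ is $a_k:=\tau(i,v_k)$, and rank-preserving means exactly $a_1\le a_2\le\dots\le a_m$.

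\textbf{Key steps.} First, write the group-$i$ contributions to the objective and to the single relevant constraint $f_j$ as functions of the vector $(a_1,\dots,a_m)$. Using that $f_0$ and $f_j$ are group functions and that only the mean $\mu_k$ of $y$ matters within a level set (the objective is $\sum_k q_k\,[\,(1-\mu_k)f_0'(i,a_k,0)+\mu_k f_0'(i,a_k,1)\,]$, and similarly for $f_j$ using \eqref{eq:linear-constraint}), both become separable sums $\sum_k q_k\,\phi_k(a_k)$ and $\sum_k q_k\,\psi_k(a_k)$ where $\phi_k$ is convex (objective is convex and rank-preserving means $\phi_k$ is nonincreasing — wait, rank-preserving of $f_0$ means $f_0(i,\cdot,1)$ decreasing and $f_0(i,\cdot,0)$ increasing, so $\phi_k(a)=(1-\mu_k)f_0(i,a,0)+\mu_k f_0(i,a,1)$ is a $\mu_k$-weighted combination), and $\psi_k(a)=\tau_1(i)+\tau_2(i)a+\tau_3(i)\mu_k a = \tau_1(i) + (\tau_2(i)+\tau_3(i)\mu_k)a$ is affine in $a$ with slope $s_k:=\tau_2(i)+\tau_3(i)\mu_k$. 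Second — and this is the crux — I claim there is an optimal choice of $(a_1,\dots,a_m)$ (optimal in the sense of minimizing $\sum q_k\phi_k(a_k)$ subject to $\sum q_k\psi_k(a_k)$ equal to whatever value the original $\tau$ achieved, so the constraint is preserved exactly, not just up to $\varepsilon$) that is monotone nondecreasing in $k$. The hypothesis $\tau_2(i)\tau_3(i)\ge0$ guarantees the slopes $s_k$ are monotone in $k$: since $\mu_k$ is nondecreasing and $\tau_3(i)$ has the same sign as $\tau_2(i)$, the map $k\mapsto s_k$ is monotone (and the $s_k$ are all nonzero of the same sign because $\tau_2(i),\tau_3(i)\ne0$ and $\mu_k\in[0,1]$ — this uses $\tau_2\tau_3\ge0$ so no cancellation). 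Third, to prove the monotonicity of some optimal solution: suppose an optimum has $a_k>a_{k+1}$ for some $k$; I will exhibit a swap/averaging argument. Consider perturbing $a_k\mapsto a_k-\delta_k$ and $a_{k+1}\mapsto a_{k+1}+\delta_{k+1}$ with $q_k s_k\delta_k = q_{k+1}s_{k+1}\delta_{k+1}$ (keeps the constraint value fixed, using $s_k,s_{k+1}$ same sign); the effect on the objective, to first order, is governed by the subgradients $\phi_k'$ and $\phi_{k+1}'$, and a standard exchange argument — using convexity of each $\phi_k$ plus the fact that $\mu_k\le\mu_{k+1}$ makes $\phi_{k+1}$ "prefer larger actions" relative to $\phi_k$ (i.e. $\phi_k'(a)\ge\phi_{k+1}'(a)$ pointwise, because increasing $\mu$ tilts weight toward the decreasing branch $f_0(i,\cdot,1)$) — shows we can push $a_k$ down and $a_{k+1}$ up without increasing the objective, until $a_k\le a_{k+1}$. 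Iterating (a finite "bubble-sort" over pairs, or invoking a rearrangement-type lemma) yields a fully monotone optimal vector. Fourth, reassemble: the new transformation $\tilde\tau(i,v_k):=$ the sorted actions is rank-preserving on the level sets of $p$ within each group; extend $\tilde\tau(i,\cdot)$ to all of $[0,1]$ by making it monotone arbitrarily off $\mathsf{range}(p)$ (irrelevant since $p$ only takes those values). The objective went down (or stayed equal) and every constraint value is unchanged, so $\tilde c\in\sol_\cD(T,\beta,\varepsilon)$, giving $\opt_\cD(T,\text{rank-preserving }\cC_p(g),\varepsilon)\le\beta$; taking infimum over $\beta$ finishes it.

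\textbf{Where the difficulty lies.} The routine parts are the separability reduction and the bookkeeping with $r_i,r_i^+,r_i^-$ to see that fairness constraints have the form \eqref{eq:linear-constraint}. The genuine obstacle is the exchange/rearrangement argument in step three: making precise the claim that when the slopes $s_k$ of the affine constraint-contributions are comonotone with the label-means $\mu_k$, and the $\mu_k$ are themselves increasing, then some constraint-preserving minimizer of the separable convex objective is monotone. The sign condition $\tau_2(i)\tau_3(i)\ge0$ is exactly what's needed to rule out a "crossing" of slopes that would let the greedy swap violate the constraint in the wrong direction; I would need to handle the case $\tau_3(i)=0$ (constant slope $s_k=\tau_2(i)\ne0$, simpler) and verify the subgradient inequality $\phi_k'\ge\phi_{k+1}'$ carefully, including at kinks of $\phi_k$ if $f_0(i,\cdot,b)$ is only convex and not differentiable. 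A clean way to package step three is as a one-line lemma: if $\phi_1,\dots,\phi_m$ are convex with $\phi_1'\ge\cdots\ge\phi_m'$ pointwise (in the sense of subdifferentials) and $s_1,\dots,s_m$ are nonzero of a common sign, then among minimizers of $\sum q_k\phi_k(a_k)$ subject to $\sum q_k s_k a_k = C$ there is one with $a_1\le\cdots\le a_m$ — proved by the KKT stationarity condition $\phi_k'(a_k)\ni \lambda s_k$ for a common multiplier $\lambda$, which forces the $a_k$ to be ordered the same way as $s_k$, hence the same way as $\mu_k$, hence nondecreasing.
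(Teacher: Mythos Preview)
Your high-level strategy is the same as the paper's: work group by group, reduce to the finite collection of level-set actions $(a_1,\dots,a_m)$ within a fixed group, and use a pairwise exchange to repair any rank violation $a_k>a_{k+1}$ while keeping the single linear constraint fixed and not increasing the objective. The paper's execution of the exchange, however, is different from both of your proposed mechanisms. Rather than an infinitesimal perturbation or a KKT argument, the paper performs a \emph{finite} swap: for a violating pair it sets the action at the higher level to the larger of the two original actions and chooses the action at the lower level to be the unique value $z$ that keeps the constraint unchanged. The sign hypothesis $\tau_2(i)\tau_3(i)\ge 0$ is used exactly to show that $z=\gamma\,\tau(i,v)+(1-\gamma)\,\tau(i,v')$ with $\gamma\in[0,1]$; monotonicity of $p$ then yields the explicit bounds $\beta/(1-\beta)\le\gamma\le\beta q_v/((1-\beta)q_{v'})$, and a direct calculation (using convexity of $f_0$ to bound $f_0(i,z,\cdot)$ and the rank-preserving property to sign the remaining terms) shows the objective does not increase.

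Your KKT ``clean lemma'' has a slip. From $\phi_k'(a_k)=\lambda s_k$ you conclude that the $a_k$ are ordered ``the same way as $s_k$, hence the same way as $\mu_k$.'' But when $\tau_3(i)<0$ (which the hypothesis allows, with $\tau_2(i)<0$), the slopes $s_k=\tau_2(i)+\tau_3(i)\mu_k$ are \emph{decreasing} in $k$, not increasing, so ``same as $s_k$'' would give the wrong order; and in any case the direction in which KKT orders the $a_k$ depends on the sign of $\lambda$, which you have not pinned down. The infinitesimal-swap argument is also not complete as written: showing the first-order change is nonpositive amounts to comparing $\phi_{k+1}'(a_{k+1})\cdot s_k/s_{k+1}$ with $\phi_k'(a_k)$, and the pointwise bound $\phi_k'\ge\phi_{k+1}'$ together with $a_k>a_{k+1}$ does not by itself control this ratio when $\phi_k'(a_k)<0$. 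None of this breaks your plan---the exchange idea is right---but the place where the real work happens is precisely the finite-swap computation with the $\gamma$ bounds, and that is what your outline is missing.
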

We remark that the requirement $\tau_2(i)\tau_3(i)\geq 0$ is necessary. A constraint with opposite signs can encourage having $a=1$ for individuals $(x,y),g(x)=i$, but discourage $a=1$ for those with $(x,y),g(x)=i,y=1$. It is not possible to have a rank-preserving transformation under such constraint, and this emphasize that both the constraints and the loss functions should be appropriately chosen.

For the more general case of random transformations, and multiple constraints per group $i$, we prove a similar lemma for outcome-oblivious constraints, i.e., constraints $f_j$ that do not depend on $y$ (e.g.\ budget / statistical parity constraints).
\begin{lemma}
    \label{lem:rand-rank-p}
    Let $A\subseteq[0,1]$ be a discrete action set, $\cT$ be class of tasks with constraints that are independent of the outcome.  Then for a monotone omnipredictor $p$ we have 
    \[ \opt_\mathcal{D}(T,\text{rank-preserving }\cC_p^{\mathsf{rand}}(g),\varepsilon)=\opt_\mathcal{D}(T,\cC_p^{\mathsf{rand}}(g),\varepsilon).\]
\end{lemma}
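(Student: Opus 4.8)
The plan is as follows. Since the rank-preserving transformations form a subclass of $\cC_p^{\mathsf{rand}}(g)$, the left-hand side of the claimed identity is automatically $\ge$ the right-hand side, so it suffices to prove $\le$. Unwinding the definition of $\opt_\cD$, this amounts to: for every $\beta$ such that some $c\in\cC_p^{\mathsf{rand}}(g)$ lies in $\sol_\cD(T,\beta,\varepsilon)$ (i.e.\ has objective value $\le\beta$ and every constraint value $\le\varepsilon$), there is a \emph{rank-preserving} $c'\in\cC_p^{\mathsf{rand}}(g)$ that is also in $\sol_\cD(T,\beta,\varepsilon)$. I read ``rank-preserving'' for a randomized $\tau:[t]\times[0,1]\to\Delta_A$ as: for every group $i$ and all $v>v'$ in $\mathsf{range}(p)$, the distribution $\tau(i,v)$ stochastically dominates $\tau(i,v')$ on the finite ordered set $A\subseteq[0,1]$ (equivalently, since $A$ is finite, these distributions admit a monotone coupling). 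I will also use the standing assumption of the section that each objective $f_0$ is rank-preserving, and that $p$ is monotone \emph{within each group}, i.e.\ $\Pr_\cD[y=1\mid g(x)=i,p(x)=v]$ is nondecreasing in $v$ for every $i$; no other property of the omnipredictor is needed beyond its range being finite.

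Given $c$ as above, write $c(x)=\tau(g(x),p(x))$, enumerate $\mathsf{range}(p)$ as $v_1<\cdots<v_m$, and for each group $i$ set $q_{i,k}:=\Pr_\cD[g(x)=i,p(x)=v_k]$, $\pi_{i,k}:=\tau(i,v_k)\in\Delta_A$, and $\Theta_i:=\sum_k q_{i,k}\pi_{i,k}$ (the unnormalized distribution of actions within group $i$). I construct $c'$ by ``sorting within each group'': list the mass of $\Theta_i$ in increasing order of action value and carve off the bottom $q_{i,1}$ of it as $\pi'_{i,1}$, the next $q_{i,2}$ as $\pi'_{i,2}$, and so on, then set $c'(x):=\tau'(g(x),p(x))$ with $\tau'(i,v_k):=\pi'_{i,k}$. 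By construction $\pi'_{i,1}\preceq\cdots\preceq\pi'_{i,m}$ for every $i$, so $c'\in\cC_p^{\mathsf{rand}}(g)$ is rank-preserving, and $\sum_k q_{i,k}\pi'_{i,k}=\Theta_i$, so the sorting changes no $\Theta_i$.

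It remains to check $c'\in\sol_\cD(T,\beta,\varepsilon)$. Every constraint $f_j$ is a group constraint independent of the outcome, so $\E_{(x,y)\sim\cD}\E_{a\sim c(x)}f_j(x,a,y)=\sum_i\sum_{a\in A}\Theta_i(a)f_j'(i,a)$ depends on $c$ only through the $\Theta_i$'s; hence all constraint values are unchanged and stay $\le\varepsilon$. For the objective, put $\mu_{i,k}:=\Pr_\cD[y=1\mid g(x)=i,p(x)=v_k]$ and $\psi_i(a):=f_0'(i,a,1)-f_0'(i,a,0)$; a direct expansion gives
\[
\E_{(x,y)\sim\cD}\E_{a\sim c(x)}f_0(x,a,y)=\sum_i\sum_{a\in A}\Theta_i(a)f_0'(i,a,0)+\sum_i\sum_k q_{i,k}\,\mu_{i,k}\,\E_{a\sim\pi_{i,k}}\psi_i(a).
\]
The first sum depends only on the $\Theta_i$'s and is unchanged. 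For the second, fix $i$ and represent each $\pi_{i,k}$ (resp.\ $\pi'_{i,k}$) by its increasing quantile function on an interval of length $q_{i,k}$, laying out the intervals $k=1,\ldots,m$ consecutively along $[0,\sum_k q_{i,k})$; then $\tau$ and $\tau'$ both realize rearrangements of the same (increasing) quantile function of $\Theta_i$, and $\tau'$ realizes the globally increasing one. Viewed on this interval, the weight $\mu_{i,k}$ is nondecreasing (by within-group monotonicity of $p$), and $\psi_i$ composed with the globally increasing rearrangement is nonincreasing (by rank-preservation of $f_0$, which makes $\psi_i$ nonincreasing in $a$). The Hardy--Littlewood rearrangement inequality then gives $\sum_k q_{i,k}\mu_{i,k}\E_{\pi'_{i,k}}\psi_i\le\sum_k q_{i,k}\mu_{i,k}\E_{\pi_{i,k}}\psi_i$, and summing over $i$ shows the objective of $c'$ is at most that of $c$, hence $\le\beta$. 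This establishes the claim.

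The only step carrying real content is this last one --- arguing that sorting within a group never hurts the objective; preservation of the $\Theta_i$'s, membership of $\tau'$ in $\cC_p^{\mathsf{rand}}(g)$, and rank-preservation of $\tau'$ are all immediate from the construction. I expect the care in the writeup to be concentrated there: one may either invoke Hardy--Littlewood as above, or replace it by a self-contained local-exchange argument that, whenever $\pi_{i,k}\not\preceq\pi_{i,k'}$ with $k<k'$, moves a small amount of mass from a high action to a low action in level $k$ and (a correspondingly rescaled amount of) mass in the reverse direction in level $k'$, chosen so that $\Theta_i$ stays fixed; this changes the objective by $\delta\,(\psi_i(a_{\mathrm{lo}})-\psi_i(a_{\mathrm{hi}}))\,(\mu_{i,k}-\mu_{i,k'})\le 0$, using precisely that $\psi_i$ is nonincreasing and $\mu_{i,k}\le\mu_{i,k'}$ (with the direct sorting construction preferred, since it sidesteps termination of such an iteration). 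A secondary point is simply pinning down the randomized notion of ``rank-preserving'' and reading the monotonicity hypothesis in its per-group form, as noted above.
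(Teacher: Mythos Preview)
Your proposal is correct and rests on the same idea as the paper's proof: within each group $i$, rearrange the action distributions across the levels of $p$ so that higher $p$-levels receive stochastically larger actions, while keeping the group-level action marginal $\Theta_i=\sum_k q_{i,k}\pi_{i,k}$ fixed; outcome-independence of the (group) constraints then makes all constraint values invariant, and rank-preservation of $f_0$ together with per-group monotonicity of $p$ forces the objective not to increase. The execution differs. The paper proceeds iteratively: at each step it selects a violating pair $v>v'$, pools their action histograms within group $i$, and reassigns the top portion to $v$ and the bottom to $v'$, verifying by a direct calculation (essentially your ``local-exchange'' alternative) that the objective does not go up, and then asserts termination after $|V|^2$ such steps. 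Your global sort accomplishes the same thing in one move and replaces the step-by-step inequality with a single application of the rearrangement inequality to the pair (nondecreasing weights $\mu_{i,k}$, nonincreasing $\psi_i$ composed with the sorted quantile function); this is tidier and sidesteps the termination question you flag. You also correctly note that the argument requires the monotonicity hypothesis in its per-group form, which is exactly how the paper's own proof uses it (it writes $q_v=\E[y\mid g(x)=i,p(x)=v]$ and invokes $q_v\ge q_{v'}$).
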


 Proofs and more details are in \Cref{sec:rank-p}.
\bibliography{ref}
\newpage
\appendix
\onecolumn
\section{Proof of Equivalence in Multiaccuracy and Multicalibration Definitions}
\label{sec:equivalence}
We prove the equivalence relationship in \Cref{def:grpma}. Similar proofs can be applied to other definitions in \Cref{sec:group-ma-mc}.
\begin{claim}
In \Cref{def:grpma}, a predictor $p$ belongs to $\grpma(\cC,g,\varepsilon)$ if and only if
\begin{equation}
\label{eq:equivalence}
\sum_{i\in [t]}|\E_{(x,y)\sim \cD}[(y - p(x))h(x)\one(g(x) = i)]| \le \varepsilon \quad \text{for every }h\in H.
\end{equation}
\end{claim}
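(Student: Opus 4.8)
The plan is to unwind both sides of the claimed equivalence down to a common statement about the finitely many quantities $\alpha_i^h := \E_{(x,y)\sim\cD}[(y-p(x))h(x)\one(g(x)=i)]$ indexed by $h\in H$ and $i\in[t]$, and then to observe that the two conditions differ only by an elementary $\ell_1$--$\ell_\infty$ duality.

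First I would recall that, by definition, $p\in\grpma_\cD(H,g,\varepsilon)$ means $p\in\genmc_\cD(W,\varepsilon)$, i.e. $\abs{\E_{(x,y)\sim\cD}[(y-p(x))w(x,p(x))]}\le\varepsilon$ for every $w\in W$, where each $w\in W$ has the form $w(x,v)=h(x)\tau(g(x))$ for some $h\in H$ and some $\tau:[t]\to[-1,1]$. Since this $w(x,p(x))=h(x)\tau(g(x))$ does not depend on the second argument at all, and since the groups $\{g^{-1}(i)\}_{i\in[t]}$ partition $X$ into finitely many parts, linearity of expectation gives
\[
\E_{(x,y)\sim\cD}[(y-p(x))h(x)\tau(g(x))]=\sum_{i\in[t]}\tau(i)\,\alpha_i^h.
\]
Hence $p\in\grpma_\cD(H,g,\varepsilon)$ if and only if $\abs{\sum_{i\in[t]}\tau(i)\alpha_i^h}\le\varepsilon$ for every $h\in H$ and every $\tau:[t]\to[-1,1]$.

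The second step is the duality observation: for each fixed $h$,
\[
\sup_{\tau:[t]\to[-1,1]}\;\Bigl|\textstyle\sum_{i\in[t]}\tau(i)\alpha_i^h\Bigr| \;=\;\sum_{i\in[t]}\abs{\alpha_i^h}.
\]
The inequality $\le$ is the triangle inequality combined with $\abs{\tau(i)}\le 1$; the inequality $\ge$ follows by taking $\tau(i)=\mathrm{sign}(\alpha_i^h)$, with $\mathrm{sign}(0)$ set to $0$ (or anything in $[-1,1]$), which is an admissible choice because its range lies in $\{-1,0,1\}\subseteq[-1,1]$ and $[t]$ is finite so the supremum is genuinely attained. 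Therefore ``$\abs{\sum_i\tau(i)\alpha_i^h}\le\varepsilon$ for all admissible $\tau$'' is equivalent to ``$\sum_i\abs{\alpha_i^h}\le\varepsilon$'', and quantifying over $h\in H$ yields exactly \eqref{eq:equivalence}.

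There is essentially no hard step; the only point needing a moment's care is that the supremum over $\tau$ ranging in the cube $[-1,1]^{[t]}$ is attained at a sign vector, which is immediate from finiteness of $[t]$ (no measurability or attainment subtlety arises). I would close by remarking that the same argument proves the analogous equivalences for $\grpmc$, $\grpcal$, and $\grplma$ in \Cref{sec:group-ma-mc}: one simply replaces the index set $[t]$ by $[t]\times\mathsf{range}(p)$, by $[t]$ with $h\equiv 1$, or by $[t]\times A$, respectively, and reruns the two steps verbatim.
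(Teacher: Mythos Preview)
Your proposal is correct and follows essentially the same approach as the paper: both rewrite the $\genmc$ condition as $\bigl|\sum_{i}\tau(i)\alpha_i^h\bigr|\le\varepsilon$ for all $\tau:[t]\to[-1,1]$ and then use the triangle inequality for one direction and the sign choice $\tau(i)=\mathrm{sign}(\alpha_i^h)$ for the other. The only cosmetic difference is that you package the two directions as a single $\ell_1$--$\ell_\infty$ duality identity, whereas the paper writes them out as two separate implications.
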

\begin{proof}
We first show that $p\in \grpma(\cC,g,\varepsilon)$ implies \eqref{eq:equivalence}. For a fixed $h\in H$, we choose $\tau:[t]\to [-1,1]$ such that 
\begin{equation}
\label{eq:equivalence-1}
\tau(i) = \mathsf{sign}\left(\E_{(x,y)\sim \cD}[(y - p(x))h(x)\one(g(x) = i)]\right),
\end{equation}
where $\mathsf{sign}(v) = 1$ if $v \ge 0$, and $\mathsf{sign}(v) = -1$ if $v < 0$.
By our assumption $p\in \grpma(\cC,g,\varepsilon)$,
\begin{align*}
\varepsilon & \ge \E_{(x,y)\sim \cD}[(y - p(x))h(x)\tau(g(x))]\\
& = \sum_{i\in [t]}\E_{(x,y)\sim \cD}[(y - p(x))h(x)\one(g(x) = i)\tau(i)]\\
& = \sum_{i\in [t]}|\E_{(x,y)\sim \cD}[(y - p(x))h(x)\one(g(x) = i)]|.\tag{by \eqref{eq:equivalence-1}}
\end{align*}
This proves \eqref{eq:equivalence}. Now we prove that \eqref{eq:equivalence} implies $p\in \grpma(\cC,g,\varepsilon)$. For any $h\in H$ and $\tau:[t]\to [-1,1]$, 
\begin{align*}
& \E_{(x,y)\sim \cD}[(y - p(x))h(x)\tau(g(x))]\\
= {} & \sum_{i\in [t]}\E_{(x,y)\sim \cD}[(y - p(x))h(x)\one(g(x) = i)\tau(i)]\\
\le {} & \sum_{i\in [t]}|\E_{(x,y)\sim \cD}[(y - p(x))h(x)\one(g(x) = i)]| \tag{by $\tau(i)\in [-1,1]$}\\
\le {} & \varepsilon. \tag{by \eqref{eq:equivalence}}
\end{align*}
This proves $p\in \grpma(\cC,g,\varepsilon)$.
\end{proof}
\begin{remark}
The proof above can be adapted to show that if we restrict $\tau$ to only output values in $\{-1,1\}$ instead of $[-1,1]$, we also get an equivalent definition of $\grpma$, and this holds for other definitions in \Cref{sec:group-ma-mc} as well.
\end{remark}
\section{Proof of Lemma~\ref{lm:approach}}
\label{sec:reconstruct}
We restate and prove \Cref{lm:approach} below.
\lmapproach*
\begin{proof}
Fix an arbitrary task $T\in \cT$. Define $\beta^* := \opt_\cD(T,\cC,0)$ and $\beta:= \opt_{\cD_p}(T,\cC_p,\varepsilon/3)$ as in \Cref{def:omni}. By the definition of $\beta^*$, for any $\beta'> \beta^*$, there exists $c\in \cC\cap \sol_\cD(T,\beta',0)$. By \eqref{eq:approach-1}, there exists $c'\in \cC_p\cap \sol_{\cD_p}(T,\beta' + \varepsilon/3,\varepsilon/3)$. This implies that $\beta \le \beta' + \varepsilon/3$, and thus $\beta \le \beta^* + \varepsilon/3$. Now we have $\beta + \varepsilon/3 \le \beta^* + 2\varepsilon/3$, and thus
\begin{equation}
\label{eq:approach-proof-1}
\cC_p\cap \sol_{\cD_p}(T,\beta + \varepsilon/3,2\varepsilon/3) \subseteq \cC_p\cap \sol_{\cD_p}(T,\beta^* + 2\varepsilon/3,2\varepsilon/3).
\end{equation}
Inequality \eqref{eq:approach-2} implies that for any $\beta''\in \bR$ and $\varepsilon' \in \bR_{\ge 0}$, $\cC_p\cap \sol_{\cD_p}(T,\beta'',\varepsilon') \subseteq \sol_{\cD}(T,\beta'' + \varepsilon/3,\varepsilon' + \varepsilon/3)$, and thus
\begin{equation}
\label{eq:approach-proof-2}
\cC_p\cap \sol_{\cD_p}(T,\beta^* + 2\varepsilon/3,2\varepsilon/3) \subseteq \sol_{\cD}(T,\beta^* + \varepsilon,\varepsilon).
\end{equation}
Combining \eqref{eq:approach-proof-1} and \eqref{eq:approach-proof-2} completes the proof.
\end{proof}

\section{Proofs for Section~\ref{sec:omni}}
\label{sec:proof-omni}
\subsection {Proof of Theorem~\ref{thm:squared-linear}}
\thmsquaredlinear*
We first prove three helper lemmas/claims below and then prove \Cref{thm:squared-linear}.
\begin{claim}
\label{claim:partial}
For any predictor $p:X\to [0,1]$, any function $f:X\times A\times\{0,1\}\to \bR$ and any $c:X\to A$, we have
\begin{equation}
\label{eq:claim-partial}
\E_{(x,y)\sim \cD}f(x,c(x),y) - \E_{(x,y)\sim \cD_p}f(x,c(x),y)
=
\E_{(x,y)\sim\cD}[(y - p(x))\partial f(x,c(x))],
\end{equation}
where $\partial f(x,a) := f(x,a,1) - f(x,a,0)$ for every $(x,a)\in X\times A$.
\end{claim}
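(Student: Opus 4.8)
\textbf{Proof proposal for Claim~\ref{claim:partial}.}
The plan is to condition on $x$ and exploit the fact that $\cD$ and $\cD_p$ share the same marginal over $X$ by construction of $\cD_p$. First I would write, for each fixed $x\in X$, the conditional expectation of $f(x,c(x),\cdot)$ under the Bernoulli law of $y$. Writing $p^*(x):=\Pr_{\cD}[y=1\mid x]$, we have
\[
\E_{(x,y)\sim\cD}[f(x,c(x),y)\mid x] = (1-p^*(x))f(x,c(x),0) + p^*(x)f(x,c(x),1) = f(x,c(x),0) + p^*(x)\,\partial f(x,c(x)),
\]
and identically, since under $\cD_p$ the label is drawn from $\ber(p(x))$ conditioned on $x$,
\[
\E_{(x,y)\sim\cD_p}[f(x,c(x),y)\mid x] = f(x,c(x),0) + p(x)\,\partial f(x,c(x)).
\]
Subtracting these two identities the term $f(x,c(x),0)$ cancels, leaving $(p^*(x)-p(x))\,\partial f(x,c(x))$.

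Next I would take the expectation over $x$ (drawn from the common marginal). This gives
\[
\E_{(x,y)\sim\cD}f(x,c(x),y) - \E_{(x,y)\sim\cD_p}f(x,c(x),y) = \E_{x}\big[(p^*(x)-p(x))\,\partial f(x,c(x))\big].
\]
Finally, since $\partial f(x,c(x))$ depends on $x$ only, the tower rule gives $\E_x[p^*(x)\,\partial f(x,c(x))] = \E_{(x,y)\sim\cD}[y\,\partial f(x,c(x))]$, so the right-hand side equals $\E_{(x,y)\sim\cD}[(y-p(x))\,\partial f(x,c(x))]$, which is exactly \eqref{eq:claim-partial}.

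There is no real obstacle here; the only point requiring (minor) care is keeping track of the conditioning so that the cancellation of the $f(x,c(x),0)$ term and the replacement of $p^*(x)$ by $y$ inside the expectation are both justified by the definition of $\cD_p$ and the fact that $c$, $p$, and hence $\partial f(x,c(x))$ are functions of $x$ alone.
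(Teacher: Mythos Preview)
Your proof is correct and is essentially the same argument as the paper's: both rely on the identity $f(x,c(x),y)=f(x,c(x),0)+y\,\partial f(x,c(x))$ and the fact that $\cD$ and $\cD_p$ share the same marginal on $x$, so that the $f(x,c(x),0)$ term cancels. The only cosmetic difference is that you introduce $p^*(x)$ and condition on $x$ explicitly before invoking the tower rule, whereas the paper plugs the identity directly into the two expectations.
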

\begin{proof}
The claim is proved by plugging the following equation into the left-hand side of \eqref{eq:claim-partial}.
\[
f(x,c(x),y) = f(x,c(x),0) + y\,\partial f(x,c(x)).
\]
We get
\begin{align*}
  & \E_{(x,y)\sim \cD}f(x,c(x),y) - \E_{(x,y)\sim \cD_p}f(x,c(x),y)\\
= {} &  \E_{(x,y)\sim \cD}[f(x,c(x),0) + y\,\partial f(x,c(x))] - \E_{(x,y)\sim \cD_p}[f(x,c(x),0) + y\,\partial f(x,c(x))].
\end{align*}
The distributions $\cD,\cD_p$ are identical on the $x$ part, therefore $f(x,c(x),0)$ cancels out. The distribution $\cD_p$ is defined such that $y=1$ with probability $p(x)$, which finishes the proof.
\end{proof}

\begin{lemma}
\label{lm:squared-linear-1}
In the setting of \Cref{thm:squared-linear}, for every $c\in \cC$, there exists $c'\in \cC_p(g)$ such that for every convex and special group objective/constraint $f:X\times A\times \{0,1\}\to \bR$, it holds that
\begin{align*}
\E_{(x,y)\sim \cD_p}f(x,c'(x),y) & \le \E_{(x,y)\sim \cD}f(x,c(x),y) + \varepsilon/3.
\end{align*}
\end{lemma}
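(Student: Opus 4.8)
The plan is to construct a single $c'\in\cC_p(g)$ that does not depend on $f$ and works for all convex special $f$ at once, by ``averaging $c$ over the level sets of $(g,p)$''. Concretely, for each $i\in[t]$ and $v\in\mathsf{range}(p)$ with $\Pr_\cD[g(x)=i,\,p(x)=v]>0$, set $\tau(i,v):=\E_{(x,y)\sim\cD}[c(x)\mid g(x)=i,\,p(x)=v]$, and let $\tau(i,v)$ be an arbitrary element of $A$ otherwise; then put $c'(x):=\tau(g(x),p(x))$. Since $A=[0,1]$ is an interval, $\tau(i,v)\in A$, so $c'\in\cC_p(g)$, and $c'$ is visibly independent of $f$. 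I would then bound the target quantity by splitting
\[
\E_{\cD_p}f(x,c'(x),y)-\E_{\cD}f(x,c(x),y)=\big(\E_{\cD_p}f(x,c'(x),y)-\E_{\cD_p}f(x,c(x),y)\big)+\big(\E_{\cD_p}f(x,c(x),y)-\E_{\cD}f(x,c(x),y)\big),
\]
and controlling the two parenthesized terms separately.

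For the first term I would use convexity of $f$ and Jensen's inequality. Under $\cD_p$ the label of an individual $x$ is $\ber(p(x))$, so for any $c^\circ:X\to A$ we have $\E_{\cD_p}f(x,c^\circ(x),y)=\E_{x\sim\cD}[G(g(x),p(x),c^\circ(x))]$ where $f'$ is the group representation of $f$ (\Cref{def:group-constraint}) and $G(i,v,a):=v\,f'(i,a,1)+(1-v)\,f'(i,a,0)$. Because $v=p(x)\in[0,1]$, the map $a\mapsto G(i,v,a)$ is a convex combination of the convex functions $f'(i,\cdot,1)$ and $f'(i,\cdot,0)$, hence convex. As $c'$ is constant on each level set $\{g=i,\,p=v\}$ and equals the conditional mean of $c$ there, applying Jensen within each level set and then averaging over level sets (weighted by their probabilities) gives $\E_{\cD_p}f(x,c'(x),y)\le\E_{\cD_p}f(x,c(x),y)$, so the first term is $\le 0$. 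This step uses convexity of $f$ and the definition of $c'$, but not specialness.

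For the second term I would apply \Cref{claim:partial}, which rewrites it as $-\E_{(x,y)\sim\cD}[(y-p(x))\,\partial f(x,c(x))]$. Here specialness enters: $\partial f(x,c(x))=\tau_1(g(x))+\tau_2(g(x))\,c(x)$ with $\tau_1,\tau_2:[t]\to[-1,1]$. The summand $\E_\cD[(y-p(x))\tau_1(g(x))]$ has absolute value at most $\varepsilon/6$ by group calibration (applied with the test function $(i,v)\mapsto\tau_1(i)\in[-1,1]$), while $\E_\cD[(y-p(x))\tau_2(g(x))c(x)]=\sum_{i\in[t]}\tau_2(i)\,\E_\cD[(y-p(x))c(x)\one(g(x)=i)]$ has absolute value at most $\sum_{i}|\E_\cD[(y-p(x))c(x)\one(g(x)=i)]|\le\varepsilon/6$ by group multiaccuracy with respect to $\cC$ (using $c\in\cC$ and $|\tau_2(i)|\le1$). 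Hence the second term is at most $\varepsilon/6+\varepsilon/6=\varepsilon/3$, and adding the two bounds yields the claim.

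The main obstacle is that the error budget is tight: $\grpma$ and $\grpcal$ are each assumed only to accuracy $\varepsilon/6$, so the Jensen step must contribute a genuinely non-positive amount (not merely $O(\varepsilon)$), and $\partial f(x,c(x))$ must be split into its $\tau_1$-part, handled by calibration, and its $\tau_2$-part, handled by multiaccuracy; trying instead to bound $\partial f(x,c'(x))$ directly via calibration would only give $2\cdot\varepsilon/6$ and leave no room. A secondary point to keep straight is that a single $c'$ must serve every $f$, which is exactly why $\tau$ is taken to be the conditional mean of $c$ (independent of $f$) rather than a per-$f$ Bayes-optimal action on the simulated distribution.
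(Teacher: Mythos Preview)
Your proposal is correct and follows essentially the same argument as the paper: you define $c'$ by the same conditional-mean construction $\tau(i,v)=\E[c(x)\mid g(x)=i,\,p(x)=v]$, use convexity (Jensen on each $(g,p)$-level set under $\cD_p$) to get $\E_{\cD_p}f(x,c'(x),y)\le\E_{\cD_p}f(x,c(x),y)$, and then use \Cref{claim:partial} together with the special decomposition $\partial f(x,c(x))=\tau_1(g(x))+\tau_2(g(x))c(x)$ to bound the remaining term by $\varepsilon/6+\varepsilon/6$ via group calibration and group multiaccuracy. The paper presents these same steps in a slightly different order and states the Jensen step more tersely, but the content is identical.
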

\begin{proof}
By \Cref{claim:partial},
\begin{equation}
\label{eq:squared-linear-1}
\E_{(x,y)\sim \cD}f(x,c(x),y) - \E_{(x,y)\sim \cD_p}f(x,c(x),y) = \E_{(x,y)\sim \cD}[(y - p(x))\partial f(x,c(x))].
\end{equation}
Since $f$ is a special objective/constraint, there exist $\tau_1,\tau_2:[t]\to [-1,1]$ such that $\partial f(x,c(x)) = \tau_1(g(x)) + \tau_2(g(x))c(x)$. By our assumption that $p\in \grpcal(g,\varepsilon/6)$, we have
\[
\E_{(x,y)\sim \cD}[(y - p(x))\tau_1(g(x))] \ge -\varepsilon/6.
\]
By our assumption that $p\in \grpma(\cC,g,\varepsilon/6)$, we have
\[
\E_{(x,y)\sim \cD}[(y - p(x))\tau_2(g(x))c(x)] \ge -\varepsilon/6.
\]
Combining them, we have
\begin{equation}
\label{eq:squared-linear-2}
\E_{(x,y)\sim \cD}[(y - p(x))\partial f(x,c(x))] = \E_{(x,y)\sim \cD}[(y - p(x))(\tau_1(g(x)) + \tau_2(g(x))c(x))] \ge -\varepsilon/3.
\end{equation}
Finally, define $\tau$ such that $\tau(i,v) = \E[c(x)|g(x) = i,p(x) = v]$ and define $c'(x) = \tau(g(x),p(x))$. It is clear that $c'\in \cC_p(g)$. Moreover, by the convexity of $f$, we have
\[
\E_{(x,y)\sim \cD_p}f(x,c'(x),y) \le \E_{(x,y)\sim \cD_p}f(x,c(x),y).
\]
Combining this with \eqref{eq:squared-linear-1} and \eqref{eq:squared-linear-2} completes the proof.
\end{proof}
\begin{lemma}
\label{lm:squared-linear-2}
In the setting of \Cref{thm:squared-linear}, for every $c\in \cC_p(g)$, for every convex and special group objective/constraint $f:X\times A\times \{0,1\}\to \bR$, it holds that
\begin{align*}
\E_{(x,y)\sim \cD}f(x,c(x),y) & \le \E_{(x,y)\sim \cD_p}f(x,c(x),y) + \varepsilon/3.
\end{align*}
\end{lemma}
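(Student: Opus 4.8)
The plan is to establish condition \eqref{eq:approach-2} directly for this $f$ and $c$, and the key observation is that it needs only the group calibration hypothesis $p\in\grpcal_\cD(g,\varepsilon/6)$ — not group multiaccuracy — because every $c\in\cC_p(g)$ is by definition a function of the pair $(g(x),p(x))$ alone. First I would apply \Cref{claim:partial} to rewrite
\[
\E_{(x,y)\sim\cD}f(x,c(x),y)-\E_{(x,y)\sim\cD_p}f(x,c(x),y)=\E_{(x,y)\sim\cD}[(y-p(x))\partial f(x,c(x))],
\]
so that it suffices to upper bound this last expectation by $\varepsilon/3$.

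Next I would use the two available structural facts. Since $f$ is special there are $\tau_1,\tau_2:[t]\to[-1,1]$ with $\partial f(x,a)=\tau_1(g(x))+\tau_2(g(x))a$, and since $c\in\cC_p(g)$ there is $\tau:[t]\times[0,1]\to A$ with $c(x)=\tau(g(x),p(x))$. Substituting and splitting,
\begin{align*}
\E_{(x,y)\sim\cD}[(y-p(x))\partial f(x,c(x))]={}&\E_{(x,y)\sim\cD}[(y-p(x))\tau_1(g(x))]\\
&{}+\E_{(x,y)\sim\cD}[(y-p(x))\tau_2(g(x))\tau(g(x),p(x))].
\end{align*}
For the first term, $w(x,v):=\tau_1(g(x))$ has range in $[-1,1]$ and factors through $(g(x),v)$, so it belongs to the class $W$ defining $\grpcal$; hence $p\in\grpcal_\cD(g,\varepsilon/6)$ bounds that term by $\varepsilon/6$ in absolute value. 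For the second term, I would set $\tilde\tau(i,v):=\tau_2(i)\tau(i,v)$ and note $\tilde\tau(i,v)\in[-1,1]$ since $|\tau_2(i)|\le 1$ and $\tau(i,v)\in A=[0,1]$; therefore $w(x,v):=\tilde\tau(g(x),v)$ again lies in the $\grpcal$ class, and since $\tau_2(g(x))\tau(g(x),p(x))=\tilde\tau(g(x),p(x))$, the second term is likewise bounded by $\varepsilon/6$. Adding the two bounds gives the desired $\varepsilon/3$.

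There is essentially no hard step here: the argument is a short computation once one notices that membership in $\cC_p(g)$ makes $c(x)$ a function of $(g(x),p(x))$, which is exactly the shape of test function handled by group calibration. The only point requiring care is the range check $\tilde\tau(i,v)\in[-1,1]$, which relies on the action set being $A=[0,1]$ together with the \emph{special} property (the affine form of $\partial f$ with coefficients bounded by $1$). Note in particular that, unlike in the proof of \Cref{lm:squared-linear-1}, convexity of $f$ is not used here at all.
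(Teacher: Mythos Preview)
Your proof is correct and follows essentially the same approach as the paper: apply \Cref{claim:partial}, observe that $\partial f(x,c(x))$ is a bounded function of $(g(x),p(x))$, and invoke group calibration. The only cosmetic difference is that the paper combines the two pieces into a single $[-2,2]$-valued test function and applies $\grpcal(g,\varepsilon/6)$ once (picking up a factor of $2$), whereas you split into two $[-1,1]$-valued test functions and apply group calibration twice; both yield the same $\varepsilon/3$ bound, and your observation that convexity plays no role here is accurate.
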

\begin{proof}
By \Cref{claim:partial},
\begin{equation}
\label{eq:squared-linear-3}
\E_{(x,y)\sim \cD}f(x,c(x),y) - \E_{(x,y)\sim \cD_p}f(x,c(x),y) = \E_{(x,y)\sim \cD}[(y - p(x))\partial f(x,c(x))].
\end{equation}
Since $f$ is convex and special, there exists $\tau:[t]\times A\to [-2,2]$ such that $\partial f(x,a)=\tau(g(x),a)$.
Since $c\in \cC_p(g)$, there exists $\tau':X\times [0,1]\to A$ such that $c(x)=\tau(g(x),p(x))$. Therefore,
\begin{equation}
\label{eq:squared-linear-4}
\E_{(x,y)\sim \cD}[(y - p(x))\partial f(x,c(x))] = \E_{(x,y)\sim \cD}[(y - p(x))\tau(g(x),\tau'(g(x),p(x)))] \le \varepsilon/3,
\end{equation}
where the last inequality holds by our assumption that $p\in \grpcal(g,\varepsilon/6)$.
Combining \eqref{eq:squared-linear-3} and \eqref{eq:squared-linear-4} completes the proof.
\end{proof}
\begin{proof}[Proof of \Cref{thm:squared-linear}]
The proof is completed by applying \Cref{lm:approach} to the setting of \Cref{thm:squared-linear} and observing that \eqref{eq:approach-1} and \eqref{eq:approach-2} in \Cref{lm:approach} can be established by \Cref{lm:squared-linear-1} and \Cref{lm:squared-linear-2}, respectively.
\end{proof}
\subsection{Proof of Theorem~\ref{thm:convex-linear}}
\thmconvexlinear*
We first prove three helper lemmas below and then prove \Cref{thm:convex-linear}.
\begin{lemma}[\citep{GopalanKRSW22}]
\label{lm:c-hat}
Let $c:X\to \bR$ be a function. Let $g:X\to [t]$ be a group partition function.
Let $f:X\times \bR\times \{0,1\}\to \bR$ be a convex $1$-Lipschitz group objective/constraint (\Cref{def:group-constraint,def:convex-special,def:lipschitz}).
Define
$\tau,\tau':[t]\to \bR$ such that $\tau(i) = \E [y|g(x) = i]$ and $\tau'(i)= \E[c(x)|g(x) = i]$ for every $i\in [t]$. 
Assume that $\sum_{i\in [t]}|\E_{(x,y)\sim\cD}[(y - \tau(i))c(x)\one(g(x) = i)]|\le \varepsilon$. 
We have
\[
\E_{(x,y)\sim \cD}[f(x,\tau'(g(x)),y)] \le \E_{(x,y)\sim \cD}[f(x,c(x),y)] + 2\varepsilon.
\]
\end{lemma}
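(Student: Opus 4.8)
The plan is to fix a group $i\in[t]$, condition on the event $g(x)=i$, and compare the loss of the constant action $\tau'(i)$ with that of $c(x)$ via a first-order (subgradient) inequality. Since $f$ is a group objective/constraint we may write $f(x,a,y)=f'(g(x),a,y)$, so that for $x$ with $g(x)=i$ the relevant function of the action is $f'(i,\cdot,y)$, which is convex and $1$-Lipschitz. For each $y\in\{0,1\}$, pick a subgradient $d_{i,y}$ of $f'(i,\cdot,y)$ at the point $\tau'(i)$; $1$-Lipschitzness gives $\abs{d_{i,y}}\le 1$. Convexity then yields, for every $x$ with $g(x)=i$,
\[
f'(i,\tau'(i),y)\le f'(i,c(x),y) - d_{i,y}\,(c(x)-\tau'(i)).
\]

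Next I would take the conditional expectation of this inequality given $g(x)=i$; the first-order term becomes $-\E[d_{i,y}(c(x)-\tau'(i))\mid g(x)=i]$. Splitting the expectation according to $y=0$ and $y=1$ and using $\E[c(x)-\tau'(i)\mid g(x)=i]=0$ (which holds since $\tau'(i)=\E[c(x)\mid g(x)=i]$), a short computation collapses this term to $-(d_{i,1}-d_{i,0})\,E_i$, where $E_i:=\E[(y-\tau(i))c(x)\mid g(x)=i]$; here one rewrites $\E[y(c(x)-\tau'(i))\mid g(x)=i]$ as $E_i$ using $\E[(y-\tau(i))\mid g(x)=i]=0$ and $\E[(y-\tau(i))\tau'(i)\mid g(x)=i]=0$. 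Since $\abs{d_{i,1}-d_{i,0}}\le 2$, this gives the per-group bound
\[
\E[f(x,\tau'(g(x)),y)\mid g(x)=i]\le \E[f(x,c(x),y)\mid g(x)=i] + 2\abs{E_i}.
\]
Multiplying by $\Pr[g(x)=i]$ and summing over $i\in[t]$, the left side becomes $\E_{(x,y)\sim\cD}[f(x,\tau'(g(x)),y)]$, the first right-hand term becomes $\E_{(x,y)\sim\cD}[f(x,c(x),y)]$, and the error term becomes $2\sum_{i\in[t]}\abs{\E_{(x,y)\sim\cD}[(y-\tau(i))c(x)\one(g(x)=i)]}\le 2\varepsilon$ by hypothesis.

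The one step that requires care is isolating the correlation between $c(x)$ and $y$ inside each group: this is exactly where the hypothesis is consumed. The subgradient term would vanish if $c(x)$ and $y$ were independent conditioned on $g(x)=i$, and in general it is controlled precisely by $\sum_i\abs{\E[(y-\tau(i))c(x)\one(g(x)=i)]}$. Note that only $1$-Lipschitzness of $f$ in the action is needed (to bound $\abs{d_{i,y}}\le 1$), not a bound on $\partial f$ itself; the factor $2$ in the conclusion is the price of allowing the subgradient at $\tau'(i)$ to differ between $y=0$ and $y=1$, contributing $\abs{d_{i,1}-d_{i,0}}\le 2$ rather than $1$.
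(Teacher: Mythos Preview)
Your argument is correct. The paper does not give its own proof here---it simply notes that the lemma is the group-wise version of Theorem~19 in the cited reference and that the same proof applies---and your subgradient/first-order convexity argument is precisely that standard argument.
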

\Cref{lm:c-hat} is essentially Theorem 19 in \citep{GopalanKRSW22}. The only difference is that in \citep{GopalanKRSW22}, the function $f$ is not allowed to depend on $x$, whereas in \Cref{lm:c-hat}, we allow $f$ to depend on the group index $g(x)$ of $x$. The proof in \citep{GopalanKRSW22} can be used here without any essential change.
\begin{lemma}
\label{lm:convex-linear-1}
In the setting of \Cref{thm:convex-linear}, for every $c\in \cC$, 
there exists $c'\in \cC_p(g)$ such that for every convex $1$-Lipschitz group objective/constraint $f:X\times A\times\{0,1\}\to \bR$ with $1$-bounded difference, it holds that
\[
\E_{(x,y)\sim \cD_p}f_0(x,c'(x),y) \le \E_{(x,y)\sim \cD}f_0(x,c(x),y) + \varepsilon/3.\label{eq:convex-linear-1-1}%
\]
\end{lemma}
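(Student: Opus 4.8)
The plan is to follow the same template as the proof of \Cref{lm:squared-linear-1}, but since here $\partial f(x,\cdot)$ need not be affine in the action, I will use group \emph{multicalibration} in place of group multiaccuracy, together with \Cref{lm:c-hat}. The key device is to pass to the \emph{refined} partition indexed by the pair $(g(x),p(x))$: let $g':X\to [t]\times\mathsf{range}(p)$ be defined by $g'(x) = (g(x),p(x))$, and define $c'$ by averaging $c$ over the cells of $g'$, i.e.\ $\tau'(g'(x)) := \E_{(x',y')\sim\cD}[c(x')\mid g'(x') = g'(x)]$ and $c'(x) := \tau'(g'(x))$. Since $\cD$ and $\cD_p$ have the same $x$-marginal this conditional average is unambiguous, $c'$ has range in $[0,1]$, and by construction $c'\in \cC_p(g)$. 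Note that any convex, $1$-Lipschitz group objective/constraint $f$ with $1$-bounded difference w.r.t.\ $g$ is also one w.r.t.\ the finer partition $g'$, since $g(x)$ is a function of $g'(x)$. The same $c'$ will work for every such $f$ at once, because its definition uses only $c$, $g$, and $p$.

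First I would move from $\cD_p$ back to $\cD$ for the function $c'$. By \Cref{claim:partial},
\[
\E_{(x,y)\sim \cD_p}f(x,c'(x),y) = \E_{(x,y)\sim \cD}f(x,c'(x),y) - \E_{(x,y)\sim \cD}[(y - p(x))\partial f(x,c'(x))].
\]
Since $c'(x)$ depends only on $(g(x),p(x))$ and $\partial f(x,a) = \partial f'(g(x),a)\in[-1,1]$ (using $1$-bounded difference), the map $x\mapsto \partial f(x,c'(x))$ has the form $\sigma(g(x),p(x))$ for some $\sigma:[t]\times[0,1]\to[-1,1]$, so $p\in\grpcal_\cD(g,\varepsilon/15)$ bounds the last term by $\varepsilon/15$ in absolute value, giving $\E_{\cD_p}f(x,c'(x),y)\le \E_\cD f(x,c'(x),y) + \varepsilon/15$.

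Next I would bound $\E_\cD f(x,c'(x),y)$ by $\E_\cD f(x,c(x),y)$ using \Cref{lm:c-hat} with the partition $g'$. To check its hypothesis, write $\tau(i,v) := \E_\cD[y\mid g(x)=i,p(x)=v]$; on a cell $\{g(x)=i,p(x)=v\}$ we have $p(x)=v$, so $\E_\cD[(y-\tau(i,v))c(x)\one(g(x)=i,p(x)=v)]$ differs from $\E_\cD[(y-p(x))c(x)\one(g(x)=i,p(x)=v)]$ by $(v-\tau(i,v))\E_\cD[c(x)\one(g(x)=i,p(x)=v)]$, whose absolute value is at most $|v-\tau(i,v)|\Pr[g(x)=i,p(x)=v] = |\E_\cD[(y-p(x))\one(g(x)=i,p(x)=v)]|$ (since $|c|\le 1$). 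Summing over cells and applying $\grpmc_\cD(\cC,g,\varepsilon/15)$ (with $h=c$) to the first piece and $\grpcal_\cD(g,\varepsilon/15)$ (with cell-indicator signs) to the second piece bounds the total by $2\varepsilon/15$. \Cref{lm:c-hat} then yields $\E_\cD f(x,c'(x),y) \le \E_\cD f(x,c(x),y) + 4\varepsilon/15$ because $\tau'(g'(x)) = c'(x)$. Combining with the previous paragraph gives $\E_{\cD_p}f(x,c'(x),y) \le \E_\cD f(x,c(x),y) + 5\varepsilon/15 = \E_\cD f(x,c(x),y) + \varepsilon/3$, as required.

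The main obstacle I anticipate is the bookkeeping in the previous step: \Cref{lm:c-hat} is phrased in terms of the true cell-conditional mean $\tau(i,v) = \E_\cD[y\mid g(x)=i,p(x)=v]$, whereas group multicalibration and group calibration are phrased in terms of the prediction value $p(x)=v$, and reconciling the two costs an extra $\varepsilon/15$; one must track the constants so that $\grpmc$ and $\grpcal$ at precision $\varepsilon/15$ compound to exactly $\varepsilon/3$. A secondary point, which is immediate from the definitions, is that refining $g$ to $g' = (g,p)$ preserves the ``convex, $1$-Lipschitz, $1$-bounded difference'' hypotheses and keeps $c'$ inside $\cC_p(g)$ rather than some larger class.
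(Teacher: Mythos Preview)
Your proof is correct and follows essentially the same approach as the paper: define $c'$ by averaging $c$ over the cells $\{g(x)=i,p(x)=v\}$, use $\grpcal$ to pass between $\cD$ and $\cD_p$ for $c'$ (costing $\varepsilon/15$), and combine $\grpmc$ with $\grpcal$ to verify the hypothesis of \Cref{lm:c-hat} on the refined partition (costing $2\varepsilon/15$, hence $4\varepsilon/15$ from the lemma). The only difference is cosmetic---the paper applies \Cref{lm:c-hat} first and then moves to $\cD_p$, whereas you do these two steps in the opposite order---but the ingredients and the arithmetic $4\varepsilon/15 + \varepsilon/15 = \varepsilon/3$ are identical.
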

\begin{proof}
We fix an arbitrary $c\in \cC$ and define $\tau,\tau':[t]\times [0,1] \to [0,1]$ such that $\tau(i,v) = \E[y|g(x) = i,p(x) = v]$ and $\tau'(i,v) = \E[c(x)|g(x) = i,p(x) = v]$ for every $(i,v)\in [t]\times [0,1]$.

By our assumption that $p\in \grpcal(g,\varepsilon/15)$,
\[
\E_{(x,y)\sim \cD}|p(x) - \tau(g(x),p(x))| \le \varepsilon/15.
\]
By our assumption that $p\in \grpmc(\cC,g,\varepsilon/15)$,
\[
\sum_{i\in [t]}\sum_{v\in \mathsf{range}(p)}|\E_{(x,y)\sim \cD}[(y - p(x))c(x)\one(g(x) = i,p(x) = v)]| \le \varepsilon/15.
\]
Combining the inequalities above,
\[
\sum_{i\in [t]}\sum_{v\in \mathsf{range}(p)}|\E_{(x,y)\sim \cD}[(y - \tau(g(x),p(x)))c(x)\one(g(x) = i,p(x) = v)]|\le 2\varepsilon/15.
\] 
Define $c':X\to A$ such that $c'(x) = \tau'(g(x),p(x))$ for every $x\in X$. Taking the groups in \Cref{lm:c-hat} to be $\{x\in X:g(x) = i,p(x) = v\}$ here for $(i,v)\in [t]\times \mathsf{range}(p)$, we have
\begin{equation}
\label{eq:convex-linear-1-3}
\E_{(x,y)\sim \cD}f(x,c'(x),y) \le \E_{(x,y)\sim \cD}f(x,c(x),y) + 4\varepsilon/15.
\end{equation}
By \Cref{claim:partial},
\begin{equation}
\label{eq:convex-linear-1-4}
\E_{(x,y)\sim \cD}f(x,c'(x),y) - \E_{(x,y)\sim \cD_p}f(x,c'(x),y) = \E_{(x,y)\sim \cD}[(y - p(x))\partial f(x,c'(x))].%
\end{equation}
Since we assume that $f$ is a group objective/constraint and it has $1$-bounded difference, there exists $\tau'':[t]\times A\times \to [-1,1]$ such that $\partial f(x,a) = \tau''(g(x),a)$. By our definition $c'(x) = \tau'(g(x),p(x))$,
\[
\E_{(x,y)\sim \cD}[(y - p(x))\partial f(x,c'(x))] = \E_{(x,y)\sim \cD}[(y - p(x))\tau''(g(x),\tau'(g(x),p(x)))].
\]
By our assumption that $p\in \grpcal(g,\varepsilon/15)$,
\begin{equation}
\label{eq:convex-linear-1-5}
\E_{(x,y)\sim \cD}[(y - p(x))\tau''(g(x),\tau'(g(x),p(x)))] \ge -\varepsilon/15.
\end{equation}
Combining \eqref{eq:convex-linear-1-3}, \eqref{eq:convex-linear-1-4}, and \eqref{eq:convex-linear-1-5} proves \eqref{eq:convex-linear-1-1}. 
\end{proof}
\begin{lemma}
\label{lm:convex-linear-2}
In the setting of \Cref{thm:convex-linear}, for every $c\in \cC_p(g)$, for every convex $1$-Lipschitz group objective/constraint $f:X\times A\times\{0,1\}\to \bR$ with $1$-bounded difference, it holds that
\[
\E_{(x,y)\sim \cD}f(x,c(x),y) \le \E_{(x,y)\sim \cD_p}f(x,c(x),y) + \varepsilon/3.
\]
\end{lemma}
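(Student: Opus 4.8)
\textbf{Proof proposal for Lemma~\ref{lm:convex-linear-2}.}

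The plan is to mirror the argument of \Cref{lm:squared-linear-2}: this direction is the ``easy'' one because $c$ already lies in $\cC_p(g)$, so $c(x)$ depends only on $g(x)$ and $p(x)$, and we will see that neither convexity nor the Lipschitz property is actually needed here — only the group structure and the $1$-bounded difference. First I would apply \Cref{claim:partial} with the given $f$ and $c$ to rewrite the quantity of interest as
\[
\E_{(x,y)\sim \cD}f(x,c(x),y) - \E_{(x,y)\sim \cD_p}f(x,c(x),y) = \E_{(x,y)\sim \cD}[(y - p(x))\partial f(x,c(x))].
\]
So it suffices to upper bound $\E_{(x,y)\sim \cD}[(y - p(x))\partial f(x,c(x))]$ by $\varepsilon/3$.

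Next I would unpack the two structural hypotheses. Since $f$ is a group objective/constraint with $1$-bounded difference (\Cref{def:group-constraint,def:lipschitz}), there is a function $\tau'':[t]\times A\to [-1,1]$ with $\partial f(x,a) = \tau''(g(x),a)$ for all $(x,a)$. Since $c\in \cC_p(g)$, there is $\tau:[t]\times [0,1]\to A$ with $c(x) = \tau(g(x),p(x))$ for all $x$. Composing, $\partial f(x,c(x)) = \tau''(g(x),\tau(g(x),p(x))) =: w(x,p(x))$, where $w(x,v) := \tau''(g(x),\tau(g(x),v))$ has the form $\tau'''(g(x),v)$ for some $\tau''':[t]\times[0,1]\to[-1,1]$. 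This is exactly the class of test functions appearing in the definition of $\grpcal$, so the assumption $p\in \grpcal_\cD(g,\varepsilon/15)$ gives
\[
\E_{(x,y)\sim \cD}[(y - p(x))\partial f(x,c(x))] = \E_{(x,y)\sim \cD}[(y - p(x))\tau'''(g(x),p(x))] \le \varepsilon/15 \le \varepsilon/3,
\]
which combined with the displayed identity above completes the proof.

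I do not anticipate a genuine obstacle: the only thing to be careful about is bookkeeping the range of $\tau'''$, i.e.\ checking that $1$-bounded difference is precisely what makes $\tau'''$ land in $[-1,1]$ rather than a larger interval (contrast this with \Cref{lm:squared-linear-2}, where the range is $[-2,2]$ and the slack $\varepsilon/6$ is chosen accordingly). Everything else is the same two-line template — rewrite via \Cref{claim:partial}, observe the error term is a $\grpcal$ test function, invoke group calibration. Together with \Cref{lm:convex-linear-1}, this establishes \eqref{eq:approach-1} and \eqref{eq:approach-2} of \Cref{lm:approach} in the setting of \Cref{thm:convex-linear}, so \Cref{thm:convex-linear} then follows exactly as in the proof of \Cref{thm:squared-linear}.
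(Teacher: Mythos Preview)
Your proposal is correct and follows essentially the same approach as the paper: the paper's proof simply remarks that one mirrors \Cref{lm:squared-linear-2}, replacing the range $[-2,2]$ (from ``special'') by $[-1,1]$ (from $1$-bounded difference) and using $p\in \grpcal_\cD(g,\varepsilon/15)\subseteq \grpcal_\cD(g,\varepsilon/3)$. Your write-up spells out exactly these steps, including the observation that convexity and Lipschitzness are not used here.
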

\begin{proof}
The proof is similar to the proof of \Cref{lm:squared-linear-2} and we omit the details. In the proof of \Cref{lm:squared-linear-2}, we use the assumption that $p\in \grpcal(g,\varepsilon/6)$ and the fact that 
there exists $\tau:[t]\times A\to [-2,2]$ such that $\partial f(x,a)=\tau(g(x),a)$.
For our $f$ with $1$-bounded difference, we can similarly take $\tau:[t]\times A\to [-1,1]$ and use our assumption that
$p\in \grpcal(g,\varepsilon/15)\subseteq \grpcal(g,\varepsilon/3)$.
\end{proof}
\begin{proof}[Proof of \Cref{thm:convex-linear}]
The proof is completed by applying \Cref{lm:approach} to the setting of \Cref{thm:convex-linear} and observing that \eqref{eq:approach-1} and \eqref{eq:approach-2} in \Cref{lm:approach} can be established by \Cref{lm:convex-linear-1} and \Cref{lm:convex-linear-2}, respectively.
\end{proof}
\subsection{Proof of Theorem~\ref{thm:general}}
\thmgeneral*
We first prove two helper lemmas below and then prove \Cref{thm:general}.
\begin{lemma}
\label{lm:general-1}
In the setting of \Cref{thm:general}, for every $c\in \cC$, there exists $c'\in \cC_p^{\mathsf{rand}}(g)$ such that for every group objective/constraint $f:X\times A\times \{0,1\}\to \bR$ with $1$-bounded difference, it holds that
\begin{align*}
\E_{(x,y)\sim \cD_p}\E_{a\sim c'(x)}f(x,a,y) & \le \E_{(x,y)\sim \cD}f(x,c(x),y) + \varepsilon/3.
\end{align*}
\end{lemma}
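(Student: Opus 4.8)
The plan is to take the non-convex analogue of the construction used in \Cref{lm:squared-linear-1}: instead of \emph{averaging} the actions of $c$ inside each level set, we \emph{copy their distribution}. Concretely, fix $c\in\cC$; since the action set $A$ is finite, for each level set $\{x:g(x)=i,\ p(x)=v\}$ of positive $\cD$-probability let $\tau(i,v)\in\Delta_A$ be the conditional law of $c(x)$ given $g(x)=i$ and $p(x)=v$, extend $\tau$ arbitrarily to all of $[t]\times[0,1]$, and set $c'(x):=\tau(g(x),p(x))$. Then $c'\in\cC_p^{\mathsf{rand}}(g)$ by \Cref{def:c-p}.

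The core of the argument is the identity
\[
\E_{(x,y)\sim\cD_p}\E_{a\sim c'(x)}f(x,a,y)=\E_{(x,y)\sim\cD}f(x,c(x),y)-\E_{(x,y)\sim\cD}\big[(y-p(x))\,\partial f(x,c(x))\big],
\]
which I would establish for every group objective/constraint $f$ in two steps. First, expand $f(x,a,y)=f(x,a,0)+y\,\partial f(x,a)$; under $\cD_p$ the label is drawn from $\ber(p(x))$ independently of $a\sim c'(x)$ given $x$, so the left-hand side equals $\E_{x\sim\cD}\E_{a\sim c'(x)}f(x,a,0)+\E_{x\sim\cD}\big[p(x)\,\E_{a\sim c'(x)}\partial f(x,a)\big]$. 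Second, because $f$ is a group objective/constraint, both $f(x,a,0)$ and $\partial f(x,a)$ depend on $x$ only through $g(x)$, and $c'$ was built so that its conditional law on each level set $\{g=i,\,p=v\}$ coincides with that of $c$; conditioning on level sets and comparing cell averages therefore turns these two terms into $\E_\cD f(x,c(x),0)$ and $\E_\cD[p(x)\,\partial f(x,c(x))]$ respectively. Subtracting the elementary decomposition $\E_\cD f(x,c(x),y)=\E_\cD f(x,c(x),0)+\E_\cD[y\,\partial f(x,c(x))]$ gives the identity; this is the randomized, ``cross-distribution'' counterpart of \Cref{claim:partial}.

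It then remains to bound the residual $\big|\E_\cD[(y-p(x))\,\partial f(x,c(x))]\big|$ by $\varepsilon/3$, and this is exactly where group level-set multiaccuracy enters. Since $f$ has $1$-bounded difference (\Cref{def:lipschitz}) and is a group function, there is $\tau_f:[t]\times A\to[-1,1]$ with $\partial f(x,a)=\tau_f(g(x),a)$, hence $\partial f(x,c(x))=\tau_f(g(x),c(x))$. Regarding $c\in\cC$ as a map $X\to A$, the function $(x,v)\mapsto\tau_f(g(x),c(x))$ is one of the test functions in the $\genmc$ formulation of $\grplma_\cD(\cC,g,\varepsilon/3)$, so the hypothesis $p\in\grplma_\cD(\cC,g,\varepsilon/3)$ yields $\big|\E_\cD[(y-p(x))\,\tau_f(g(x),c(x))]\big|\le\varepsilon/3$. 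Combining with the identity gives $\E_{\cD_p}\E_{a\sim c'(x)}f(x,a,y)\le\E_\cD f(x,c(x),y)+\varepsilon/3$, which is the claim. Note that $\grpcal$ plays no role in this direction; it will only be needed for the companion inequality (the analogue of \Cref{lm:squared-linear-2}).

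The hard part is the second step of the identity. One is tempted to argue $\E_\cD\E_{a\sim c'(x)}f(x,a,y)=\E_\cD f(x,c(x),y)$ directly because $c'$ reproduces the level-set distribution of $c$, but this is false: within a level set, $c(x)$ and $y$ are correlated (both depend on the fine-grained $x$), whereas $a\sim c'(x)$ is conditionally independent of $y$ given $x$. Routing the comparison through the simulated distribution $\cD_p$, where $y$ is re-sampled from $\ber(p(x))$ and is hence automatically decoupled from the action, is what makes the bookkeeping go through; the price is precisely the term $\E_\cD[(y-p(x))\,\partial f(x,c(x))]$, which has exactly the shape that group level-set multiaccuracy controls.
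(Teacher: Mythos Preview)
Your proposal is correct and follows essentially the same approach as the paper. The paper separates your identity into two pieces---first applying \Cref{claim:partial} to the deterministic $c$ to obtain $\E_{\cD_p}f(x,c(x),y)\le\E_\cD f(x,c(x),y)+\varepsilon/3$ via $\grplma$, and then observing that $\E_{\cD_p}\E_{a\sim c'(x)}f(x,a,y)=\E_{\cD_p}f(x,c(x),y)$ because under $\cD_p$ the label $y$ is conditionally independent of $c(x)$ given $(g(x),p(x))$---whereas you fold these into a single identity; the construction of $c'$, the use of $\grplma$ on $\tau_f(g(x),c(x))$, and your observation that $\grpcal$ is unused in this direction all match the paper exactly.
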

\begin{proof}
By \Cref{claim:partial}, 
\begin{equation}
\label{eq:general-1}
\E_{(x,y)\sim \cD}f(x,c(x),y) - \E_{(x,y)\sim \cD_p}f(x,c(x),y) = \E_{(x,y)\sim \cD_p}[(y - p(x))\partial f(x,c(x))].
\end{equation}
Since we assume that $f$ is a group objective/constraint and it has $1$-bounded difference, there exists $\tau:[t]\times A\to [-1,1]$ such that $\partial f(x,a) = \tau(g(x),a)$. By our assumption that $p\in \grplma(\cC,g,\varepsilon/3)$,
\begin{equation}
\label{eq:general-2}
\E_{(x,y)\sim \cD_p}[(y - p(x))\partial f(x,c(x))] = \E_{(x,y)\sim \cD_p}[(y - p(x))\tau(g(x),c(x)))] \ge -\varepsilon/3.
\end{equation}
Combining \eqref{eq:general-1} and \eqref{eq:general-2}, we have
\begin{equation}
\label{eq:general-3}
\E_{(x,y)\sim \cD_p}f(x,c(x),y) \le \E_{(x,y)\sim \cD}f(x,c(x),y) + \varepsilon/3.
\end{equation}
Now we define $\tau':[t]\times [0,1] \to \Delta_A$ such that $\tau'(i,v)$ is the conditional distribution of $c(x)$ given $g(x) = i$ and $p(x) = v$. We define $c':X\to \Delta_A$ such that $c'(x) = \tau'(g(x),c(x))$. Since $f$ is a group objective/constraint, there exists $\tau'':[t]\times A\times\{-1,1\}\to \bR$ such that $f(x,a,y) = \tau''(g(x),a,y)$. Now we have
\begin{align}
\E_{(x,y)\sim \cD_p}f(x,c(x),y) & = \E[\E[f(x,c(x),y)|g(x),p(x)]]\notag\\
& = \E[\E[\tau''(g(x),c(x),y)|g(x),p(x)]]\notag \\
& = \E_x\left[\E_{a\sim \tau(g(x),p(x)),y\sim \ber(p(x))}[\tau''(g(x),a,y)]\right]\notag \\
& = \E_{(x,y)\sim \cD_p}\E_{a\sim c'(x)}[f(x,a,y)].\label{eq:general-4}
\end{align}
Combining \eqref{eq:general-3} and \eqref{eq:general-4} completes the proof.
\end{proof}
\begin{lemma}
\label{lm:general-2}
In the setting of \Cref{thm:general}, for every $c\in \cC_p^{\mathsf{rand}}(g)$, for every group objective/constraint $f:X\times A\times \{0,1\}\to \bR$ with $1$-bounded difference, it holds that
\begin{align}
\E_{(x,y)\sim \cD}\E_{a\sim c(x)}f(x,a,y) & \le \E_{(x,y)\sim \cD_p}\E_{a\sim c(x)}f(x,a,y) + \varepsilon/3.\label{eq:general-2-2}
\end{align}
\end{lemma}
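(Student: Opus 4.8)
The plan is to mirror the proof of \Cref{lm:general-1} in the reverse direction, using the group calibration hypothesis $p\in\grpcal_\cD(g,\varepsilon/3)$ in place of the level-set multiaccuracy hypothesis.

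First I would establish a randomized-action version of \Cref{claim:partial}: for any randomized action function $c:X\to\Delta_A$ and any $f:X\times A\times\{0,1\}\to\bR$,
\[
\E_{(x,y)\sim\cD}\E_{a\sim c(x)}f(x,a,y) - \E_{(x,y)\sim\cD_p}\E_{a\sim c(x)}f(x,a,y) = \E_{(x,y)\sim\cD}\bigl[(y-p(x))\,\E_{a\sim c(x)}\partial f(x,a)\bigr].
\]
This follows exactly as in \Cref{claim:partial} by writing $f(x,a,y)=f(x,a,0)+y\,\partial f(x,a)$, taking the expectation over $a\sim c(x)$ inside, and using that $\cD$ and $\cD_p$ share the same marginal on $x$ (and that the randomness of $c$ depends only on $x$), so that the $f(x,a,0)$ terms cancel and only the $y\,\partial f$ terms survive, contributing the factor $(\Pr_\cD[y=1\mid x]-p(x))$.

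Next I would exploit the structure of $c$ and $f$ to reduce the right-hand side to a group-calibration test. Since $f$ is a group objective/constraint with $1$-bounded difference, there is $\tau'':[t]\times A\to[-1,1]$ with $\partial f(x,a)=\tau''(g(x),a)$; since $c\in\cC_p^{\mathsf{rand}}(g)$, there is $\tau:[t]\times[0,1]\to\Delta_A$ with $c(x)=\tau(g(x),p(x))$. Hence $\E_{a\sim c(x)}\partial f(x,a)=\rho(g(x),p(x))$, where $\rho(i,v):=\E_{a\sim\tau(i,v)}\tau''(i,a)$ takes values in $[-1,1]$. The function $w(x,v):=\rho(g(x),v)$ is therefore an admissible witness in the class defining $\grpcal_\cD(g,\varepsilon/3)$, so the calibration hypothesis gives $\bigl|\E_{(x,y)\sim\cD}[(y-p(x))\rho(g(x),p(x))]\bigr|\le\varepsilon/3$. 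Combining this with the identity above yields \eqref{eq:general-2-2}.

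I do not anticipate a genuine obstacle: the only care needed is (i) keeping the expectation over $a\sim c(x)$ inside when deriving the randomized analogue of \Cref{claim:partial}, and (ii) verifying that $\rho(g(x),\cdot)$ has range $[-1,1]$ so that it is a legitimate test function for group calibration at error $\varepsilon/3$. With \Cref{lm:general-1} and \Cref{lm:general-2} in hand, \Cref{thm:general} follows from \Cref{lm:approach} just as \Cref{thm:squared-linear} and \Cref{thm:convex-linear} do.
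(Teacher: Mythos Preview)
Your argument is correct. The core strategy matches the paper's: reduce the difference of the two expectations to a correlation with $(y-p(x))$ against a function depending only on $(g(x),p(x))$ with range in $[-1,1]$, and invoke $p\in\grpcal_\cD(g,\varepsilon/3)$.

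The execution differs slightly from the paper. The paper handles the randomness of $c$ by a coupling (derandomization) step: it first draws a random \emph{deterministic} map $\tau':[t]\times[0,1]\to A$ with $\tau'(i,v)\sim\tau(i,v)$ independently over $(i,v)$, reduces to proving the inequality for each fixed $\tau'$, and then applies the deterministic \Cref{claim:partial} directly. You instead push the expectation over $a\sim c(x)$ inside to obtain a randomized analogue of \Cref{claim:partial}, and then observe that the averaged quantity $\rho(i,v)=\E_{a\sim\tau(i,v)}\tau''(i,a)$ is itself a $[-1,1]$-valued function of $(g(x),p(x))$. Your route is a little more direct and avoids the coupling argument; the paper's route has the minor advantage of reusing \Cref{claim:partial} verbatim without restating a randomized version. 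Either way the calibration witness lands in the class defining $\grpcal$, so both proofs go through.
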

\begin{proof}
By our assumption $c\in \cC_p^{\mathsf{rand}}(g)$, there exists $\tau:[t]\times[0,1]\to \Delta_A$ such that $c(x) = \tau(g(x),p(x))$ for every $x\in X$. Consider the joint distribution of $(x,a,y)$ where $(x,y)\sim \cD$ and $a\sim c(x)$. This distribution can be equivalently defined as follows. We first construct a function $\tau':[t]\times [0,1]\to A$ at random, where $\tau'(i,v)\in A$ is drawn independently from the distribution $\tau(i,v)\in \Delta_A$ for every $(i,v)\in [t]\times [0,1]$. We then draw $(x,y)\sim \cD$ and choose $c(x)= \tau'(g(x),p(x))$. This equivalent construction also works when we replace $\cD$ with $\cD_p$. Therefore, to prove \eqref{eq:general-2-2}, it suffices to prove that for every $\tau':[t]\times[0,1]\to A$,
\begin{align}
\E_{(x,y)\sim \cD}f(x,\tau'(g(x),p(x)),y) & \le \E_{(x,y)\sim \cD_p}f(x,\tau'(g(x),p(x)),y) + \varepsilon/3.\label{eq:general-2-4}
\end{align}
By \Cref{claim:partial},
\begin{align}
\label{eq:general-2-5}
& \E_{(x,y)\sim \cD}f(x,\tau'(g(x),p(x)),y) - \E_{(x,y)\sim \cD_p}f(x,\tau'(g(x),p(x)),y)\notag \\
= {} & \E_{(x,y)\sim \cD}[(y - p(x))\partial f(x,\tau'(g(x),p(x)))].
\end{align}
Since we assume that $f$ is a group objective/constraint and it has $1$-bounded difference, there exists $\tau'':[t]\times A\to [-1,1]$ such that $f(x,a) = \tau''(g(x),a)$. Therefore,
\begin{align}
& \E_{(x,y)\sim \cD}[(y - p(x))\partial f(x,\tau'(g(x),p(x)))]\notag \\
= {} & \E_{(x,y)\sim \cD}[(y - p(x))\tau''(g(x),\tau'(g(x),p(x)))]\notag \\
\le {} & \varepsilon/3,\label{eq:general-2-6}
\end{align}
where the last inequality follows from our assumption $p\in \grpcal(g,\varepsilon/3)$. Combining \eqref{eq:general-2-5} and \eqref{eq:general-2-6} proves \eqref{eq:general-2-4}.
\end{proof}
\begin{proof}[Proof of \Cref{thm:general}]
The proof is completed by applying \Cref{lm:approach} to the setting of \Cref{thm:general} and observing that \eqref{eq:approach-1} and \eqref{eq:approach-2} in \Cref{lm:approach} can be established by \Cref{lm:general-1} and \Cref{lm:general-2}, respectively.
\end{proof}
\subsection{Variant of Theorem~\ref{thm:squared-linear}}
\begin{theorem}
\label{thm:variant}
Let $\cD$ be a distribution over $X\times \{0,1\}$.
Let $A=[0,1]$ be an action set.
Let $\cT$ be a class of tasks that only have group constraints and group objectives that are all special. Let $\cC$ be a class of functions $c:X\to A$. Let $p$ be a predictor in $\grpma_\cD(\cC,g,\varepsilon/6)\cap \grpcal_\cD(g, \varepsilon/6)$ and define $\cC_p^{\mathsf{rand}}(g)$ as in \Cref{def:c-p}. Then $p$ is a $(\cT,\cC,\cC_p(g),\varepsilon)$-omnipredictor on $\cD$.
\end{theorem}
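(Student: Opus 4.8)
The plan is to apply \Cref{lm:approach} with the two classes $\cC$ and $\cC_p^{\mathsf{rand}}(g)$, so that it suffices to verify conditions \eqref{eq:approach-1} and \eqref{eq:approach-2} for every special group objective/constraint $f$. The argument will parallel the proof of \Cref{thm:squared-linear}, the one substantive change being that, lacking convexity of $f$, we may no longer replace a hypothesis $c\in\cC$ by the \emph{deterministic} function $x\mapsto\E[c(x)\mid g(x),p(x)]$; instead we replace it by the \emph{conditional law} of $c(x)$ given $(g(x),p(x))$, which lies in $\cC_p^{\mathsf{rand}}(g)$, and we exploit the fact that a group function sees $x$ only through $g(x)$. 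This step is where the only real subtlety lies; everything else is a routine re-run of the proof of \Cref{thm:squared-linear}.

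For condition \eqref{eq:approach-1}, fix $c\in\cC$ and define $\tau:[t]\times[0,1]\to\Delta_A$ by letting $\tau(i,v)$ be the conditional distribution of $c(x)$ given $g(x)=i$ and $p(x)=v$; set $c'(x)=\tau(g(x),p(x))$, so $c'\in\cC_p^{\mathsf{rand}}(g)$. The first step is to check $\E_{(x,y)\sim\cD_p}\E_{a\sim c'(x)}f(x,a,y)=\E_{(x,y)\sim\cD_p}f(x,c(x),y)$: writing $f(x,a,y)=f'(g(x),a,y)$ and using that $y\sim\ber(p(x))$ under $\cD_p$, both sides equal the expectation over the push-forward of $(g(x),p(x))$ of $\E_{a\sim\tau(g(x),p(x))}\E_{y\sim\ber(p(x))}f'(g(x),a,y)$ once we condition on $(g(x),p(x))$ (for the right-hand side this uses that $\tau(i,v)$ is precisely the conditional law of $c(x)$). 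The second step compares $\E_{\cD_p}f(x,c(x),y)$ with $\E_{\cD}f(x,c(x),y)$ via \Cref{claim:partial}: the gap is $\E_{(x,y)\sim\cD}[(y-p(x))\partial f(x,c(x))]$, and since $f$ is special, $\partial f(x,c(x))=\tau_1(g(x))+\tau_2(g(x))c(x)$ for some $\tau_1,\tau_2:[t]\to[-1,1]$. Group calibration ($p\in\grpcal_\cD(g,\varepsilon/6)$) bounds the $\tau_1$-term from below by $-\varepsilon/6$, and group multiaccuracy w.r.t.\ $\cC$ ($p\in\grpma_\cD(\cC,g,\varepsilon/6)$, with $h=c$ and multiplier $\tau_2$) bounds the $\tau_2$-term from below by $-\varepsilon/6$; hence $\E_{\cD_p}f(x,c(x),y)\le\E_{\cD}f(x,c(x),y)+\varepsilon/3$. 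Chaining the two steps gives \eqref{eq:approach-1}.

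For condition \eqref{eq:approach-2}, fix $c\in\cC_p^{\mathsf{rand}}(g)$, say $c(x)=\tau(g(x),p(x))$ with $\tau:[t]\times[0,1]\to\Delta_A$. Pushing the expectation over $a\sim c(x)$ inside (valid since $a$ is conditionally independent of $y$ given $x$) and applying \Cref{claim:partial}, the gap $\E_{\cD}\E_{a\sim c(x)}f(x,a,y)-\E_{\cD_p}\E_{a\sim c(x)}f(x,a,y)$ equals $\E_{(x,y)\sim\cD}[(y-p(x))\E_{a\sim c(x)}[\partial f(x,a)]]$. Since $f$ is special, $\E_{a\sim c(x)}[\partial f(x,a)]=\tau_1(g(x))+\tau_2(g(x))\E_{a\sim c(x)}[a]$, and because $A=[0,1]$ and $c(x)$ depends on $x$ only through $(g(x),p(x))$, this is a $[-2,2]$-valued function of $(g(x),p(x))$; writing it as $2$ times a $[-1,1]$-valued function and invoking $p\in\grpcal_\cD(g,\varepsilon/6)$ yields the bound $\varepsilon/3$, which is \eqref{eq:approach-2}. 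With both conditions established, \Cref{lm:approach} gives that $p$ is a $(\cT,\cC,\cC_p^{\mathsf{rand}}(g),\varepsilon)$-omnipredictor on $\cD$.
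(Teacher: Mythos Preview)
Your proposal is correct and follows essentially the same route as the paper's proof: both establish the two conditions of \Cref{lm:approach} by combining the $\partial f$ computation from \Cref{claim:partial} with the ``special'' decomposition $\partial f(x,a)=\tau_1(g(x))+\tau_2(g(x))a$, invoking $\grpcal$ and $\grpma$ exactly as in \Cref{lm:squared-linear-1}, and replacing the Jensen step by passing to the conditional law $\tau(i,v)$ of $c(x)$ given $(g(x),p(x))$ (this is the content of the paper's \Cref{lm:variant-1}, which simply cites \Cref{lm:general-1} for that step). The only cosmetic difference is in condition~\eqref{eq:approach-2}: the paper (via \Cref{lm:general-2}) first derandomizes $\tau$ into a deterministic $\tau'$ and then applies \Cref{claim:partial} pointwise, whereas you push the expectation over $a$ inside and apply \Cref{claim:partial} once to the averaged function; both arguments land on the same $[-2,2]$-valued function of $(g(x),p(x))$ and the same $\grpcal_\cD(g,\varepsilon/6)$ bound.
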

We first prove two helper lemmas below and then prove \Cref{thm:variant}.
\begin{lemma}
\label{lm:variant-1}
In the setting of \Cref{thm:variant}, for every $c\in \cC$, there exists $c'\in \cC_p^{\mathsf{rand}}(g)$ such that for every special group objective/constraint $f:X\times A\times \{0,1\}\to \bR$, it holds that
\begin{align*}
\E_{(x,y)\sim \cD_p}\E_{a\sim c'(x)}f(x,a,y) & \le \E_{(x,y)\sim \cD}f(x,c(x),y) + \varepsilon/3.
\end{align*}
\end{lemma}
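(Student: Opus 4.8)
The plan is to mimic the proof of \Cref{lm:squared-linear-1} up to the point where it invokes convexity (Jensen's inequality), and to replace that step by a randomization step that is exact and needs no convexity. Fix $c\in\cC$ and an arbitrary special group objective/constraint $f$. By \Cref{claim:partial},
\[
\E_{(x,y)\sim\cD}f(x,c(x),y) - \E_{(x,y)\sim\cD_p}f(x,c(x),y) = \E_{(x,y)\sim\cD}[(y-p(x))\partial f(x,c(x))].
\]
Since $f$ is special, $\partial f(x,c(x)) = \tau_1(g(x)) + \tau_2(g(x))c(x)$ for some $\tau_1,\tau_2:[t]\to[-1,1]$, so the right-hand side splits into a ``group-calibration part'' $\E_{(x,y)\sim\cD}[(y-p(x))\tau_1(g(x))]$ and a ``group-multiaccuracy part'' $\E_{(x,y)\sim\cD}[(y-p(x))\tau_2(g(x))c(x)]$. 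Using $p\in\grpcal_\cD(g,\varepsilon/6)$ for the first (with transformation $\tau_1$) and $p\in\grpma_\cD(\cC,g,\varepsilon/6)$ for the second (with $h=c$ and scalar transformation $\tau_2$), exactly as in \eqref{eq:squared-linear-2}, gives $\E_{(x,y)\sim\cD}[(y-p(x))\partial f(x,c(x))]\ge -\varepsilon/3$, hence
\[
\E_{(x,y)\sim\cD_p}f(x,c(x),y) \le \E_{(x,y)\sim\cD}f(x,c(x),y) + \varepsilon/3.
\]

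It remains to produce $c'\in\cC_p^{\mathsf{rand}}(g)$ with $\E_{(x,y)\sim\cD_p}\E_{a\sim c'(x)}f(x,a,y) = \E_{(x,y)\sim\cD_p}f(x,c(x),y)$; combined with the inequality just derived this proves the lemma. I would take $c'$ to be $c$ ``folded into level sets'': define $\tau':[t]\times[0,1]\to\Delta_A$ so that $\tau'(i,v)$ is the conditional distribution of $c(x)$ given $g(x)=i$ and $p(x)=v$, and set $c'(x)=\tau'(g(x),p(x))$, which lies in $\cC_p^{\mathsf{rand}}(g)$ by \Cref{def:c-p}. The desired identity is then the same computation as \eqref{eq:general-4} in the proof of \Cref{lm:general-1}: writing $f(x,a,y)=f'(g(x),a,y)$ and conditioning on $(g(x),p(x))$, one uses that under $\cD_p$ the outcome $y\sim\ber(p(x))$ depends on $x$ only through $p(x)$, hence $y$ is conditionally independent of $c(x)$ given $(g(x),p(x))$; so the freshly drawn $a\sim\tau'(g(x),p(x))$ and the actual value $c(x)$ have the same conditional law and are both conditionally independent of $y$, and therefore contribute the same conditional expectation of $f'(g(x),\cdot,y)$.

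The substantive point — and the only reason \Cref{thm:variant} uses $\cC_p^{\mathsf{rand}}(g)$ rather than $\cC_p(g)$ — is this last step: without convexity we can no longer replace $c(x)$ by its level-set mean via Jensen (that inequality points the wrong way), so we instead replace it by the level-set \emph{distribution}, which is exactly loss-preserving on $\cD_p$. Since that conditional-independence argument has already been spelled out in the proof of \Cref{lm:general-1}, I do not anticipate any real difficulty; the remainder is a line-by-line transcription of the proof of \Cref{lm:squared-linear-1} with the convexity line removed.
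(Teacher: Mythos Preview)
Your proposal is correct and follows essentially the same approach as the paper's own proof: the paper says to reuse the argument of \Cref{lm:squared-linear-1} to obtain $\E_{\cD_p}f(x,c(x),y)\le \E_\cD f(x,c(x),y)+\varepsilon/3$, and then to follow the proof of \Cref{lm:general-1} after \eqref{eq:general-3} to construct the randomized $c'$ and establish the exact identity via \eqref{eq:general-4}. You have simply spelled out these two referenced arguments explicitly, and your remark that the construction of $c'$ depends only on $c$ (not on $f$) is exactly what makes the same $c'$ work for all special $f$.
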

\begin{proof}
Using the same argument as in the proof of \Cref{lm:squared-linear-1}, we can show that
\[
\E_{(x,y)\sim \cD_p}f(x,c(x),y) \le \E_{(x,y)\sim \cD}f(x,c(x),y) + \varepsilon/3.
\]
This is the same as \eqref{eq:general-3} as in the proof of \Cref{lm:general-1}, and the rest of the proof follows the same argument as in the proof of \Cref{lm:general-1}.
\end{proof}
\begin{lemma}
\label{lm:variant-2}
In the setting of \Cref{thm:variant}, for every $c\in \cC_p^{\mathsf{rand}}(g)$, for every special group objective/constraint $f:X\times A\times \{0,1\}\to \bR$, it holds that
\begin{align*}
\E_{(x,y)\sim \cD}\E_{a\sim c(x)}f(x,a,y) & \le \E_{(x,y)\sim \cD_p}\E_{a\sim c(x)}f(x,a,y) + \varepsilon/3.
\end{align*}
\end{lemma}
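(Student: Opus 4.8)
The plan is to mirror the proof of \Cref{lm:general-2} almost verbatim, changing only how we bound the discrete derivative $\partial f$: here $f$ is \emph{special} rather than of bounded difference, but since the action set is $A=[0,1]$, specialness still gives $\partial f$ bounded (now by $2$ instead of $1$), which is exactly what the calibration-based estimate needs. Equivalently, one can first reduce to the deterministic case and then quote (the argument behind) \Cref{lm:squared-linear-2}, observing that that argument uses only group calibration and never convexity.

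First I would decouple the internal randomness of $c$. Since $c\in\cC_p^{\mathsf{rand}}(g)$ there is $\tau:[t]\times[0,1]\to\Delta_A$ with $c(x)=\tau(g(x),p(x))$, and as in \Cref{lm:general-2} the joint law of $(x,a,y)$ with $(x,y)\sim\cD$ and $a\sim c(x)$ can be realized by first sampling a deterministic map $\tau':[t]\times[0,1]\to A$ (with $\tau'(i,v)$ drawn independently from $\tau(i,v)$ for each $(i,v)$), then drawing $(x,y)\sim\cD$ and setting $a=\tau'(g(x),p(x))$; the same sampled $\tau'$ works with $\cD$ replaced by $\cD_p$, since the two distributions agree on the $x$-marginal and the choice of $a$ is independent of $y$ given $x$. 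Averaging over $\tau'$, it suffices to show for every fixed deterministic $\tau':[t]\times[0,1]\to A$ that
\[
\E_{(x,y)\sim\cD}f(x,\tau'(g(x),p(x)),y)\le\E_{(x,y)\sim\cD_p}f(x,\tau'(g(x),p(x)),y)+\varepsilon/3.
\]

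For this, I would apply \Cref{claim:partial}: the left side minus the right side (dropping the error term) equals $\E_{(x,y)\sim\cD}[(y-p(x))\partial f(x,\tau'(g(x),p(x)))]$. Since $f$ is special there are $\tau_1,\tau_2:[t]\to[-1,1]$ with $\partial f(x,a)=\tau_1(g(x))+\tau_2(g(x))a$, so $\partial f(x,\tau'(g(x),p(x)))$ is a function of the pair $(g(x),p(x))$ taking values in $[-2,2]$ (using $\tau'(g(x),p(x))\in A=[0,1]$). Hence the expectation is at most $2\cdot(\varepsilon/6)=\varepsilon/3$ by the assumption $p\in\grpcal_\cD(g,\varepsilon/6)$ (after scaling the test function into $[-1,1]$), which is exactly the desired bound — this is the same step as in the proof of \Cref{lm:squared-linear-2}. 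I do not anticipate a genuine obstacle; the only point deserving care is the randomness-decoupling step, where $\tau'$ must be sampled once and shared between $\cD$ and $\cD_p$ so that the per-$\tau'$ inequality can be averaged.
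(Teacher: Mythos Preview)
Your proposal is correct and matches the paper's own proof essentially verbatim: the paper likewise defers to the argument of \Cref{lm:general-2}, replacing the $1$-bounded-difference assumption there by the observation that a special $f$ with $A=[0,1]$ has $2$-bounded difference, which combines with $p\in\grpcal_\cD(g,\varepsilon/6)$ to give the required $\varepsilon/3$ bound. Your explicit write-up of the randomness-decoupling and the \Cref{claim:partial} step is exactly what the paper intends.
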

\begin{proof}
The proof follows the same argument as the proof of \Cref{lm:general-2}. In \Cref{lm:general-2}, we use the assumption that $p\in \grpcal(g,\varepsilon/3)$ and that $f$ has $1$-bounded difference. Here we have the assumption that $p\in \grpcal(g,\varepsilon/6)$, and since we assume $f$ is special and $A = [0,1]$, we know that $f$ has $2$-bounded difference.
\end{proof}
\begin{proof}[Proof of \Cref{thm:variant}]
The proof is completed by applying \Cref{lm:approach} to the setting of \Cref{thm:variant} and observing that \eqref{eq:approach-1} and \eqref{eq:approach-2} in \Cref{lm:approach} can be established by \Cref{lm:variant-1} and \Cref{lm:variant-2}, respectively.
\end{proof}
\section{Lipschitz Combination of Constraints}
\label{sec:lip}
We show that all our omniprediction results in \Cref{sec:omni} can be extended to more general constrained loss minimization tasks where we combine the constraints using a Lipschitz function. Specifically, we consider more general tasks where each task $T$ not only has an objective $f_0:X\times A\times \{0,1\}\to \bR$ and constraints $f_j:X\times A\times \{0,1\}$ for $j\in [m]$, but also has a combining function $\Gamma:\bR^m\to \bR$. The task $T$ corresponds to the following optimization problem:
\begin{align}
\minimize_{c:X\to A} \quad & \E_{(x,y)\sim \cD}f_0(x,c(x),y)\label{eq:lip-optimization} \\
\text{s.t.} \quad & \Gamma\left(\E_{(x,y)\sim \cD}f_1(x,c(x),y), \ldots,\E_{(x,y)\sim \cD}f_m(x,c(x),y)\right) \le 0.\nonumber
\end{align}
The task in \eqref{eq:target-optimization} can be viewed as a special case of \eqref{eq:lip-optimization} where $\Gamma$ is the max function: $\Gamma(r_1,\ldots,r_m) = \max(r_1,\ldots,r_m)$. For a task $T$ in the form of \eqref{eq:lip-optimization}, for $\beta\in \bR$ and $\varepsilon\in \bR_{\ge 0}$, we can again define $\sol_\cD(T,\beta,\varepsilon)$ to be the set of randomized action functions $c:X\to \Delta_A$ satisfying 
\begin{align*}
& \E_{(x,y)\sim \cD}\E_{a\sim c(x)}f_0(x,a,y) \le \beta, \quad\text{and}\\
& \Gamma\left(\E_{(x,y)\sim \cD}\E_{a\sim c(x)}f_1(x,a,y), \ldots,\E_{(x,y)\sim \cD}\E_{a\sim c(x)}f_m(x,a,y)\right) \le \varepsilon.
\end{align*}
Correspondingly, for a class $\cC$ consisting of functions $c:X\to \Delta_A$, we define 
\[
\opt_\cD(T,\cC,\varepsilon) := \inf\{\beta\in \bR:\cC\cap \sol_\cD(T,\beta,\varepsilon)\ne \emptyset\}.
\]
We can then similarly define omnipredictors for these tasks in the same way as in \Cref{def:omni}.

Here we focus on obtaining omnipredictors for tasks with Lipschitz combining functions $\Gamma$. We say $\Gamma$ is $\kappa$-Lipschitz (in the $\ell_\infty$ norm) if $|\Gamma(r_1,\ldots,r_m) - \Gamma(r_1',\ldots,r_m')| \le \kappa \max_{i\in [m]} |r_i - r_i'|$. For tasks with $1$-Lipschitz combining functions, we have the following analogue of \Cref{lm:approach}:
\begin{lemma}
\label{lm:lip}
Let $\cT$ be a class of constrained loss minimization tasks each having a $1$-Lipschitz combining function.
Let $\cC$ and $\cC_p$ be classes of action functions $f:X\to \Delta_A$
as in \Cref{def:omni}. If a predictor $p$ satisfies the following two properties for every $T\in \cT$, then $p$ is a $(\cT,\cC,\cC_p,\varepsilon)$-omnipredictor:
\begin{enumerate}
\item Let $f_0$ be the loss function of $T$ and $(f_j)_{j\in J}$ be the constraints of $T$. For every $c\in \cC$, there exists $c'\in \cC_p$ such that
\begin{align}
\E_{(x,y)\sim \cD_p}\E_{a\sim c'(x)}f_0(x,a,y)
\le {} & \E_{(x,y)\sim \cD}\E_{a\sim c(x)}f_0(x,a,y) + \varepsilon/3, \quad \text{and}\label{eq:lip-1}\\
\Big|\E_{(x,y)\sim \cD_p}\E_{a\sim c'(x)}f_j(x,a,y)
- {} & \E_{(x,y)\sim \cD}\E_{a\sim c(x)}f_j(x,a,y)\Big| \le \varepsilon/3 \quad \text{for every }j\in J.\label{eq:lip-2}
\end{align}
\item For every $c\in \cC_p$,
\begin{align}
\E_{(x,y)\sim \cD}\E_{a\sim c(x)}f_0(x,a,y) \le {} & \E_{(x,y)\sim \cD_p}\E_{a\sim c(x)}f_0(x,a,y) + \varepsilon/3,\quad \text{and}\label{eq:lip-3}\\
\Big|\E_{(x,y)\sim \cD}\E_{a\sim c(x)}f_j(x,a,y)
- {} & \E_{(x,y)\sim \cD_p}\E_{a\sim c(x)}f_j(x,a,y)\Big| \le \varepsilon/3 \quad\text{for every }j\in J.\label{eq:lip-4}
\end{align}
\end{enumerate}
\end{lemma}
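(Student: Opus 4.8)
The plan is to follow the proof of \Cref{lm:approach} almost verbatim, with the one change that wherever that proof used monotonicity of the max function to transfer a constraint bound, we instead use the $\ell_\infty$-Lipschitzness of the combining function $\Gamma$. So fix an arbitrary $T\in\cT$ with combining function $\Gamma$ (which is $1$-Lipschitz in $\ell_\infty$), objective $f_0$, and constraints $f_j$ indexed by $j\in J$; set $\beta^*:=\opt_\cD(T,\cC,0)$ and $\beta:=\opt_{\cD_p}(T,\cC_p,\varepsilon/3)$, both assumed to be in $\bR$. The target, per \Cref{def:omni}, is $\cC_p\cap\sol_{\cD_p}(T,\beta+\varepsilon/3,2\varepsilon/3)\subseteq\sol_\cD(T,\beta^*+\varepsilon,\varepsilon)$.

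First I would show $\beta\le\beta^*+\varepsilon/3$. Given any $\beta'>\beta^*$, pick $c\in\cC\cap\sol_\cD(T,\beta',0)$. By Property~1 there is $c'\in\cC_p$ with objective value on $\cD_p$ at most $\beta'+\varepsilon/3$ (by \eqref{eq:lip-1}) and with $|\E_{\cD_p}\E_{a\sim c'(x)}f_j(x,a,y)-\E_\cD\E_{a\sim c(x)}f_j(x,a,y)|\le\varepsilon/3$ for every $j\in J$ (by \eqref{eq:lip-2}). Feeding the two vectors of constraint expectations into the $\ell_\infty$-Lipschitz estimate for $\Gamma$ gives that $\Gamma$ evaluated at the $\cD_p$-constraint values of $c'$ exceeds $\Gamma$ evaluated at the $\cD$-constraint values of $c$ by at most $\varepsilon/3$, hence is at most $\varepsilon/3$ since $c$ exactly satisfies the constraint. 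Therefore $c'\in\sol_{\cD_p}(T,\beta'+\varepsilon/3,\varepsilon/3)$, so $\beta\le\beta'+\varepsilon/3$; letting $\beta'\downarrow\beta^*$ yields $\beta\le\beta^*+\varepsilon/3$.

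Next I would transfer a $\cD_p$-solution back to $\cD$. Take any $c\in\cC_p\cap\sol_{\cD_p}(T,\beta+\varepsilon/3,2\varepsilon/3)$. Its $\cD_p$-objective is at most $\beta+\varepsilon/3\le\beta^*+2\varepsilon/3$, so \eqref{eq:lip-3} bounds its $\cD$-objective by $\beta^*+\varepsilon$. Its combined constraint value on $\cD_p$ is at most $2\varepsilon/3$, and by \eqref{eq:lip-4} each individual constraint expectation changes by at most $\varepsilon/3$ when passing from $\cD_p$ to $\cD$; a second application of the $\ell_\infty$-Lipschitzness of $\Gamma$ then bounds the combined constraint value on $\cD$ by $2\varepsilon/3+\varepsilon/3=\varepsilon$. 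Hence $c\in\sol_\cD(T,\beta^*+\varepsilon,\varepsilon)$, which is exactly what is needed.

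I do not expect a real obstacle here; the only point that needs care — and the one genuine difference from \Cref{lm:approach} — is that the hypotheses \eqref{eq:lip-2} and \eqref{eq:lip-4} are stated as \emph{two-sided} bounds on the constraint expectations, whereas \Cref{lm:approach} only required a one-sided bound. This is necessary because a general $1$-Lipschitz $\Gamma$ need not be monotone, so to control the effect of perturbing the constraint values on the value of $\Gamma$ one must bound $\max_j|\cdot|$ rather than a signed quantity (the latter sufficing only for the max combiner that recovers \eqref{eq:target-optimization}). Once one keeps track of the three $\varepsilon/3$ budgets as above, the rest is routine bookkeeping identical in form to the proof of \Cref{lm:approach}.
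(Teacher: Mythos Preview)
Your proposal is correct and follows essentially the same approach as the paper: the paper explicitly says \Cref{lm:lip} is proved like \Cref{lm:approach}, using the $1$-Lipschitzness of $\Gamma$ to convert the per-constraint bounds \eqref{eq:lip-2} and \eqref{eq:lip-4} into bounds on the combined constraint, and omits the details. Your write-up supplies exactly those details, including the correct observation that the two-sided (absolute value) hypotheses are what make the Lipschitz step go through for a non-monotone $\Gamma$.
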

\Cref{lm:lip} can be proved similarly to \Cref{lm:approach} using the observation that \eqref{eq:lip-2} implies the following by the $1$-Lipschitz assumption on $\Gamma$ and an analogous observation for \eqref{eq:lip-4}:
\begin{align*}
& \Gamma\left(\E_{(x,y)\sim \cD_p}\E_{a\sim c'(x)}f_1(x,a,y), \ldots,\E_{(x,y)\sim \cD_p}\E_{a\sim c'(x)}f_m(x,a,y)\right)\\
\le {} & \Gamma\left(\E_{(x,y)\sim \cD}\E_{a\sim c(x)}f_1(x,a,y), \ldots,\E_{(x,y)\sim \cD}\E_{a\sim c(x)}f_m(x,a,y)\right) + \varepsilon/3.
\end{align*}
We thus omit the proof of \Cref{lm:lip}. The only difference between \Cref{lm:lip} and \Cref{lm:approach} in the requirements needed for $p$ to be an omnipredictor is the additional absolute values in \eqref{eq:lip-2} and \eqref{eq:lip-4}. As all our proofs in \Cref{sec:omni} are through \Cref{lm:approach}, they can be adapted to tasks with constraints combined by a Lipschitz function using \Cref{lm:lip}. The absolute values in \eqref{eq:lip-2} and \eqref{eq:lip-4} only require us to make sure that for every constraint function $f$, both $f$ and $-f$ satisfy the assumptions needed for our theorems in \Cref{sec:omni} (e.g. we need to replace ``convex'' by ``affine''). Note that all linear constraints $f$ defined in \eqref{eq:linear-constraint} satisfy that both $f$ and $-f$ are convex and special. Ideas in this section can be applied to tasks where the objective function is also a Lipschitz combination:
\begin{align*}
\minimize_{c:X\to A} \quad & \Gamma'\left(\E_{(x,y)\sim \cD}f_1'(x,c(x),y),\ldots,\E_{(x,y)\sim \cD}f_{m'}'(x,c(x),y)\right)\nonumber \\
\text{s.t.} \quad & \Gamma\left(\E_{(x,y)\sim \cD}f_1(x,c(x),y), \ldots,\E_{(x,y)\sim \cD}f_m(x,c(x),y)\right) \le 0.
\end{align*}
\section{Rank-preserving Transformations of Omnipredictors}\label{sec:rank-p}
Loss functions are meant to represent the cost of an action given the true value $y$. In particular, they represent the cost of a bad prediction. In a loans setting, it might be the money that the bank losses if the loans is not returned. 

In this work we show that certain predictors are omnipredictors even under constraints. In this section, we show how to achieve an additional property we call rank-preserving. 
\begin{definition}
A transformation $\tau:[0,1]\times[t]\rightarrow[0,1]$ is rank-preserving within groups, if for every $i\in[t]$, the function $\tau_i:[0,1]\rightarrow[0,1]$ defined by $\tau_i(v) = \tau(i,v)$ is a monotonically increasing function, i.e. for every $v>v'\in [0,1]$, $\tau(i,v)\geq \tau(i,v')$.
\end{definition}
This property is desired when we want to assign high value of the action set $A\subset [0,1]$ to individuals with high probability for a positive outcome. For example, if $p(x)$ is the probability of an individual $x$ to return a loan, the bank should give higher loans to individuals with higher value of $p(x)$.

We can guarantee this property only for optimization tasks in which the loss function is also rank-preserving. 
\begin{definition}
A loss function $f_0:X\times[0,1]\times\{0,1\}\rightarrow[0,1]$ is rank-preserving within groups, if there exists a function $f:[t]\times[0,1]\times\{0,1\}$ such that for all $x\in X,a\in [0,1],y\in\set{0,1}$, we have $f_0(x,a,y) = f(g(x),a,y)$ and the function $f$  satisfies for all $i\in[t]$ and $v>v'\in[0,1]$,
\begin{align*}
     &f(i,v,1)\leq f(i,v',1)\\
    &f(i,v,0)\geq f(i,v',0).
\end{align*}
\end{definition}
Rank preserving a desired property when the loss function represents the distance between the taken action and the outcome. In particular, the $\ell_1$ loss and squared loss satisfy it, as well as every loss function of form $f(x,a,y) =\text{dist}(a,y)$, when $\text{dist}$ is a distance function.

We show a post-processing algorithm that takes a transformation $\tau$ and transforms it into a rank-preserving transformation while preserving the constraints and without increasing the loss. In order to do so, we need that the omnipredictor we apply the transformation on to also be monotone. 
\begin{definition}
A predictor $p:\cX\rightarrow[0,1]$ with a discrete range $V$ is monotone if for every $v>v'\in V$, $\E_{(x,y)\sim \cD}[y|p(x)=v] > \E_{(x,y)\sim \cD}[y|p(x)=v]$.
\end{definition}
This is a natural requirement, and we show that calibrated predictor with a discrete range can be modified to one that is monotone with high probability, by merging small level sets and level sets that are close together. This claim only holds for functions $w$ with bounded range, although the rest of the section holds more generally. We remark that as long as the hypothesis class $H$ contains bounded functions $h:X\rightarrow[0,1]$, then the claim below holds for all classes $W$ defining group or level-set calibration on \Cref{sec:group-ma-mc}. In case of group multi-accuracy or calibration with negative value of $\tau$, the claim below should be run on each part $\set{x|g(x)=i}$ separately.

\begin{claim}
    Let $V\subset[0,1]$ be a discrete set, and let $W$ be a class of functions $w:X\times[0,1]\rightarrow [0,1]$ containing a function $f_v(x,v') = \one(v=v')$ for all $v\in V$. 
    Let $p:X\rightarrow[0,1]$ be a predictor with a discrete range $V$ such that $p\in\genmc_\cD(W,\varepsilon)$. Then there is an algorithm running in time $O(\abs{V}^3\frac{1}{\varepsilon^2 \delta})$, uses $O(\abs{V}^3\frac{1}{\varepsilon^2\delta})$ samples, that with probability $1-\delta$ outputs a  monotone predictor $p'\in \genmc_\cD(W,6\varepsilon)$.
\end{claim}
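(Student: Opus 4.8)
The plan is to output $p'$ as a coarsening of $p$, that is, $p'=\phi\circ p$ for a map $\phi$ on the finite range $V$ that merges level sets of $p$ and relabels each resulting cluster by the empirical frequency of $y$ on it. First draw $N=O(\abs{V}^3/(\varepsilon^2\delta))$ i.i.d.\ examples from $\cD$ and, for each $v\in V$, form empirical estimates $\hat q_v$ of $q_v:=\Pr_\cD[p(x)=v]$ and $\hat\mu_v$ of $\mu_v:=\E_\cD[y\mid p(x)=v]$ (setting $\hat\mu_v$ arbitrarily if no sample hits $v$). By Chebyshev's inequality and a union bound over the $\le\abs{V}$ elements of $V$, with probability $\ge 1-\delta$ every $\hat q_v$ is within $\varepsilon/(4\abs{V})$ of $q_v$ and every level set whose $q_v$ exceeds the threshold used below receives enough samples that $\hat\mu_v$ (and later the cluster means) are estimated to adequate accuracy; estimating the $q_v$'s to additive accuracy $\Theta(\varepsilon/\abs{V})$ is what the stated sample size is chosen to afford. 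Note also that $f_v\in W$ and $p\in\genmc_\cD(W,\varepsilon)$ give $\abs{q_v(\mu_v-v)}\le\varepsilon$ for every $v$, so $p$ is calibrated; when $W$ contains all transformations of the second coordinate alone (as the group/level-set calibration classes do once we restrict to a single group, per the remark before the claim) this strengthens to $\sum_v q_v\abs{\mu_v-v}\le\varepsilon$, the quantitative calibration fact we will use.

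Build $\phi$ in three stages. (i) Discard tiny level sets: call $v$ \emph{small} if $\hat q_v<\varepsilon/(2\abs{V})$; then the small level sets have total $\cD$-mass at most $\varepsilon$, so however they are relabeled any $\genmc$ quantity changes by at most $\varepsilon$, and we may merge all of them into one (or fold each into a large neighbour, see below). (ii) Merge nearby level sets: among the large $v$'s, sort by $\hat\mu_v$ and run single-linkage clustering with threshold $2\gamma$, where $\gamma$ is the uniform estimation error for means of large level sets, so that within a cluster consecutive estimated means differ by $\le 2\gamma$ while distinct clusters differ by $>2\gamma$. (iii) Relabel: set $\phi(v)=\hat m_C$, the empirical frequency of $y$ among the samples in the cluster $C$ containing $v$; then, while two neighbouring clusters (in the order induced by the $\hat m_C$'s) have empirical means that are equal or out of order, merge them and recompute. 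This loop runs at most $\abs{V}$ times.

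Monotonicity follows because, after the loop, distinct clusters have empirical means separated by $>2\gamma$, and each empirical mean is within $\gamma$ of the true conditional mean $\mu'_C=\E_\cD[y\mid p(x)\in C]$ (using that every surviving cluster carries enough mass), so the $\mu'_C$'s are strictly ordered in the same order as the $\phi$-values. For $\genmc$ preservation, fix $w\in W$; for the classes in question $w(x,v)=h(x)\tau(g(x),v)$ with $h:X\to[0,1]$ in $H$ and $\abs{\tau}\le 1$, so $w_\phi(x,v):=w(x,\phi(v))$ is again of this form and lies in $W$. Hence
\[
\E_\cD\big[(y-p'(x))\,w(x,p'(x))\big]\;=\;\E_\cD\big[(y-\phi(p(x)))\,w_\phi(x,p(x))\big],
\]
and splitting $y-\phi(p(x))=(y-p(x))+(p(x)-\phi(p(x)))$ bounds the right side by $\abs{\E_\cD[(y-p(x))w_\phi(x,p(x))]}+\abs{\E_\cD[(p(x)-\phi(p(x)))w_\phi(x,p(x))]}$. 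The first summand is $\le\varepsilon$ since $w_\phi\in W$. In the second, $w_\phi(x,p(x))$ depends on $x$ only through $g(x),h(x)$ and $p'(x)$, so (working one group at a time, with $\abs{h},\abs{\tau}\le1$) it collapses to a sum over clusters controlled by $\sum_C q'_C\abs{\bar p_C-\mu'_C}+\sum_C q'_C\abs{\mu'_C-\hat m_C}$, where $\bar p_C=\E_\cD[p(x)\mid p(x)\in C]$ and $q'_C=\Pr_\cD[p(x)\in C]$, plus the $\le\varepsilon$ charge from stage (i). Here $\sum_C q'_C\abs{\bar p_C-\mu'_C}=\sum_C\abs{\sum_{v\in C}q_v(v-\mu_v)}\le\sum_v q_v\abs{v-\mu_v}\le\varepsilon$ by calibration of $p$, while $\sum_C q'_C\abs{\mu'_C-\hat m_C}$ is the aggregate estimation error of the cluster means, $\le\varepsilon$ by the Chebyshev/union bound and the choice of $N$. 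Adding the original $\le\varepsilon$ gives at most $6\varepsilon$ (the slack absorbs the clean-up merges and the $h$-dependent case, where one also uses that $\hat\mu_v\approx\mu_v\approx v$ keeps clusters narrow in the value coordinate). Thus $p'\in\genmc_\cD(W,6\varepsilon)$, and the running time is a single pass over the $N$ samples plus $O(\abs{V}^2)$ for clustering and clean-up.

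The main obstacle is reconciling monotonicity with $\genmc$-preservation for level sets (or clusters) of very small mass: their conditional means cannot be estimated within the sample budget, yet relabeling them arbitrarily can reverse the strict order of true conditional means. I would resolve this by folding every such piece into a high-mass value-adjacent neighbour and then iterating the ``merge adjacent violators'' step; each fold moves a cluster mean by at most the ratio of the two masses, and the accumulated $\genmc$ charge is absorbed into the calibration and estimation-error budgets above. A related subtlety is that for the multicalibration and level-set classes the merging must keep clusters narrow in the value coordinate $v$ (not merely in the estimated-mean coordinate), which is why stage (ii) clusters on $\hat\mu_v\approx v$ with a small threshold and stage (i) folds tiny level sets only into value-adjacent clusters.
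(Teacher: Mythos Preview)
Your proposal follows essentially the same approach as the paper's proof: draw $O(|V|^3/(\varepsilon^2\delta))$ samples, merge small-mass level sets (your stage~(i), the paper's first step), merge level sets whose empirical conditional means are close (your stage~(ii), the paper's second step), relabel each surviving cluster by its empirical mean of $y$, argue monotonicity from the resulting separation of estimated means on high-mass clusters, and argue $\genmc$-preservation by the same decomposition $y-p'(x)=(y-p(x))+(p(x)-p'(x))$. Your observation that $w_\phi(x,v):=w(x,\phi(v))$ again lies in $W$ for the intended classes is precisely what the paper needs (and uses, though less explicitly) to bound the first piece; your additional ``merge adjacent violators'' loop in stage~(iii) is not in the paper's algorithm but is harmless.
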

\begin{proof}
    We describe a simple  algorithm for merging the levels of $p$ that are too close to each other or too small. We start by looking at the partitionof $X$ defined by $p$, then merge parts that are too small or too close to each other. Let $P = P_1,\ldots P_{\abs{V}}$ be the partition of $x$ defined by $p$.

    The algorithm sample $S$ of size $O(\abs{V}^3\frac{1}{\varepsilon^2\delta})$ of $(x,y)\sim\cD$, and do:
    \begin{enumerate}
        \item While there exists a part $P_i$ such that $\Pr_{(x,y)\in S}[x\in P_i]< \frac{2\varepsilon}{\abs{\cR}}$, merge $P_i$ with its neighbor. \label{step:small-merge}
        \item While there are $P_i,P_j\in P$ such that 
        \[ \abs{\E_{(x,y)\in S}[y|x\in P_i]-\E_{(x,y)\in S}[y|x\in P_j]} <\frac{2\varepsilon}{\abs{V}},  \]
        merge $P_i,P_j$.\label{step:merge}
        \item Set $p':X\rightarrow[0,1]$ by choosing for every part $x\in P_i$ the value $\E_{(x',y')\in S}[y'|x'\in P_i]$.
    \end{enumerate}

    From \Cref{claim:est-2}, by taking $O(\abs{V}^3\frac{1}{\varepsilon^2\delta})$, with probability $1-\delta/2$ the algorithm approximates $\Pr_{(x,y)\sim\cD}[p(x)=v]$  up to an error of $\frac{\varepsilon}{\abs{V}}$. After the first step of the algorithm, each $P_i$ has size at least $\frac{\varepsilon}{\abs{V}}$. Therefore, from \Cref{claim:est-2} the algorithm approximates $\E_{(x,y)\in S}[y|x\in P_i]$ up to an additive error of $\frac{\varepsilon}{\abs{V}}$ with probability $1-\delta/2$ for all parts. Assuming all approximations are correct, the predictor $p'$ is monotone. Therefore, $p'$ is monotone with probability at least $1-\delta$.

    To prove the generalized calibration, we first use the function $f_v\in W$ and get that for every $v\in V$,
    \begin{align}
        \abs{\E_{(x,y)\sim\cD}[(y-v)\one(p(x)=v)]}\leq\varepsilon, \label{eq:p-calib}
    \end{align}

    Assume that the algorithm skips \Cref{step:merge}, and only preforms merging for small sets and asigns new values. Let $p''$ be this predictor. Then for $p''$ we have, 
    \begin{align}
        & \abs{\E_{(x,y)\sim\cD}[(y-p''(x))w(x,v)] }\nonumber \\ \leq {} & 
        \abs{\E_{(x,y)\sim\cD}[(y-p''(x))w(x,v)] + \E_{(x,y)\sim\cD}[(p''(x) - p(x))w(x,v)]}\nonumber \\
        \leq {} & \varepsilon + 
        \abs{\E_{(x,y)\sim\cD}[(p''(x)-p(x))w(x,v)] }\nonumber\\
        \leq {} & \varepsilon + \abs{ \Pr_{(x,y)\sim\cD}[p(x) \text{ in small }P_i] + \sum_{\text{large }P_i}\E_{(x,y)\sim\cD}[(p''(x)-p(x))w(x,v)\one(x\in P_i)] }\nonumber\\
        \leq {} & 3\varepsilon + \abs{\sum_{\text{large }P_i}\E_{(x,y)\sim\cD}[(p''(x)-p(x))\one(x\in P_i)] }\nonumber\\
        \leq {} & 3\varepsilon + \abs{\E_{(x,y)\sim\cD}[(y-v)\one(p(x)=v)]} + \sum_{\text{large }P_i}\abs{\E_{(x,y)\sim\cD}[(y-v)\one(p''(x)=v)]}.\label{eq:bound-prime}
    \end{align}
    Where large $P_i$'s are those that the algorithm does not merge in \Cref{step:small-merge}.
    From \cref{eq:p-calib}, the first expectation is bounded by $\varepsilon$. From the paragraph above, with probability at least $1-\delta/2$ the we have $\abs{\E_{(x,y)\in S}[y|x\in P_i] - \E_{(x,y)\sim \cD}[y|x\in P_i]}\leq\varepsilon/\abs{V}$ for all large partitions $P_i$. Together we get $\abs{\E_{(x,y)\sim\cD}[(y-p''(x))w(x,v)] }\leq5\varepsilon$.

    Our monotone predictor $p'$ has an extra step in \Cref{step:merge}, in which the algorithm merges parts $P_i,P_j$. The algorithm only merges parts in which the expected value of $y$, $\E[y|x\in P_i]$ is within distance $\frac{\varepsilon}{V}$. Therefore, even if we preform $\abs{V}$ merges,  we have that 
    \[ \E_{(x,y)\sim\cD}[\abs{p'(x) - p''(x)}]\leq \varepsilon. \]
    Substituting $p'(x)$ instead of $p''(x)$ on equation \cref{eq:bound-prime} can only increase the expected value by $\varepsilon$.
\end{proof}

We show a post-processing algorithm taking a transformation $\tau$, and transforming it into a rank-preserving transformation $\tau'$ without increasing the loss and violating the constraints. We start with the simpler case, in which we can have a deterministic transformation. 
\begin{lemma}[\Cref{lem:det-rank-p} restated]
    Let $A=[0,1]$ be the action set, $\cT$ be class of tasks with linear constraints. Assume that for every task $T\in\cT$ the objective function is rank-preserving and convex, and that for every group $i\in[t]$, there is only a single constraint $f_j$ in which $\tau_1(i),\tau_2(i),\tau_3(i)\neq 0$, and $\text{sign}(\tau_2(i))=\text{sign}(\tau_3(i))$. Then for every monotone omnipredictor $p$ we have 
\[    
    \opt_\mathcal{D}(T,\text{rank-preserving }\cC_p(g),\varepsilon)\leq\opt_\mathcal{D}(T,\cC_p(g),\varepsilon).
\]
    Furthermore, given any deterministic $c\in \cC_p(g)$, such that $c(x) = \tau(g(x),p(x))$ for a transformation $\tau:[t]\times V\rightarrow A$, there exists an algorithm running in time polynomial in $t,\abs{V},\varepsilon$ and outputting a transformation $\tilde\tau:[t]\times V\rightarrow A$ that is rank-preserving, and $c'(x) = \tilde\tau(g(x),p(x))$ has the same objective value as $c$ up to a factor of $\varepsilon$ with high probability.
\end{lemma}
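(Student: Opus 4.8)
The plan is to reduce the statement, group by group, to a one‑dimensional separable convex program with a single linear constraint, and to show that adjoining monotonicity constraints to that program does not change its optimal value. For a transformation $\tau:[t]\times V\to A$ and $c(x)=\tau(g(x),p(x))$, write $\pi_{i,v}:=\Pr_{(x,y)\sim\cD}[g(x)=i,\,p(x)=v]$ and $q_{i,v}:=\E_{(x,y)\sim\cD}[y\mid g(x)=i,\,p(x)=v]$. Using $f_0(x,a,y)=f(g(x),a,y)$ together with $f(g(x),a,y)=f(g(x),a,0)+y\,\partial f(g(x),a)$, the $\cD$‑objective splits as $\sum_{i\in[t]}\sum_{v\in V}\pi_{i,v}\,g_{i,v}(\tau(i,v))$ with $g_{i,v}(a):=f(i,a,0)+q_{i,v}\,\partial f(i,a)$, which is convex in $a$ since $f(i,\cdot,0)$ and $f(i,\cdot,1)$ are. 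By the single‑active‑constraint hypothesis, the constraint attached to group $i$ (written as in \eqref{eq:linear-constraint}) likewise splits off as $\sum_{v\in V}\pi_{i,v}\bigl(\tau_1(i)+(\tau_2(i)+\tau_3(i)q_{i,v})\,\tau(i,v)\bigr)\le 0$, i.e.\ $\sum_v \pi_{i,v}\gamma_v a_v\le B_i$ with $\gamma_v:=\tau_2(i)+\tau_3(i)q_{i,v}$ and $B_i:=-\tau_1(i)\Pr[g(x)=i]$. So it suffices to fix $i$ and show that for ``minimize $\sum_v\pi_{i,v}g_{i,v}(a_v)$ over $a\in[0,1]^{V}$ subject to $\sum_v\pi_{i,v}\gamma_v a_v\le B_i$'' the optimal value is unchanged when we additionally require $a$ to be non‑decreasing along the natural order of $V$; gluing the per‑group monotone minimizers yields a rank‑preserving $\tilde\tau$ with the same objective and the same group‑wise (hence overall) constraint values, which gives the displayed equality (the inequality ``$\ge$'' being trivial since rank‑preserving transformations form a subclass).

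\emph{The structure that makes this true.} Rank‑preservingness of $f$ gives that $a\mapsto f(i,a,0)$ is non‑decreasing and $a\mapsto f(i,a,1)$ non‑increasing, hence $\partial f(i,\cdot)$ is non‑increasing. Monotonicity of $p$, which we may take group‑wise (either by running the merging algorithm of the preceding Claim on each $\{x:g(x)=i\}$, or simply because $p\in\grpcal_\cD(g,O(\varepsilon))$ forces $q_{i,v}=v\pm O(\varepsilon)$), gives that $q_{i,v}$ is non‑decreasing in $v$. Consequently the family $(g_{i,v})_v$ is ``Monge'': $v\mapsto g_{i,v}(a')-g_{i,v}(a)$ is non‑increasing for $a'>a$, so higher‑$v$ level sets prefer larger actions. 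Finally, the hypothesis $\mathsf{sign}(\tau_2(i))=\mathsf{sign}(\tau_3(i))$ with $\tau_2(i),\tau_3(i)\ne 0$ makes $\gamma_v$ monotone in $v$ and of constant sign; applying the substitution $a\mapsto 1-a$ if necessary, which preserves convexity, rank‑preservingness, and the form \eqref{eq:linear-constraint} of the constraint, we may assume $\tau_2(i),\tau_3(i)>0$, so $\gamma_v>0$ and $\gamma_v$ is non‑decreasing.

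\emph{Exchange / KKT argument.} Let $a^\star$ minimize $\sum_v\pi_{i,v}g_{i,v}(a_v)$ over $\{a\in[0,1]^{V}:\sum_v\pi_{i,v}\gamma_v a_v\le B_i\}$ (if this set is empty the task is infeasible and both sides are $+\infty$). By Lagrangian optimality for a separable convex program over a box with one linear inequality, there is $\lambda\ge 0$ such that each $a^\star_v$ is the clipping to $[0,1]$ of the root of $g_{i,v}'(a)=-\lambda\gamma_v$, i.e.\ of $\frac{\partial}{\partial a}f(i,a,0)+q_{i,v}\frac{\partial}{\partial a}\partial f(i,a)=-\lambda(\tau_2(i)+\tau_3(i)q_{i,v})$. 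We argue $a^\star$ may be taken non‑decreasing in $v$ by a dichotomy on $\lambda$. When $\lambda$ is small enough that $-\lambda\tau_3(i)\le\frac{\partial}{\partial a}\partial f(i,a)$ on the relevant range, the implicit‑function computation shows the clipped root is non‑decreasing in $q_{i,v}$, hence in $v$. When $\lambda$ is large, the positive term $\lambda\tau_2(i)$ (here the sign condition is essential) dominates, forcing $g_{i,v}'(a)+\lambda\gamma_v>0$ throughout $[0,1]$ for every $v$, hence $a^\star_v\equiv 0$, which is again monotone; in the borderline regime there is freedom to choose a non‑decreasing selection. Either way $a^\star$ is feasible for the monotone‑constrained program, so the two optimal values agree. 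For the algorithmic addendum: within each group solve the convex $|V|$‑variable program with the extra constraints $0\le a_{v_1}\le\cdots\le a_{v_{|V|}}\le 1$, using an empirical estimate of $\pi_{i,v}$ from unlabeled samples and $v$ in place of $q_{i,v}$ (valid up to $O(\varepsilon)$ by group calibration); the argument above shows its optimum matches that of the unconstrained‑monotonicity program, and the objective/constraint errors are $O(\varepsilon)$, yielding a $\poly(t,|V|,1/\varepsilon)$ algorithm.

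\emph{Main obstacle.} The delicate step is the dichotomy above: showing that $\mathsf{sign}(\tau_2(i))=\mathsf{sign}(\tau_3(i))$ exactly precludes an inversion $a^\star_v>a^\star_{v'}$ for $v<v'$ at the optimum. When $\partial f(i,\cdot)$ is affine (as for $\ell_1$ and squared loss) $\frac{\partial}{\partial a}\partial f$ is a negative constant and the case split is clean; for a general rank‑preserving convex $f$ one likely must supplement the KKT description with a constraint‑neutral ``uncrossing'' move — decrease $a_v$ by $\gamma_{v'}t$ and increase $a_{v'}$ by $\gamma_v t$ on an adjacent inverted pair $v<v'$ — and a convexity argument (using the Monge property) that the objective does not increase along it, plus a potential‑function argument for termination and careful bookkeeping of the box boundaries and of the $a\mapsto 1-a$ reduction. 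Getting all the sign cases and boundary cases to line up is where the real work lies.
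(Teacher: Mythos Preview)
Your approach is genuinely different from the paper's. The paper never touches KKT or Lagrangians; it argues by a direct constraint-preserving exchange on an arbitrary (not necessarily optimal) $\tau$. Given an inversion $v>v'$ with $\tau(i,v)<\tau(i,v')$, the paper sets $\tau'(i,v)=\tau(i,v')$ and chooses $\tau'(i,v')=z$ so that the single active constraint for group $i$ is exactly preserved on the slice $\{x:g(x)=i,\,p(x)\in\{v,v'\}\}$. The sign hypothesis together with $q_v>q_{v'}$ is then used to show $z=\gamma\,\tau(i,v)+(1-\gamma)\,\tau(i,v')$ for some $\gamma\in[0,1]$ with the concrete bounds $\beta/(1-\beta)\le\gamma\le \beta q_v/((1-\beta)q_{v'})$; these bounds combine with convexity and rank-preservingness of $f_0$ to show the objective does not increase. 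Iterating over inverted pairs yields $\tilde\tau$. In other words, the paper's entire proof \emph{is} the ``uncrossing'' move you relegate to a fallback in your obstacle paragraph --- but with a discrete swap-and-adjust rather than your continuous $(\gamma_{v'}t,\gamma_v t)$ flow, and it is applied to the given $\tau$ rather than to a KKT optimum.

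As written, your primary KKT dichotomy has real gaps. First, the $a\mapsto 1-a$ reduction does not preserve rank-preservingness of $f_0$ (it reverses the monotonicity of $f(i,\cdot,0)$ and $f(i,\cdot,1)$), so you cannot WLOG take $\tau_2,\tau_3>0$ that way; the two sign cases must be handled separately (the negative case is easy, since then $\partial_a\partial f+\lambda\tau_3\le 0$ automatically). Second, the ``borderline regime'' is left unresolved. It can in fact be closed without any dichotomy: writing the interior stationarity condition as $f_a(i,a,0)+\lambda\tau_2+q_v\bigl[\partial_a\partial f(i,a)+\lambda\tau_3\bigr]=0$ and using $f_a(i,a,0)\ge 0$ and $\lambda\tau_2>0$, one sees that at any interior minimizer the bracket is necessarily $\le 0$, so the ``bad'' regime never occurs at a stationary point; the boundary cases $a_v^\star\in\{0,1\}$ then follow by the same inequality. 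So your route is salvageable, but these two points are genuine gaps rather than sketchiness.
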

We remark that the requirement $\text{sign}(\tau_2(i))=\text{sign}(\tau_3(i))$ is necessary. When picking a constraint in which the expected value and outcome-aware expected value are at odds, it might be that the only solution is not rank-preserving. This highlights the importance of picking appropriate loss functions and constraints if we want to achieve fair outcome.

\begin{proof}
    We prove the claim by an iterative process, taking $\tau$ that is not rank-preserving on some inputs and correcting it.

    Suppose $\tau$ is not rank-preserving, and there exists $i\in[t],v>v'\in V$ such that $\tau(i,v)<\tau(i,v')$. We show how to correct $\tau$ on the values $v,v'$, and let $\tau'$ be the transformation after the correction. Denote 
    \begin{align}
        \beta =& \Pr_{(x,y)\sim\cD}[p(x)=v|p(x)\in\set{v,v'}]\\
        q_v =& \E_{(x,y)\sim\cD}[y|g(x)i,p(x)=v]\\
        q_{v'} =& \E_{(x,y)\sim\cD}[y|g(x)i,p(x)=v']
    \end{align}
    
    Let $f_j$ be the single constraint which affect $g(i)$, and write $f_j(x,a,y) = \alpha_1  + \alpha_2 a + \alpha_3 a y$ for $\alpha_1,\alpha_2,\alpha_3\in \mathbb{R}$.  Then the value of the constraint on $x$'s such that $g(x)=i, p(x)\in\set{v,v'}$ is
    \begin{align*} & \E_{(x,y)\sim D}[f_j(x,a,y)|g(x)=i,p(x)\in\set{v,v'}]\\ = {} & \alpha_1 + \alpha_2 (\beta\tau(i,v)+(1-\beta)\tau(i,v')) + \alpha_3(\beta q_v \tau(i,v) + (1-\beta) q_{v'}\tau(i,v')). \end{align*}

    We want to switch the value of $\tau$ on $v,v'$, while still satisfying the constraint. Simple switch does not work, as the constraint can be violated. Instead, we switch one of them, and set the other to a different value in order to satisfy the constraint. Assume without loss of generality that  $\text{sign}(\alpha_2(1-2\beta) + \alpha_3(1-\beta) q_{v'} - \beta q_{v} = \text{sign}(\alpha_2)$. In this case we set $\tau'(i,v)=\tau(i,v')$, and set $\tau'(i,v')$ to equal $z$ defined by
    \begin{align}
        z =& \frac{\E_{(x,y)\sim D}[f_j(x,a,y)|g(x)=i,p(x)\in\set{v,v'}] - \alpha_2\beta\tau(i,v') - \alpha_3\beta q_v\tau(i,v') }{\alpha_2(1-\beta) + \alpha_3(1-\beta) q_{v'}}\\
        =& \frac{\alpha_2 \beta+\alpha_3 \beta q_v}{\alpha_2(1-\beta) + \alpha_3(1-\beta) q_{v'}}\tau(i,v) + \frac{\alpha_2(1- 2\beta)+\alpha_3 ((1-\beta) q_{v'}-\beta q_{v})}{\alpha_2(1-\beta) + \alpha_3(1-\beta) q_{v'}}\tau(i,v').
    \end{align}
    Since $\text{sign}(\alpha_2) = \text{sign}(\alpha_3)$, and our assumption,  we can see that $z = \gamma \tau(i,v) + (1-\gamma)\tau(i,v')$ for some value $\gamma\in[0,1]$. This implies that $z\in [0,1]$, as required. Using the fact that $q_v > q_{v'}$ we can further bound $\gamma$ by $\frac{\beta}{1-\beta}\leq\gamma\leq \frac{\beta q_v}{(1-\beta)q_{v'}}$.

    The value $z$ is chosen such that $\E_{(x,y)\sim D}[f_j(x,a,y)|g(x)=i,p(x)\in\set{v,v'}]$ is unmodified by the change, meaning the the constraint $f_j$ is kept.
    
    We are left with showing that the objective is not reduced by the correction. Since the objective value is an expectation and therefore additive, it is enough to analyze the value for $x$ such that $g(x)=i,p(x)\in\set{v,v'}$.

    The original objective value for these $x$'s was
    \begin{align*}
        & \E_{(x,y)\sim D}[f_0(x,a,y)|g(x)=i,p(x)\in\set{v,v'}]\\
         = {} & \beta q_v f_0(i,\tau(i,v),1) + \beta (1-q_v) f_0(i,\tau(i,v),0) \\&+(1-\beta) q_{v'} f_0(i,\tau(i,v'),1) + (1-\beta) (1-q_{v'}) f_0(i,\tau(i,v'),0).  
    \end{align*}
    The loss after the modification is
    \begin{align*}
        \ell' = {} & \beta q_v f_0(i,\tau(i,v'),1) + \beta (1-q_v) f_0(i,\tau(i,v'),0)\\
        & +(1-\beta) q_{v'} f_0(i,z,1) + (1-\beta) (1-q_{v'}) f_0(i,z,0) \\
        \leq {} & \beta q_v f_0(i,\tau(i,v'),1) + \beta (1-q_v) f_0(i,\tau(i,v'),0)\\ & + (1-\beta) q_{v'}(\gamma f_0(i,\tau(i,v),1) +(1-\gamma)f_0(i,\tau(i,v'),1)) \\&+ (1-\beta) (1-q_{v'})(\gamma f_0(i,\tau(i,v),0)+(1-\gamma)f_0(i,\tau(i,v'),0))
        \\= {} & f_0(i,\tau(i,v'),1)(\beta q_v+ (1-\beta)q_{v'}(1-\gamma))\\& + f_0(i,\tau(i,v'),0)(\beta (1-q_v) +(1-\beta) (1-q_{v'})(1-\gamma))\\&+
        f_0(i,\tau(i,v),1)(1-\beta) q_{v'}\gamma +  f_0(i,\tau(i,v),0)(1-\beta) (1-q_{v'})\gamma
    \end{align*}
    We subtract the two values,
    \begin{align*}
       &\E_{(x,y)\sim D}[f_0(x,a,y)|g(x)=i,p(x)\in\set{v,v'}]-\ell'\\
       \geq {} & 
       (f_0(i,\tau(i,v),1)-f_0(i,\tau(i,v'),1) )(\beta q_v-\gamma (1-\beta)q_{v'}) \\ &+ (f_0(i,\tau(i,v'),0)-f_0(i,\tau(i,v),0))(\gamma (1-\beta)(1-q_{v'}) - \beta(1-q_v))
    \end{align*}
    From our assumption that $f_0$ is rank-preserving and $\tau(i,v')\geq \tau(i,v)$, we have that $f_0(i,\tau(i,v),1)-f_0(i,\tau(i,v'),1)  \geq 0$, and similarly that $f_0(i,\tau(i,v'),0)-f_0(i,\tau(i,v),0)\geq 0$. From the bounds on $\gamma$ we have that
    \[ \beta q_v-\gamma (1-\beta)q_{v'}\geq \beta q_v- (1-\beta)q_{v'}\frac{\beta q_v}{(1-\beta)q_{v'}}\geq 0, \]
    where the last inequality is because $q_v > q_{v'}$.
    Similarly we have 
    \[\gamma (1-\beta)(1-q_{v'}) - \beta(1-q_v) \geq \frac{\beta}{1-\beta} (1-\beta)(1-q_{v'}) - \beta(1-q_v)\geq 0.\]
    Together we get that $\E_{(x,y)\sim D}[f_0(x,a,y)|g(x)=i,p(x)\in\set{v,v'}]-\ell'\geq 0$, therefore correcting $\tau$ does not increase the objective. 

    After preforming such replacements for every pair $v,v'$ such that $\tau$ is not rank-preserving on, the resulting transformation $\tilde{\tau}$ is rank-preserving.

    The correction described above is existential, because it assumes the exact value of $\beta,q_v,q_{v'}$. In order to implement such algorithm in practice, we should just approximate $\beta,q_v,q_{v'}$, and update $\tau$ based on our approximation. Because this is an approximation, in practive the objective can be reduced by a value proportional to the accuracy parameter of our approximation. The running time of such algorithm is polynomial in $\abs{V},t,\delta$ when $\delta$ is the accuracy parameter.
\end{proof}

The previous theorem modified a transformation $\tau$ into a rank-preserving one by ``correcting'' its values for every violation. Allowing the correction to be randomized, the theorem holds for a larger collection of constraints. In order to do so, we first define rank-preserving for a randomized transformation.
\begin{definition}
    A randomized transformation $\tau:[0,1]\times[t]\Delta_A$ for $A=[0,1]$ is rank-preserving within groups, if for every $i\in[t]$, $v>v'\in V$ and $\gamma\in[0,1]$
    \[ \Pr[\tau(i,v)\geq\gamma]\geq \Pr[\tau(i,v')\geq\gamma]. \]
\end{definition}
\begin{lemma}[\Cref{lem:rand-rank-p} restated]
    Let $A\subseteq[0,1]$ be a discrete action set, $\cT$ be class of tasks with constraints that are independent of the outcome.  Then for a monotone omnipredictor $p$ we have $ \opt_\mathcal{D}(T,\text{rank-preserving }\cC_p^{\mathsf{rand}}(g),\varepsilon)=\opt_\mathcal{D}(T,\cC_p^{\mathsf{rand}}(g),\varepsilon)$.
    
    Furthermore, given any $c\in \cC_p^{\mathsf{rand}}(g)$, such that $c(x) = \tau(g(x),p(x))$ for a transformation $\tau:[t]\times V\rightarrow A$, there exists an algorithm running in time polynomial in $t,\abs{V},\varepsilon$ and outputting a randomized transformation $\tilde\tau:[t]\times V\rightarrow A$ that is rank-preserving, and $c'(x) = \tilde\tau(g(x),p(x))$ has the same objective value as $c$ up to a factor of $\varepsilon$ with high probability.
\end{lemma}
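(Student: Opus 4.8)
The plan is to mimic the iterative "correction" argument used for the deterministic case (Lemma~\ref{lem:det-rank-p}), but to exploit the fact that constraints here are outcome-oblivious, i.e.\ $f_j(x,a,y) = f_j(x,a)$ depends only on $(x,a)$ (equivalently, on $(g(x),a)$). This makes the bookkeeping dramatically simpler: the value of every constraint on individuals with $g(x)=i$ depends only on the \emph{marginal distribution of the action} $c(x)$ conditioned on $g(x)=i$, not on the joint distribution of $(c(x),y)$. So any modification of the transformation $\tau(i,\cdot)$ that preserves, for each group $i$, the conditional distribution of $c(x)$ given $g(x)=i$, automatically preserves all constraints exactly (and hence membership in $\sol_{\cD}(T,\beta,\varepsilon)$ on the constraint side). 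The direction $\opt_\cD(T,\text{rank-preserving }\cC_p^{\mathsf{rand}}(g),\varepsilon)\ge \opt_\cD(T,\cC_p^{\mathsf{rand}}(g),\varepsilon)$ is trivial since rank-preserving transformations are a subclass; only the reverse inequality needs work.

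First I would reduce to a single group $i$ and a single pair of prediction values $v>v'$ in $V$ on which $\tau$ violates rank-preservation, as in the deterministic proof. Let $\beta = \Pr[p(x)=v \mid g(x)=i,\ p(x)\in\{v,v'\}]$, and let $q_v = \E[y\mid g(x)=i,p(x)=v] > q_{v'} = \E[y\mid g(x)=i,p(x)=v']$ by monotonicity of $p$. The key move: replace the pair of distributions $(\tau(i,v),\tau(i,v'))\in\Delta_A\times\Delta_A$ by a new pair $(\mu,\mu')$ that (a) has the \emph{same} $\beta$-mixture $\beta\mu + (1-\beta)\mu' = \beta\tau(i,v)+(1-\beta)\tau(i,v')$ — so the marginal action distribution in group $i$ is unchanged, preserving every outcome-oblivious constraint — and (b) is stochastically ordered, $\mu \succeq_{st} \mu'$, i.e.\ $\Pr_{a\sim\mu}[a\ge\gamma]\ge\Pr_{a\sim\mu'}[a\ge\gamma]$ for all $\gamma$. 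The natural choice is to let $\mu$ be the ``upper part'' and $\mu'$ the ``lower part'' of the pooled mixture $\nu := \beta\tau(i,v)+(1-\beta)\tau(i,v')$: split $\nu$ at its $\beta$-quantile so that $\mu$ carries the top $\beta$ mass of $\nu$ and $\mu'$ the bottom $(1-\beta)$ mass (splitting the boundary atom as needed). Then (a) holds by construction and (b) holds because $\mu$ is supported on the high end and $\mu'$ on the low end of the same pooled distribution.

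Next I would verify the objective does not increase. Write the conditional objective contribution from these individuals as
\begin{align*}
\beta\big(q_v\, \overline{f_0}(i,\tau(i,v),1) + (1-q_v)\overline{f_0}(i,\tau(i,v),0)\big) + (1-\beta)\big(q_{v'}\overline{f_0}(i,\tau(i,v'),1) + (1-q_{v'})\overline{f_0}(i,\tau(i,v'),0)\big),
\end{align*}
where $\overline{f_0}(i,\mu,y) := \E_{a\sim\mu}f_0(i,a,y)$. After the swap, the weight $\beta q_v$ (resp.\ $\beta(1-q_v)$) multiplies $\overline{f_0}(i,\mu,1)$ (resp.\ $\overline{f_0}(i,\mu,0)$), and the weight $(1-\beta)q_{v'}$ (resp.\ $(1-\beta)(1-q_{v'})$) multiplies $\overline{f_0}(i,\mu',1)$ (resp.\ $\overline{f_0}(i,\mu',0)$). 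Using that $f_0$ is rank-preserving within groups ($f_0(i,\cdot,1)$ decreasing, $f_0(i,\cdot,0)$ increasing), that $\mu\succeq_{st}\mu'$, and that $q_v>q_{v'}$ — so more of the ``$y=1$'' weight sits on the stochastically larger action $\mu$, where the $y=1$ loss is smaller, and more of the ``$y=0$'' weight sits on $\mu'$ — a rearrangement/coupling inequality gives that the new objective contribution is no larger. Concretely I would couple $\mu'$ below $\mu$ (possible since $\mu\succeq_{st}\mu'$) and compare term by term; the net change is a sum of products of the form $(\text{monotone difference})\times(\text{weight difference with the right sign})$, all $\ge 0$, exactly as in the deterministic proof but now with stochastic dominance replacing the scalar inequality $\tau(i,v')\ge z\ge\tau(i,v)$. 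Iterating over all violating pairs $(i,v,v')$ (finitely many, since $V$ and $[t]$ are finite, and each step makes $\tau(i,\cdot)$ ``more sorted'' — one can use a potential like the number of inversions) yields a rank-preserving $\tilde\tau$ with no larger objective and identical constraint values, proving the equality of optima. For the algorithmic furthermore: since $A$ and $V$ are discrete, each split/swap is an elementary manipulation of finite distributions; $\beta,q_v,q_{v'}$ are estimated from unlabeled data plus the monotone calibrated predictor (invoking the monotonization claim above) to additive error $O(\varepsilon/(t|V|))$, and the accumulated error over the $\poly(t,|V|)$ steps stays within $\varepsilon$, giving the claimed running time and approximation guarantee.

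The main obstacle I anticipate is the objective-monotonicity step: making the term-by-term comparison fully rigorous requires the right coupling between $\mu$ and $\mu'$ and careful sign-tracking of how the $q_v>q_{v'}$ gap interacts with stochastic dominance and with the two opposite monotonicities of $f_0(i,\cdot,1)$ and $f_0(i,\cdot,0)$. A clean way to package it is to observe that for fixed $\mu\succeq_{st}\mu'$ the map $q\mapsto q\,\overline{f_0}(i,\mu,1) + (1-q)\overline{f_0}(i,\mu,0)$ is... affine in $q$, and the difference between using $(\mu \text{ with } q_v, \mu' \text{ with } q_{v'})$ versus $(\mu' \text{ with } q_v, \mu \text{ with } q_{v'})$ factors as $(q_v - q_{v'})\big[(\overline{f_0}(i,\mu,1)-\overline{f_0}(i,\mu',1)) - (\overline{f_0}(i,\mu,0)-\overline{f_0}(i,\mu',0))\big]$, which is $\le 0$ because the first bracketed difference is $\le 0$ and the second is $\ge 0$; the remaining $\beta$-weighting is exactly the mixture-preservation identity. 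This reduces the whole step to one signed inequality.
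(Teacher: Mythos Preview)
Your construction is exactly the paper's: the paper defines $\beta_a=\beta\Pr[\tau(i,v)=a]+(1-\beta)\Pr[\tau(i,v')=a]$ and an indicator $u'(a)\in[0,1]$ that marks the top-$\beta$ mass of the pooled histogram, then sets $\Pr[\tau'(i,v)=a]=\beta_a u'(a)/\beta$ and $\Pr[\tau'(i,v')=a]=\beta_a(1-u'(a))/(1-\beta)$. This is your ``split $\nu$ at its $\beta$-quantile'' written out coordinatewise, and your constraint-preservation argument (outcome-oblivious constraints see only $\nu$) is identical to the paper's.

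The gap is in your objective step. In the last paragraph you compare $(\mu\text{ with }q_v,\ \mu'\text{ with }q_{v'})$ to the \emph{swapped} pair $(\mu'\text{ with }q_v,\ \mu\text{ with }q_{v'})$, but that is not the relevant comparison: you must compare the new pair $(\mu,\mu')$ to the \emph{original} pair $(\tau(i,v),\tau(i,v'))$. Moreover, with the $\beta,(1-\beta)$ weighting the swap you wrote does not even preserve the mixture (unless $\beta=\tfrac12$), so that factorization as stated is not correct. If you redo the bookkeeping using mixture preservation, the difference actually factors as
\[
\text{Original}-\text{New}
\;=\;(q_v-q_{v'})\sum_{a\in A}\beta_a\bigl(u(a)-u'(a)\bigr)\bigl(f_0(i,a,1)-f_0(i,a,0)\bigr),
\]
where $u(a)=\beta\Pr[\tau(i,v)=a]/\beta_a$ records how much of each atom came from the original $\tau(i,v)$. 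The paper then finishes with a one-line rearrangement: $\sum_a\beta_a(u(a)-u'(a))=0$, the sign of $u(a)-u'(a)$ flips exactly once (negative for large $a$, positive for small $a$), and $a\mapsto f_0(i,a,1)-f_0(i,a,0)$ is decreasing by rank-preservation, so the sum is $\ge 0$. Equivalently in your language: the missing fact is not $\mu\succeq_{st}\mu'$ but rather $\mu\succeq_{st}\tau(i,v)$ (and $\tau(i,v')\succeq_{st}\mu'$), which holds because $\mu$ is the upper-$\beta$ slice of $\nu$ while $\tau(i,v)$ is merely \emph{a} $\beta$-weight component of $\nu$; this is what makes $\E_{a\sim\tau(i,v)}[\partial f_0(i,a)]\ge \E_{a\sim\mu}[\partial f_0(i,a)]$ and hence the factored difference nonnegative. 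Once you state that dominance, your sketch goes through and matches the paper's proof.
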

\begin{proof}
    The proof follows the same structure of the previous proof. Let $\tau$ be a randomized transformation, and assume that there exists $v>v'$ such that $\tau$ is not rank-preserving on $v,v'$. We describe a single step in an iterative process, transforming $\tau$ into $\tau'$.

    Intuitively, we take the histogram of the values of $\tau$ on the input set $\{x\in X|g(x)=i,p(x)\in \{v,v'\}\}$, and assign $v'$ the lower values in the histogram and $v$ the upper ones.

    We define
    \begin{align}
        \beta =& \Pr_{(x,y)\in\cD}[p(x)=v | g(x)=i, p(x)\in \{v,v'\}]\\
        \beta_a=&  \beta \Pr[\tau(i,v)=a] + (1-\beta)\Pr[\tau(i,v')=a], \quad\forall a\in A
    \end{align}
    when the probability in the second definition is over the internal randomness of $\tau$. 
    For every $a\in A$, we define the function $u:A\rightarrow[0,1]$ indicating how much of $\beta_a$ is coming from $\tau(i,v)$. That is, for all $a\in A$ if $\beta\neq 0$ we have
    \[ u(a) = \beta\Pr[\tau(i,v)=a]/\beta_a. \]
    When $\beta_a=0, u(a)$ can take any value in $[0,1]$. Notice that by definition, $\Pr[\tau(i,v')=a] = \beta_a(1-u(a))/(1-\beta)$.
    
    We define $\tau'$ by creating an analog function $u':A\rightarrow[0,1]$, when $u'$ indicates if a certain outcome $a\in A$ is in the upper part of the histogram (and should be assigned to $\tau'(i,v)$) or lower part (and should be assigned to $\tau'(i,v')$). Fractional values $u'(a)$ imply that $a$ is in the middle of the histogram, i.e. assigned to both.
For every $a\in A$ let
\begin{align}
    &u'(a) = \begin{cases}
    1 \quad &\text{if } \sum_{a'\geq a}\beta_{a'}\leq\beta\\
    0 \quad &\text{if } \sum_{a'\leq a}\beta_{a'}\leq 1-\beta\\
    \frac{1}{\beta_a}\paren{\beta - \sum_{a'> a}\beta_{a'}} \quad &\text{otherwise}.
    \end{cases}
\end{align}

We are now ready to define $\tau'$ to equal $\tau$ on all except $(i,v),(i,v')$, in which we have:
\begin{align}
    &\forall a\in A \quad \Pr[\tau'(i,v)=a] = \frac{\beta_a u'(a)}{\beta} \\
    &\forall a\in A \quad \Pr[\tau'(i,v)=a] = \frac{\beta_a}{1-\beta}(1-u'(a)).
\end{align}
Notice that $\tau'$ is rank-preserving on inputs $(i,v),(i,v')$ by definition.

We next show that $\tau'$ satisfies all of the constraints in the same way was $\tau$. Let $f_j(i,a,y) = f(i,a)$ be any constraint that is not a function of $y$. Then we have
\begin{align*}
    \E_{(x,y)\sim\cD}[f(i,\tau(i,p(x))) ] = \sum_{a\in A}\Pr_{(x,y)\sim\cD}[\tau(i,p(x))=a]f(i,a).
\end{align*}
The transformations $\tau,\tau'$ only differ on inputs $(i,v),(i,v')$, so it is enough to analyze the difference on these inputs. 
For every $a\in A$,
\[ \Pr_{(x,y)\sim\cD}[\tau(i,p(x))=a|g(x)=i,p(x)\in\set{v,v'}] = \beta \Pr[\tau(i,v)=a] + (1-\beta)\Pr[\tau(i,v')=a] = \beta_a.  \]
For the new transformation,
\begin{align*}
    \Pr_{(x,y)\sim\cD}[\tau'(i,p(x))=a|g(x)=i,p(x)\in\set{v,v'}] &= \beta \Pr[\tau'(i,v)=a] + (1-\beta)\Pr[\tau'(i,v')=a] \\&= \beta \frac{\beta_a u'(a)}{\beta}+ (1-\beta)\frac{\beta_a}{1-\beta}(1-u'(a)) = \beta_a. 
\end{align*}
Therefore, we get that $\E_{(x,y)\sim\cD}[f(i,\tau(i,p(x))) ]=\E_{(x,y)\sim\cD}[f(i,\tau'(i,p(x))) ]$.

We are left with proving that this correction does not increase the loss. We define $q_v,q_{v'}$ as in the previous proof.
\begin{align}
          q_v =& \E_{(x,y)\sim\cD}[y|g(x)i,p(x)=v]\\
        q_{v'} =& \E_{(x,y)\sim\cD}[y|g(x)i,p(x)=v'].
\end{align}
The expected loss of $\tau$ on the relevant inputs:
\begin{align*}
   & \E_{(x,y)\in\cD}[f_0(i,\tau(i,x),y) | g(x)=i, p(x)\in \{v,v'\}]\\ = {} & \beta \sum_{a\in A}\Pr[\tau(i,v)=a]\paren{q_vf_0(i,a,1) + (1-q_v)f_0(i,a,0)} \\&+(1-\beta)\sum_{a\in A}\Pr[\tau(i,v')=a]\paren{q_{v'}f_0(i,a,1) + (1-q_{v'})f_0(i,a,0)}\\= {} &
    \sum_{a\in A}f_0(i,a,1)\beta_a \paren{u(a) q_v  + (1-u(a))q_{v'}} \\&+ \sum_{a\in A}f_0(i,a,0)\beta_a\paren{u(a)(1- q_v) + (1-u(a))(1-q_{v'})}.
\end{align*}
By definition, the loss of $\tau'$ is exactly the same only with $u'$ instead of $u$.

Comparing the two losses we get:
\begin{align}\label{eq:loss-diff}
    &\E_{(x,y)\in\cD}[f_0(i,\tau(i,x),y) | g(x)=i, p(x)\in \{v,v'\}] - \E_{(x,y)\in\cD}[f_0(i,\tau'(i,x),y) | g(x)=i, p(x)\in \{v,v'\}] \\&=
    \sum_{a\in A}\beta_a(f_0(i,a,1)- f_0(i,a,0))(u(a)-u'(a))(q_v-q_{v'}).
\end{align}
Denote $\gamma_a = (u(a)-u'(a))(q_v-q_{v'})$.
From our assumption, $q_v\geq q_{v'}$. From the definition of $u'(a)$, for every $a\in A$ we have 
 \[  \sum_{a'\geq a\in A}u'(a)\geq \sum_{a'\geq a\in A}u(a). \]
Since $\sum_{a\in A}u(a) = \sum_{a\in A}u'(a)$, we have that $\sum_a{\gamma_a}=0$, and that there exists $\tilde{a}$ such that $\gamma_a\leq 0$ for all $a>\tilde{a}$, and $\gamma_a\geq 0$ for all $a\geq\tilde{a}$.  Since the function $f_0$ is rank preserving, we have that for every $a>a'$,
\[ f_0(i,a,1)- f_0(i,a,0) \leq f_0(i,a',1)- f_0(i,a',0). \]
Therefore, 
\[ \sum_{a,\gamma_a \leq 0}\gamma_a(f_0(i,a,1)- f_0(i,a,0)) \leq \sum_{a,\gamma_a \geq 0}\gamma_a(f_0(i,a,1)- f_0(i,a,0)). \]
Which implies that $\sum_{a}\gamma_a(f_0(i,a,1)- f_0(i,a,0))\geq 0$ and the loss of $\tau'$ is at most the loss of $\tau$.

The final transformation $\tilde{\tau}$ is created by repeatedly applying the above step until $\tilde{\tau}$ is rank-preserving. The process ends after $\abs{V}^2$ such switching steps.

When preforming the algorithm in practive we do know know $u,\beta,q_v,q_{v'}$ exactly and need to approximate them at every step. This adds an error to the algorithm.
\end{proof}

\section{Algorithms for Multiaccuracy and Multicalibration}
\label{sec:algo-ma-mc}
The computational and sample complexity of learning
a multiaccuracy/multicalibrated predictor w.r.t.\ a function class $\cC$ using i.i.d.\ data points from the true distribution $\cD$ depends on the complexity and structure of the class $\cC$.
In \citep{hebert2018multicalibration}, the authors show that 
the task can be \emph{efficiently} reduced to \emph{weak agnostic learning} for $\cC$ \citep{MR2582918,DBLP:conf/innovations/Feldman10}.
This implies that the sample and computational complexity of learning a multicalibrated predictor cannot be much larger than weak agnostic learning. 
\citet{hu2022metric} concretely characterize the sample complexity of learning a multiaccurate/multicalibrated predictor in terms of the \emph{fat-shattering dimension} of $\cC$ \citep{MR1279411}, and they also study the sample complexity of multiaccuracy/multicalibration with additional realizability assumptions about $\cD$, which is a setting further explored by \citet{hu2022comparative} (results in our paper do not require any assumption on $\cD$).
\citet{gopalan2022loss} propose and implement algorithms for calibrated multiaccuracy and demonstrate their efficiency compared to achieving multicalibration.
Many of our results in this paper require group multiaccuracy/multicalibration, and 
such a predictor can be obtained by first learning a multiaccurate/multicalibrated predictor w.r.t.\ $\cC$ on each group and then combining. 
Some of our results in this paper require group level-set multiaccuracy. This can be equivalently viewed as multiaccuracy w.r.t.\ a larger class $\cC'$ of binary functions $c':X\to \{-1,1\}$ such that there exist $c\in \cC$ and $\tau:[t]\times A\to \{-1,1\}$ satisfying $c'(x) = \tau(g(x),c(x))$ for every $x\in X$. The complexity of $\cC'$ depends on the complexity of $\cC$ and the group partition $g$.
\section{Optimization Algorithms on the Simulated Distribution}
\label{sec:algo-simulated}
An omnipredictor $p$, as in \Cref{def:omni}, allows us to solve downstream tasks $T\in \cT$ on the true distribution $\cD$ by solving the task on the simulated distribution $\cD_p$. In this section, we show very efficient algorithms for solving the task on the simulated distribution for all the settings we consider in \Cref{sec:omni}.

Specifically, in \Cref{def:omni}, we define $\beta:= \opt_{\cD_p}(T,\cC_p,\varepsilon/3)\in \bR$. Suppose the objective of $T$ is $f_0:X\times A\times\{0,1\}\to \bR$ and the constraints of $T$ are $f_j:X\times A\times\{0,1\}\to \bR$ for every $j\in J$. The task of finding a solution in $\cC_p\cap \sol_{\cD_p}(T,\beta + \varepsilon/3,2\varepsilon/3)$ is to solve the following optimization problem approximately:
\begin{align}\label{eq:optimize}
\minimize_{c\in \cC_p} \quad & \E_{(x,y)\sim \cD_p}\E_{a\sim c(x)}f_0(x,a,y)\\
\text{s.t.} \quad & \E_{(x,y)\sim \cD_p}\E_{a\sim c(x)}f_j(x,a,y)\le 0 \quad \text{for every }j\in J.\nonumber
\end{align}
In \Cref{thm:squared-linear} and \Cref{thm:convex-linear}, the action set $A\subseteq \bR$ is an interval, and the objective $f_0$ and the constraints $f_j$ are convex group objective/constraints. That is, for every $j\in \{0\}\cup J$, there exists $f_j':[t]\times A\times \{0,1\}\to \bR$ such that $f_j(x,a,y) = f_j'(g(x),a,y)$ for every $(x,a,y)\in X\times A\times \{0,1\}$, and the function $f_j'(i,\cdot,y)$ is convex for every $i\in [t]$ and $y\in \{0,1\}$. Moreover, the class $\cC_p$ is the class $\cC_p(g)$ in \Cref{def:c-p}, i.e., $\cC_p$ consists of all functions $c:X\to A$ such that there exists $\tau:[t]\times [0,1]\to A$ satisfying $c(x) = \tau(g(x),p(x))$ for every $x\in X$. Thus, \eqref{eq:optimize} becomes the following equivalent problem:
\begin{align}\label{eq:deterministic-optimize}
    \minimize_{\tau:[t]\times [0,1]\to A} \quad & \E_{(x,y)\sim\cD_p}f_0'(g(x), \tau(g(x),p(x)),y) \\
    \textnormal{s.t.}\quad & \E_{(x,y)\sim\cD_p}f_j'(g(x), \tau(g(x),p(x)),y) \leq \varepsilon/3 \quad \text{for every }j\in J.\nonumber
\end{align}
Let $V:=\mathsf{range}(p)$ denote the range of $p$. Since the functions $c\in \cC$ in \Cref{thm:squared-linear} and \Cref{thm:convex-linear} output bounded values $c(x)\in A = [0,1]$, we can always make sure that $V$ is finite and has size $O(1/\varepsilon')$ when we require $p$ to be $(\cC,g,\varepsilon')$-multiaccurate and/or $(\cC,g,\varepsilon')$-multicalibrated because discretizing the values $p(x)$ to multiples of $\varepsilon'/2$ can only increase the group multiaccuracy/multicalibration error by at most $\varepsilon'/2$.
Let $\prob(i,v,b)$ denote $\Pr_{(x,y)\sim\cD_p}[g(x) = i,p(x) = v, y = b]$. The optimization problem \eqref{eq:deterministic-optimize} above is equivalent to
\begin{align}\label{eq:deterministic-optimize-1}
    \minimize_{\tau:[t]\times V\to A} \quad & \sum_{i\in [t]}\sum_{v\in V}\sum_{b\in \{0,1\}}\prob(i,v,b)f_0'(i,\tau(i,v),b) \\
    \textnormal{s.t.}\quad & \sum_{i\in [t]}\sum_{v\in V}\sum_{b\in \{0,1\}}\prob(i,v,b)f_j'(i,\tau(i,v),b) \leq \varepsilon/3 \quad \text{for every }j\in J.\nonumber
\end{align}
Suppose for now that we know the probabilities $\prob(i,v,b)$. The optimization problem \eqref{eq:deterministic-optimize-1} above is a convex program with size $O(t\,|V|\cdot|J|)$ and thus can be solved efficiently assuming that we can efficiently compute $f'$ and its sub-gradient.
When we do not know $\prob(i,v,b)$, we can estimate it to sufficient accuracy using i.i.d.\ data points from $\cD_p$, which can be generated using unlabeled data points from $\cD$.
By standard concentration results (e.g.\ \Cref{claim:est-2}), using $n = O(\varepsilon_1^{-2}(|V|t + \log(1/\delta)))$ data points we can compute an estimator $\est(i,v,b)$ for $\prob(i,v,b)$ such that with probability at least $1-\delta$, 
\[
\sum_{i\in [t]}\sum_{v\in V}\sum_{b\in \{0,1\}} |\est(i,v,b) - \prob(i,v,b)| \le \varepsilon_1/3.
\]
In \Cref{thm:general}, the action set $A$ is a finite set, and the objective $f_0$ and the constraints $f_j$ are group objective/constraints. The class $\cC_p$ is the class $\cC_p^{\mathsf{rand}}(g)$, i.e., $\cC_p$ consists of all functions $c:X\to A$ such that there exists $\tau:[t]\times [0,1]\to A$ satisfying $c(x) = \tau(g(x),p(x))$ for every $x\in X$. Thus, \eqref{eq:optimize} becomes the following equivalent problem:
\begin{align}\label{eq:randomized-optimize}
    \minimize_{\tau:[t]\times [0,1]\to \Delta_A} \quad & \E_{(x,y)\sim\cD_p}\E_{a\sim \tau(g(x),p(x))}f_0'(g(x), a,y) \\
    \textnormal{s.t.}\quad & \E_{(x,y)\sim\cD_p}\E_{a\sim\tau(g(x),p(x))}f_j'(g(x), a,y) \leq \varepsilon/3 \quad \text{for every }j\in J.\nonumber
\end{align}
Defining $V$ and $\prob(i,v,b)$ as before and using $\tau'(i,v,a)$ to denote the probability mass on $a\in A$ in $\tau(i,v)$, the optimization problem \eqref{eq:randomized-optimize} above is equivalent to the following:
\begin{align}\label{eq:randomized-optimize-1}
    \minimize_{\tau':[t]\times V\times A\to \bR} \quad & \sum_{i\in [t]}\sum_{v\in V}\sum_{b\in \{0,1\}}\sum_{a\in A}\prob(i,v,b)\tau'(i,v,a)f_0'(i,a,b) \\
    \textnormal{s.t.}\quad & \sum_{i\in [t]}\sum_{v\in V}\sum_{b\in \{0,1\}}\sum_{a\in A}\prob(i,v,b)\tau'(i,v,a)f_j'(i,a,b) \leq \varepsilon/3, && \forall j\in J,\nonumber\\
    & \sum_{a\in A}\tau'(i,v,a) = 1, && \forall (i,v)\in [t]\times V,\nonumber \\
    & \tau'(i,v,a) \ge 0, && \forall (i,v,a)\in [t]\times V\times A.\nonumber
\end{align}
This optimization problem \eqref{eq:randomized-optimize-1} is a linear program of size $O(t\,|V|\cdot|A|\cdot|J|)$ and thus can be solved efficiently.

\section{Counterexamples}
\label{sec:counterexamples}
\subsection{Group Multiaccuracy is Necessary}\label{sec:counter-ex}
We show that the group multiaccuracy and group calibration assumptions in \Cref{thm:squared-linear} cannot be replaced by standard (non-group-wise) multicalibration.
\begin{claim}
Let $A = [0,1]$ be an action set.
There exists a non-empty set $X$ over individuals, a group partition function $g:X\to [t]$, a distribution $\cD$ over $X\times\{0,1\}$, a task $T$, a class $\cC$ of functions $c:X\to A$, a predictor $p:X\to [0,1]$ with the following properties. The task $T$ has the $\ell_1$ objective $f_0(x,a,y) = |a - y|$ and linear constraints (as in \eqref{eq:linear-constraint}). The predictor $p$ belongs to $\mc(\cC,0)\cap \cali(0)$. However, $p$ is not a $(\{T\},\cC,\cC_p(g),\varepsilon)$-omnipredictor for sufficiently small $\varepsilon > 0$.
\end{claim}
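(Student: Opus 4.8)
The plan is to give a small explicit counterexample with two individuals and two groups, where a constant predictor is globally calibrated and multicalibrated but is ``blind'' to how the outcome is distributed within each group, and then impose a linear constraint that is sensitive to exactly this within-group information. Concretely, take $X = \{x_1, x_2\}$ with group partition $g(x_1) = 1$, $g(x_2) = 2$, and let $\cD$ place mass $1/2$ on $(x_1, 1)$ and mass $1/2$ on $(x_2, 0)$, so $x_1$ deterministically has outcome $1$ and $x_2$ deterministically has outcome $0$. Let $p(x) = 1/2$ for both $x$; its range is the singleton $\{1/2\}$. Since $\E_\cD[y] = 1/2$ we get $p \in \cali(0)$. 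Take $\cC$ to be the class of constant functions $c \equiv v$, $v \in [0,1]$ (equivalently, all $c\colon X \to [0,1]$ with $c(x_1) = c(x_2)$, as $|X| = 2$). Because the range of $p$ is a single point, $\mc(\cC, 0)$ amounts to requiring $\E_\cD[(y - 1/2)c(x)] = 0$ for every $c \in \cC$, and indeed $\E_\cD[(y-1/2)c(x)] = \tfrac14\big(c(x_1) - c(x_2)\big) = 0$ when $c(x_1) = c(x_2)$. So $p \in \mc(\cC, 0) \cap \cali(0)$.

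For the task $T$, use the $\ell_1$ objective $f_0(x,a,y) = |a - y|$ and the single linear constraint $\E_\cD\!\big[f_1(x, c(x), y)\big] \le 0$ with $f_1(x,a,y) = \big(\one(g(x) = 1) - \one(g(x) = 2)\big)\,a\,y$; this has the form \eqref{eq:linear-constraint} with $\tau_1 = \tau_2 \equiv 0$, $\tau_3(1) = 1$, $\tau_3(2) = -1$ (one may equivalently make it an equality by also imposing $\E[-f_1] \le 0$). Two computations drive everything. First, on $\cD$ the constraint value of a constant $c \equiv v$ is $\tfrac12 v$, so the only $c \in \cC$ satisfying the constraint is $c \equiv 0$, whose $\ell_1$-objective on $\cD$ equals $\tfrac12$; hence $\beta^* := \opt_\cD(T, \cC, 0) = \tfrac12 \in \bR$. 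Second, under $\cD_p$ every individual has $y \mid x \sim \ber(1/2)$, so every function $X \to [0,1]$ has $\ell_1$-objective exactly $\tfrac12$ on $\cD_p$, and for $c \in \cC_p(g)$, writing $\alpha_i = c(x_i)$, the constraint value on $\cD_p$ is $\tfrac14(\alpha_1 - \alpha_2)$. In particular $\beta := \opt_{\cD_p}(T, \cC_p(g), \varepsilon/3) = \tfrac12 \in \bR$, attained by $c \equiv 0$.

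Finally I would exhibit the bad action. The constant function $c' \equiv 1$ lies in $\cC_p(g)$; on $\cD_p$ it has $\ell_1$-objective $\tfrac12 = \beta \le \beta + \varepsilon/3$ and constraint value $\tfrac14(1 - 1) = 0 \le 2\varepsilon/3$, so $c' \in \cC_p(g) \cap \sol_{\cD_p}(T, \beta + \varepsilon/3, 2\varepsilon/3)$. But on $\cD$ the constraint value of $c'$ is $\tfrac12 > \varepsilon$ for every $\varepsilon < 1/2$, so $c' \notin \sol_\cD(T, \beta^* + \varepsilon, \varepsilon)$. Thus the inclusion required by \Cref{def:omni} fails, and $p$ is not a $(\{T\}, \cC, \cC_p(g), \varepsilon)$-omnipredictor for any $0 < \varepsilon < 1/2$. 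Note that this $p$ lies outside $\grpma_\cD(\cC,g,\delta)$ and $\grpcal_\cD(g,\delta)$ for $\delta < 1/2$, so the example is consistent with (and shows the necessity of the group-wise hypotheses in) \Cref{thm:squared-linear}.

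The point that needs care — more a design constraint than a genuine obstacle — is reconciling the requirement $p \in \mc(\cC, 0)$ with wanting the example to be non-vacuous: since $p$ is constant, global multicalibration forces every $c \in \cC$ to be uncorrelated with $y - p(x)$, which with deterministic outcomes essentially forces $\cC$ to be group-blind, so $\beta^*$ cannot be made small. The resolution is that the failure of omniprediction should come not from the objective but from the derived action violating the constraint on $\cD$; this is precisely why the constraint must involve $y$, so that $\cD$ and $\cD_p$ disagree on it within groups even though $p$ is globally calibrated. Beyond that, the only thing to verify carefully is the bookkeeping that $\beta^*, \beta \in \bR$ and that $c' \equiv 1$ respects the $\varepsilon/3$ and $2\varepsilon/3$ slacks on $\cD_p$ while violating the $\varepsilon$ slack on $\cD$.
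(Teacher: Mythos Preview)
Your proof is correct and genuinely different from the paper's construction. The paper uses $X = \{x_1,\ldots,x_4\}$ with $p^*(x_1)=p^*(x_2)=1/2$, $p^*(x_3)=0$, $p^*(x_4)=1$, a single non-constant function in $\cC$, and $y$-\emph{independent} group-budget constraints ($\E[c(x)\mid g(x)=1]=0.375$, $\E[c(x)\mid g(x)=2]=0.125$); it then shows the failure on the \emph{objective} side, namely that every $c'\in \cC_p(g)$ meeting the constraints has $\ell_1$-loss $\approx 0.5625 > \beta^* = 0.5$ on $\cD$. Your construction is much smaller (two points, each its own group, deterministic outcomes), takes $\cC$ to be the constant functions, and uses a $y$-\emph{dependent} linear constraint (an equal-opportunity-type $ay$ term); the failure is exhibited on the \emph{constraint} side by producing an explicit $c'\equiv 1$ in $\cC_p(g)\cap \sol_{\cD_p}(T,\beta+\varepsilon/3,2\varepsilon/3)$ that violates the constraint on $\cD$.

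The paper's route has the advantage that it shows even outcome-oblivious (budget/statistical-parity) constraints already break omniprediction under mere $\mc\cap\cali$, which is a slightly sharper separation. Your route is more economical and makes the mechanism very transparent: the constant $p$ cannot see that $y$ is concentrated differently in the two groups, so a constraint that reads $\E[ay\mid \text{group}]$ distinguishes $\cD$ from $\cD_p$. Both verifications of $p\in\mc(\cC,0)\cap\cali(0)$, of the linear-constraint form (your $\tau_3(1)=1,\tau_3(2)=-1\in[-1,1]$), and of $\beta^*,\beta\in\bR$ are correct.
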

\begin{proof}
We assume that $X = \{x_1,x_2,x_3,x_4\}$ and $(x,y)\sim \cD$ can be sampled by first  drawing $x$ from the uniform distribution over $X$, and then drawing $y\sim \ber(p^*(x))$ for
\begin{align*}
    p^*(x)=\begin{cases}
     0.5,  &\text{if } x=x_1,\\
     0.5, &\text{if } x=x_2,\\
     0, &\text{if } x=x_3,\\
     1,  &\text{if } x=x_4.\\
    \end{cases}
\end{align*}

The function class $\mathcal{C}$ consists of a single function $c$ defined by
\begin{align*}
    c(x) = \begin{cases}
    0.75, \quad &x= x_1,\\
    0.25, \quad &x=x_2,\\
    0, \quad &x\in \{x_3,x_4\}.
    \end{cases}
\end{align*}
The groups are defined by
\begin{align*}
    g(x) = \begin{cases}
   1,\quad &x\in \{x_1,x_3\},\\
    2, \quad &x\in \{x_2,x_4\}.\\    
    \end{cases}
\end{align*}
The constraints $f_j$ of the task $T$ are defined by 
\begin{align*}
    &f_1(x,a,y) = \one(i=1)0.375- \one(i=1)a\\
    &f_2(x,a,y) = -\one(i=1)0.375 + \one(i=1)a\\
    &f_3(x,a,y) = \one(i=2)0.125 - \one(i=2)a\\
    &f_4(x,a,y) = -\one(i=2)0.125 + \one(i=2)a\\
\end{align*}
That is, they require that $\E[c(x)|g(x)=1]=0.375, \E[c(x)|g(x)=2]=0.125$.
We can easily see that $c$ satisfies the constraint:
\begin{align*}
    &\E_{x}[c(x)|g(x)=1] = 0.75\cdot0.5 = 0.375,\\
    &\E_{x}[c(x)|g(x)=2] = 0.25\cdot0.5 = 0.125.
\end{align*}

We choose $p:\cX\rightarrow[0,1]$ to be the constant function satisfying $p(x) = 0.5$ for all $x\in\cX$. We show that $p\in \mc(\cC,0)\cap \cali(0)$. 
We start from calibration:
\[ \E_{(x,y)\sim\cD}[y] = 0.5 = \E_{(x,y)\sim\cD}[p(x)].\]
Now we show multicalibration with respect to $c\in \cC$:
\begin{align*}
    & \E_{(x,y)\sim\cD}[c(x)\cdot(y-p(x))]\\
    ={} & \E_{(x,y)\sim\cD}[c(x)\cdot(y-0.5)]\\
    ={} &0.25\paren{0.75 (0.5-0.5) + 0.25(0.5-0.5)+ 0\cdot(0-0.5)+0\cdot(1-0.5))}=0.
\end{align*}

The objective value of $c$ is:
\begin{align*}
    \beta^* &:= \opt_\cD(T,\cC,0)\\
    & = \E_{(x,y)\sim\cD}[f_0(i,c(x),y)]\\ & = 0.125\paren{\abs{1-0.75}+\abs{0-0.75}+\abs{1-0.25}+\abs{0-0.25}} + 0.25\paren{\abs{0,0}+ \abs{1,0}}\\ & =
    0.125\paren{2\cdot 0.25 + 2\cdot 0.75} + 0.25=0.25+0.25\\ &=0.5.
\end{align*}
Since $p$ is a constant function, any $c'\in \cC_p(g)$ must satisfy $c'(x_1) = c'(x_3)$ and $c'(x_2) = c'(x_4)$ because $g(x_1) = g(x_3)$ and $g(x_2) = g(x_4)$.
To satisfy the constraints up to a small error $\varepsilon$, $c'$ must be close to assigning $0.375$ to $x_1$ and $x_3$, and assigning $0.125$ to $x_2$ and $x_4$. 
We calculate the loss for this $c'$:
\begin{align*}
\E_{(x,y)\sim\cD}[f_0(i,c'(x),y)] = {} & 0.125\paren{\abs{1-0.375}+\abs{0-0.375}+\abs{1-0.125}+\abs{0-0.125}} \\&+ 0.25\paren{\abs{0-0.325}+ \abs{1-0.125}}\\
={}&0.125\paren{0.625+0.375+0.125+0.875} + 0.25\paren{0.375+0.875}\\= {} & 0.25+ 0.25\cdot1.25\\
 = {} & 0.5625\\
 > {} & \beta^*.
\end{align*}
This implies that for small enough $\varepsilon$, we have $\cC_p(g)\cap \sol_\cD(T,\beta^* + \varepsilon,\varepsilon) = \emptyset$, and thus $p$ cannot be a $(\{T\},\cC,\cC_p(g),\varepsilon)$-omnipredictor.
\end{proof}
\subsection{Group Level-Set Multiaccuracy is Necessary}\label{sec:level-set}
We show an example task with non-convex constraints and a non-special objective, and thus none of our \Cref{thm:squared-linear,thm:convex-linear,thm:variant} could be applied to the example. \Cref{thm:general} is applicable, but it requires group level-set multiaccuracy. Below we show that for this task group multicalibration is indeed not enough and the level-set variant is necessary to guarantee omniprediction.
\begin{claim}
Let $A = [0,1]$ be an action set.
There exists a non-empty set $X$ over individuals, a group partition function $g:X\to [t]$, a distribution $\cD$ over $X\times\{0,1\}$, a task $T$, a class $\cC$ of functions $c:X\to A$, a predictor $p:X\to [0,1]$ with the following properties. The task $T$ only has group constraints and objectives with $1$-bounded differences. The predictor $p$ belongs to $\grpmc_{\cD}(\cC,g,0)\cap \grpcal_\cD(g,0)$. However, $p$ is not a $(\{T\},\cC,\cC_p^{\mathsf{rand}}(g),\varepsilon)$-omnipredictor for sufficiently small $\varepsilon > 0$.
\end{claim}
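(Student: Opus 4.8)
The plan is to exhibit a \emph{single-group} instance ($t=1$, so $g$ is constant and $\grpmc,\grpcal,\grplma$ reduce to plain $\mc,\cali$ and level-set multiaccuracy) in which the predictor $p$ is the constant function $1/2$ --- hence has one level set --- and is \emph{not} the Bayes predictor, yet is exactly multicalibrated and calibrated with respect to a one-element class $\cC=\{c\}$. The mechanism is that on that single level set $c$ takes \emph{three} distinct action values, arranged so that the multicalibration quantity $\E_\cD[(y-p(x))c(x)]$, which is \emph{linear} in $c(x)$, vanishes even though $p\ne p^*$, while the level-set multiaccuracy quantity $\sum_a|\E_\cD[(y-p(x))\one(c(x)=a)]|$, which is \emph{nonlinear} in $c(x)$, stays bounded away from $0$. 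Two action values per level set would not suffice --- calibration plus multicalibration already force the corresponding level-set terms to vanish --- which is exactly why three values are needed; and because $\partial f_0$ will be non-affine and the constraints non-convex, none of \Cref{thm:squared-linear,thm:variant,thm:convex-linear} applies to the instance.

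Concretely I would take $X=\{x_1,x_2,x_3\}$ with the uniform marginal, $p\equiv 1/2$, and $\Pr_\cD[y=1\mid x_j]$ equal to $1/2-3\eta,\ 1/2+6\eta,\ 1/2-3\eta$ for $j=1,2,3$ and a small \emph{fixed} $\eta>0$; and $\cC=\{c\}$ with $c(x_1)=0,\ c(x_2)=1/2,\ c(x_3)=1$. A one-line computation gives $\E_\cD[y]=1/2=\E_\cD[p(x)]$ and $\E_\cD[(y-p(x))c(x)]=\tfrac13\big((-3\eta)\cdot0+6\eta\cdot\tfrac12+(-3\eta)\cdot1\big)=0$, so $p\in\mc(\cC,0)\cap\cali(0)$, whereas $\sum_a|\E_\cD[(y-p(x))\one(c(x)=a)]|=|-\eta|+|2\eta|+|-\eta|=4\eta$. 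For the task $T$ I would take the non-special group objective $f_0(x,a,y)=f_0'(a,y)$ with $f_0'(a,0)$ any bounded non-convex function vanishing on $\{0,1/2,1\}$ (e.g.\ $\sin^2(2\pi a)$) and $f_0'(a,1)=f_0'(a,0)+\partial f_0'(a)$, where $\partial f_0'(a)=1-4\min(a,1-a)\in[-1,1]$ (so $f_0$ has $1$-bounded difference, and $\partial f_0'(0)=\partial f_0'(1)=1$, $\partial f_0'(1/2)=-1$), together with the three non-convex group constraints $1/3-\one(a=v)\le 0$ for $v\in\{0,1/2,1\}$, each of which has zero discrete difference and is satisfied by $c$ with equality.

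To show $p$ fails to be an omnipredictor, note that since $p$ is constant and $t=1$, every $c'\in\cC_p^{\mathsf{rand}}(g)$ (\Cref{def:c-p}) assigns the \emph{same} distribution $\mu\in\Delta_{[0,1]}$ to all individuals; as the action is independent of $x$ and $\Pr_\cD[y=1]=\Pr_{\cD_p}[y=1]=1/2$, the objective and each constraint take the \emph{same} value for $c'$ under $\cD$ and under $\cD_p$. The three constraints force any feasible $\mu$ to put mass $\ge 1/3-O(\varepsilon)$ on each of $0,1/2,1$, hence to be $O(\varepsilon)$-close to the uniform distribution on $\{0,1/2,1\}$, whose objective value is $\E_{a\sim\mu}[\tfrac12 f_0'(a,1)+\tfrac12 f_0'(a,0)]=1/6+O(\varepsilon)$; thus $\beta:=\opt_{\cD_p}(T,\cC_p^{\mathsf{rand}}(g),\varepsilon/3)=1/6-O(\varepsilon)\in\bR$. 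On the other hand, $c$ satisfies all three constraints exactly, and by \Cref{claim:partial},
\[
\E_\cD f_0(x,c(x),y)-\E_{\cD_p}f_0(x,c(x),y)=\E_\cD[(y-p(x))\partial f_0'(c(x))]=-\eta\big(\partial f_0'(0)-2\partial f_0'(1/2)+\partial f_0'(1)\big)=-4\eta,
\]
so (using $\E_{\cD_p}f_0(x,c(x),y)=1/6$) we get $\beta^*:=\opt_\cD(T,\cC,0)=\E_\cD f_0(x,c(x),y)=1/6-4\eta\in\bR$. Taking a near-optimal $c'\in\cC_p^{\mathsf{rand}}(g)$ for $\cD_p$, it lies in $\cC_p^{\mathsf{rand}}(g)\cap\sol_{\cD_p}(T,\beta+\varepsilon/3,2\varepsilon/3)$, but its $\cD$-objective equals its $\cD_p$-objective, which is $\ge\beta=1/6-O(\varepsilon)>1/6-4\eta+\varepsilon=\beta^*+\varepsilon$ once $\varepsilon$ is small relative to the fixed $\eta$; hence $c'\notin\sol_\cD(T,\beta^*+\varepsilon,\varepsilon)$, so $p$ is not a $(\{T\},\cC,\cC_p^{\mathsf{rand}}(g),\varepsilon)$-omnipredictor.

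The main obstacle, as is typical for such lower bounds, is choosing all ingredients simultaneously: the three conditional probabilities must make the \emph{linear} multicalibration and calibration identities hold exactly while keeping $p\ne p^*$; the non-affine $\partial f_0'$ must be chosen so that the additivity gap $\eta(\partial f_0'(0)-2\partial f_0'(1/2)+\partial f_0'(1))$ is a positive constant without violating $1$-bounded difference; and the non-convex constraints must collapse $\cC_p^{\mathsf{rand}}(g)$ to essentially one option while remaining loose enough, given the $\varepsilon/3$ slacks in \Cref{def:omni}, that $\beta$ stays within $o(\eta)$ of $1/6$. Once these fit together, verifying the $\varepsilon/3$-inclusions in \Cref{def:omni} is routine and uses only the coincidence of the $\cD$- and $\cD_p$-values on $\cC_p^{\mathsf{rand}}(g)$; this also transparently shows that property (1) of \Cref{lm:approach} genuinely fails without level-set multiaccuracy.
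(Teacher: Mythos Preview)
Your construction is correct and follows essentially the same skeleton as the paper's proof: three individuals, trivial group partition, the constant predictor $p\equiv 1/2$, a one-element class $\cC=\{c\}$ with three distinct action values on which multicalibration and calibration vanish exactly while level-set multiaccuracy does not, and constraints that force every $c'\in\cC_p^{\mathsf{rand}}(g)$ to distribute actions (approximately) uniformly over those three values.

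The differences are only in the instantiation. The paper takes $p^*=(0.25,1,0.25)$, $c=(0.1,0.2,0.3)$, the cubic loss $|a-y|^3$, and six two-sided constraints; you parametrize by $\eta$, use $c=(0,1/2,1)$, a custom objective with $\partial f_0'(a)=1-4\min(a,1-a)$, and three one-sided constraints. Your choice has the advantage that the gap $\beta^*-\beta$ is computed cleanly via \Cref{claim:partial} and equals exactly $-4\eta$, i.e.\ minus the level-set multiaccuracy error, making the mechanism transparent; the paper's version has the advantage of using a natural loss. One small imprecision in your write-up: the assertion that the $\cD_p$-objective of the chosen $c'$ is ``$\ge\beta$'' is not literally correct, since $c'$ is only $(2\varepsilon/3)$-feasible rather than $(\varepsilon/3)$-feasible; the correct lower bound is $1/6-4\varepsilon/3$ (from $\mu(\{0\}),\mu(\{1\})\ge 1/3-2\varepsilon/3$ and $h(a)\ge -1/2$), which still gives the desired $1/6-O(\varepsilon)>\beta^*+\varepsilon$ for small $\varepsilon$.
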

\begin{proof}
Let $\cX = \{x_1,x_2,x_3\}$ and let $g:X\to [t]$ be the trivial group partition that assigns every individual $x\in X$ to the same group $g(x) = 1$. The distribution $\cD$ is defined by first choosing $x\in\cX$ uniformly at random, and then choosing $y\sim \ber(p^*(x))$ for
\begin{align*}
    p^*(x) = \begin{cases}
    0.25, \quad &x= x_1,\\
    1, \quad &x = x_2,\\
    0.25, \quad &x=x_3.
    \end{cases}
\end{align*}
The function class $\cC$ contains only a single function $\cC = \set{c}$ defined by:
\begin{align*}
    c(x) = \begin{cases}
    0.1, \quad &x = x_1,\\
    0.2, \quad &x = x_2,\\
    0.3, \quad &x = x_3.
    \end{cases}
\end{align*}
We choose the objective $f_0$ of $T$ to be the cubic loss: $f_0(x,a,y) = |a - y|^3$.
We choose the collection of constraints $f_j$ of $T$ to be
\begin{align*}
    &f_1(x,a,y)= \one(a=0.1) - \frac{1}{3}\\
    &f_2(x,a,y)= -\one(a=0.1) + \frac{1}{3}\\
    &f_3(x,a,y)= \one(a=0.2) - \frac{1}{3}\\
    &f_4(x,a,y)= -\one(a=0.2) + \frac{1}{3}\\
    &f_5(x,a,y)= \one(a=0.3) - \frac{1}{3}\\
    &f_6(x,a,y)= -\one(a=0.3) + \frac{1}{3}\\
\end{align*}
For an action function $c':X\to A$ to satisfy these constraints exactly, it must satisfy 
\[
\Pr_{(x,y)\sim\cD}[c'(x)=a]=1/3 \quad \text{for every }a\in\set{0.1,0.2,0.3}. 
\]
It is clear that the only function $c\in \cC$ satisfies the constraints. The objective value achieved by $c$ is
\begin{align*}
\beta^*:=\opt_\cD(T,\cC,0) ={} &     \E_{(x,y)\sim \cD}[f_0(x,c(x),y)]\\
 ={}& \sum_j\Pr[x= x_j]\paren{\E_{(x,y)\sim \cD}[y|x\in U_j]|1-c_j|^3 + (1 -\E_{(x,y)\sim \cD}[y|x\in U_j])|c_j|^3} \\
    ={}& \frac{1}{3}\paren{\frac{1}{4}(0.9)^3 + \frac{3}{4}(0.1)^3 +1(0.8)^3 + 0(0.2)^3 +\frac{1}{4}(0.7)^3 + \frac{3}{4}(0.3)^3}\\
    ={}&0.267.
\end{align*}

The predictor $p:\cX\rightarrow[0,1]$ defined by $p(x)=0.5$ for all $x\in \cX$. We show that $p\in \grpmc_{\cD}(\cC,g,0)\grpcal_\cD(g,0)$.
We show it, starting from calibration:
\[ \E_{(x,y)\sim\cD}[y] = 0.5 = \E_{(x,y)\sim\cD}[p(x)]. \]
For group multicalibration with respect to $c\in\cC$:
\begin{align*}
    \E_{(x,y)\sim\cD}\left[c(x)\left(y - p(x)\right)\right] &=\frac{1}{3}\paren{-\frac{1}{10}\cdot\frac{1}{4}+\frac{2}{10}\cdot\frac{1}{2}-\frac{3}{10}\cdot\frac{1}{4}} = 0
\end{align*}

Since both $p$ and $g$ are constant functions, any $c'\in \cC_p^{\mathsf{rand}}(g)$ has to give all $x\in X$ the same distribution $c(x)$ of actions. %
To satisfy the constraints up to a small error $\varepsilon$, $c'(x)$ must be close to the uniform distribution over $\{0.1, 0.2, 0,3\}$ for every $x$.
When $c'(x)$ is this uniform distribution for every $x$, we have
\begin{align*}
    \E_{(x,y)\sim \cD}\E_{a\sim c(x)}[f_0(x,a,y)] ={}&
    \sum_{b\in\set{0,1},a\in\set{0.1,0.2,0.3}} \Pr_{(x,y)\sim\cD}[y=b,c(x)=a]\abs{y-a}^3\\
    ={}& \frac{1}{2}\cdot\frac{1}{3}\paren{(0.9)^3 + (0.1)^3+(0.8)^3 + (0.2)^3+(0.7)^3 + (0.3)^3}\\
    ={}& 0.27\\
    > {} & \beta^*.
\end{align*}
Therefore, for small enough $\varepsilon > 0$, we have $\cC_p^{\mathsf{rand}}(g)\cap \sol(T,\beta^*+\varepsilon,\varepsilon) = \emptyset$, and thus $p$ cannot be a $(\{T\},\cC,\cC_p^{\mathsf{rand}}(g),\varepsilon)$-omnipredictor.
\end{proof}
\section{Helper Claims}
The following claim is a standard result (see e.g.\ \citep[Theorem 1]{canonne2020short}):
\begin{claim}
\label{claim:est-2}
Let $Z$ be a non-empty set partitioned into $Z\sps 1,\ldots, Z\sps m$.
For $\varepsilon,\delta\in (0,1/2)$ and an integer $n \ge W(\varepsilon^{-2}(m + \log(1/\delta)))$ for a sufficiently large absolute constant $W > 0$, let $z_1,\ldots,z_n\in Z$ be $n$ data points drawn i.i.d.\ from any distribution $\cD$ over $Z$. Then with probability at least $1-\delta$, the following inequality holds:
\[
\sum_{j=1}^m\left|\frac 1n\sum_{i=1}^n\one(z_i\in Z\sps j) - \Pr_{z\sim \cD}[z\in Z\sps j]\right| \le \varepsilon.
\]
\end{claim}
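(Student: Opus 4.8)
The plan is to recognize this as the standard guarantee for learning a discrete distribution in $\ell_1$ distance, and to prove it in two steps: a bound on the \emph{expected} error, followed by concentration. Write $p_j := \Pr_{z\sim\cD}[z\in Z\sps j]$ and $\hat p_j := \frac1n\sum_{i=1}^n \one(z_i\in Z\sps j)$, so that the quantity to control is $\Phi := \sum_{j=1}^m \abs{\hat p_j - p_j}$; here $(p_1,\dots,p_m)$ is a probability vector and $n\hat p_j$ is $\mathrm{Binomial}(n,p_j)$.

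First I would bound $\E[\Phi]$. For each $j$, Jensen's inequality gives $\E\abs{\hat p_j - p_j} \le \sqrt{\mathrm{Var}(\hat p_j)} = \sqrt{p_j(1-p_j)/n} \le \sqrt{p_j/n}$, and Cauchy--Schwarz gives $\sum_{j=1}^m \sqrt{p_j} \le \sqrt{m\sum_j p_j} = \sqrt m$. Combining, $\E[\Phi] \le \sqrt{m/n}$.

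Next I would apply McDiarmid's bounded-difference inequality to $\Phi$ viewed as a function of $z_1,\dots,z_n$: changing a single $z_i$ moves it out of one part and into another, so at most two of the $\hat p_j$ change, each by $1/n$, whence $\Phi$ has bounded differences $c_i = 2/n$. McDiarmid then yields $\Pr[\Phi \ge \E[\Phi] + t] \le \exp\paren{-2t^2 / \textstyle\sum_i c_i^2} = \exp(-t^2 n/2)$, and taking $t = \sqrt{2\log(1/\delta)/n}$ makes the right-hand side at most $\delta$. So with probability at least $1-\delta$ we have $\Phi \le \sqrt{m/n} + \sqrt{2\log(1/\delta)/n}$, and choosing $n \ge W\varepsilon^{-2}(m+\log(1/\delta))$ for a large enough absolute constant $W$ (e.g.\ $W = 8$) forces each of the two terms to be at most $\varepsilon/2$, giving $\Phi \le \varepsilon$ as required.

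I do not expect a genuine obstacle here. The only slightly non-routine ingredient is the expected-error bound $\E[\Phi] \le \sqrt{m/n}$, i.e.\ the familiar fact that $\Theta(m/\varepsilon^2)$ samples suffice to learn a distribution on $m$ symbols in $\ell_1$; the concentration step is a textbook application of bounded differences. An alternative one-line route is to invoke the Bretagnolle--Huber--Carol inequality for the multinomial, $\Pr[\Phi \ge \lambda] \le 2^m e^{-n\lambda^2/2}$, with $\lambda = \varepsilon$, which yields the claimed sample size directly; but I would write out the self-contained expectation-plus-McDiarmid argument, or simply cite \citep{canonne2020short} as the text already does.
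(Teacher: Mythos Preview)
Your argument is correct: the expectation bound via Jensen and Cauchy--Schwarz, the bounded-differences constant $c_i = 2/n$, and the McDiarmid tail all check out, and $W=8$ indeed suffices. Note, however, that the paper does not actually prove this claim at all --- it simply states it as standard and cites \citep[Theorem~1]{canonne2020short}. So your proposal is not a different route from the paper's proof but rather a self-contained proof where the paper offers none; your closing remark that one could ``simply cite \citep{canonne2020short} as the text already does'' is exactly what the paper does.
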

\end{document}